\crefname{algocf}{alg.}{algs.}
\Crefname{algocf}{Algorithm}{Algorithms}
\Crefname{equation}{Eq.}{Eqs.} 
\def\eqref#1{equation~\ref{#1}}
\def\ceil#1{\lceil #1 \rceil}
\def\1{\bm{1}}
\DeclareMathAlphabet{\mathsfit}{\encodingdefault}{\sfdefault}{m}{sl}
\SetMathAlphabet{\mathsfit}{bold}{\encodingdefault}{\sfdefault}{bx}{n}
\newcommand{\R}{\mathbb{R}}
\DeclareMathOperator*{\argmax}{arg\,max}
\DeclareMathOperator*{\argmin}{arg\,min}
\DeclareMathOperator{\sign}{sign}
\newcommand{\EE}{\mathbb{E}}
\newcommand{\ZZ}{\mathbb{Z}}
\newcommand{\calA}{\mathcal{A}}
\newcommand{\calP}{\mathcal{P}}
\newcommand{\calS}{\mathcal{S}}
\newcommand{\calU}{\mathcal{U}}
\newcommand{\PP}{\mathbb{P}}
\newcommand{\RR}{\mathbb{R}}
\newcommand*\dotp{\mathpalette\dotp@{.5}}
\newcommand*\dotp@[2]{\mathbin{\vcenter{\hbox{\scalebox{#2}{$\m@th#1\bullet$}}}}}
\theoremstyle{definition}
\newtheorem{definition}{Definition}[section]
\newtheorem*{definition*}{Definition} 
\newtheorem{example}[definition]{Example}  
\theoremstyle{plain}
\newtheorem{theorem}[definition]{Theorem}
\newtheorem{lemma}[definition]{Lemma}
\newtheorem{proposition}[definition]{Proposition}
\newtheorem{assumption}[definition]{Assumption}
\newtheorem*{theorem*}{Theorem}
\newtheorem*{lemma*}{Lemma}
\newtheorem*{proposition*}{Proposition}
\newtheorem{corollary*}{Corollary}   
\newtheorem{assumption*}{Assumption}
\newtheorem{condition*}{Condition}
\newtheorem{exercise*}{Exercise}
\newtheorem*{example*}{Example} 
\theoremstyle{remark}
\newtheorem*{remark}{Remark}
\newtcolorbox{bluebox}{
  colback=blue!3!white,
  colframe=blue!60!black,
}
\newcommand{\centerbox}[1]{\begin{center}\begin{minipage}{0.86\linewidth}
\begin{bluebox} 
\textit{{#1}}
\end{bluebox} \end{minipage}\end{center}}
\title{Robust Reinforcement Learning in Finance:  Modeling Market Impact with Elliptic Uncertainty Sets}
\author{%
  Shaocong Ma \\
  Department of Computer Science\\
  University of Maryland\\
  College Park, MD 20742, USA \\
  \texttt{scma0908@umd.edu} \\
  \And
  Heng Huang\thanks{This work was partially supported by NSF IIS 2347592, 2348169, DBI 2405416, CCF 2348306, CNS 2347617, RISE 2536663.} \\
  Department of Computer Science\\
  University of Maryland\\
  College Park, MD 20742, USA \\
  \texttt{heng@umd.edu} \\ 
}
\begin{document}

\doparttoc 
\faketableofcontents 

\maketitle

\begin{abstract}
In financial applications, reinforcement learning (RL) agents are commonly trained on historical data, where their actions do not influence prices. However, during deployment, these agents trade in live markets where their own transactions can shift asset prices, a phenomenon known as market impact. This mismatch between training and deployment environments can significantly degrade performance.  Traditional robust RL approaches address this model misspecification by optimizing the worst-case performance over a set of uncertainties, but typically rely on symmetric structures that fail to capture the directional nature of market impact. To address this issue, we develop a novel class of elliptic uncertainty sets. We establish both implicit and explicit closed-form solutions for the worst-case uncertainty under these sets, enabling efficient and tractable robust policy evaluation. Experiments on single-asset and multi-asset trading tasks demonstrate that our method achieves superior Sharpe ratio and remains robust under increasing trade volumes, offering a more faithful and scalable approach to RL in financial markets. 
\end{abstract}


\section{Introduction} 
Reinforcement learning (RL) has emerged as a promising decision-making framework for quantitative trading strategies \cite{hambly2023recent,pippas2024evolution}, including portfolio optimization \cite{jaimungal2022robust,yu2019model,shi2023robust,ndikum2024advancing,betancourt2021deep,aboussalah2020continuous,day2024portfolio,hu2019deep,filos2019reinforcement,nuipian2025dynamic,soleymani2020financial}, automatic trading \cite{jaimungal2022robust,cho2021trading,moody2001learning,li2019deep,park2024deep,ansari2022deep,wu2020adaptive}, market making \cite{gueant2013inventory,hasbrouck2007empirical,zhang2019deep}, and option hedging \cite{buehler2019deep,wu2023robust}. RL’s appeal in finance lies in its ability to learn adaptive strategies directly from data, without strong market assumptions \cite{hambly2023recent}. This makes it well-suited for capturing complex market dynamics and aligning with the sequential nature of financial decision-making.
 
One of the primary challenges in training a robust and consistently profitable RL agent lies in handling  the \emph{market impact} \cite{almgren2001optimal,obizhaeva2013optimal,toth2016square}; that is, the influence of the agent’s own trades on asset prices in the deployed environment.  For example, when a trader buys or sells a large volume of an asset, it can temporarily drive the price up or down, respectively (as illustrated in \Cref{fig:market-impact-nonsymmetry}). Typically, RL agents are trained on historical market data where market impact is absent. However, during deployment, the environment shifts from a passive historical setting to the real market, where the agent’s actions actively affect prices. This discrepancy between training and deployment environments  undermines the optimality and robustness of the learned policy, and leads us to the central question in this paper:

\centerbox{\textbf{Q}: Can we train RL agents on historical data while robustly accounting for market impact during deployment?} 

To address this central question, we adopt the framework of robust RL \cite{li2022first,li2023robust,liu2024efficient,wang2021online,gu2025robust,pinto2017robust,panaganti2022robust,fang2025provably,shen2025robust,ma2023decentralized,clavier2024near,kumar2023efficient,kumar2023policy,zhang2024soft,wu2022plan,zhou2023natural,wang2023policy,sun2024policy,ma2025rectified}, which is  designed to handle \emph{model misspecification}; that is, the mismatch between the training and  deployment environments.  The robust RL framework explicitly acknowledges that the real market environment may differ from the simulated training environment, and it aims to learn policies that are resilient under a range of perturbations.

However, existing robust RL approaches face a key limitations in financial applications; that is, traditional uncertainty sets are typically symmetric, which fail to capture the directional nature of market impact. Addressing this challenge  motivates our threefold \textbf{contributions}:   
\begin{enumerate}[(1)]  
    \item  We propose a novel class of uncertainty sets, \textit{elliptic uncertainty sets} (\Cref{def:elliptic-uncertainty-set}), which generalize traditional $\ell_p$-norm balls to the ellipse-like structure. These sets better capture the empirically observed directional nature of market impact as illustrated in \Cref{fig:market-impact-nonsymmetry}.   
    
    \item  On the theoretical side, we derive closed-form solutions for solving the worst-case transition kernel under the proposed uncertainty sets (\Cref{thm:implicit}). Furthermore, under certain conditions, we present the explicit solutions (\Cref{thm:explicit}).  This development significantly broadens the scope  of tractable robust RL problems beyond symmetric (ball-shaped) uncertainty sets, enabling more faithful representation of market impact. 

    \item We empirically evaluate our approach on real-world financial data using trade-level market impact simulations in \Cref{sec:experiments}. Experimental results demonstrate that our method consistently outperforms the standard single-asset intra-day trading strategy and existing RL baselines in terms of Sharpe ratio. Moreover, we validate the robustness of our method under increasing strategy volume, confirming its effectiveness in high-volume regimes.
\end{enumerate}

 

\begin{figure}[t]
    \centering
    \includegraphics[width=0.99\linewidth]{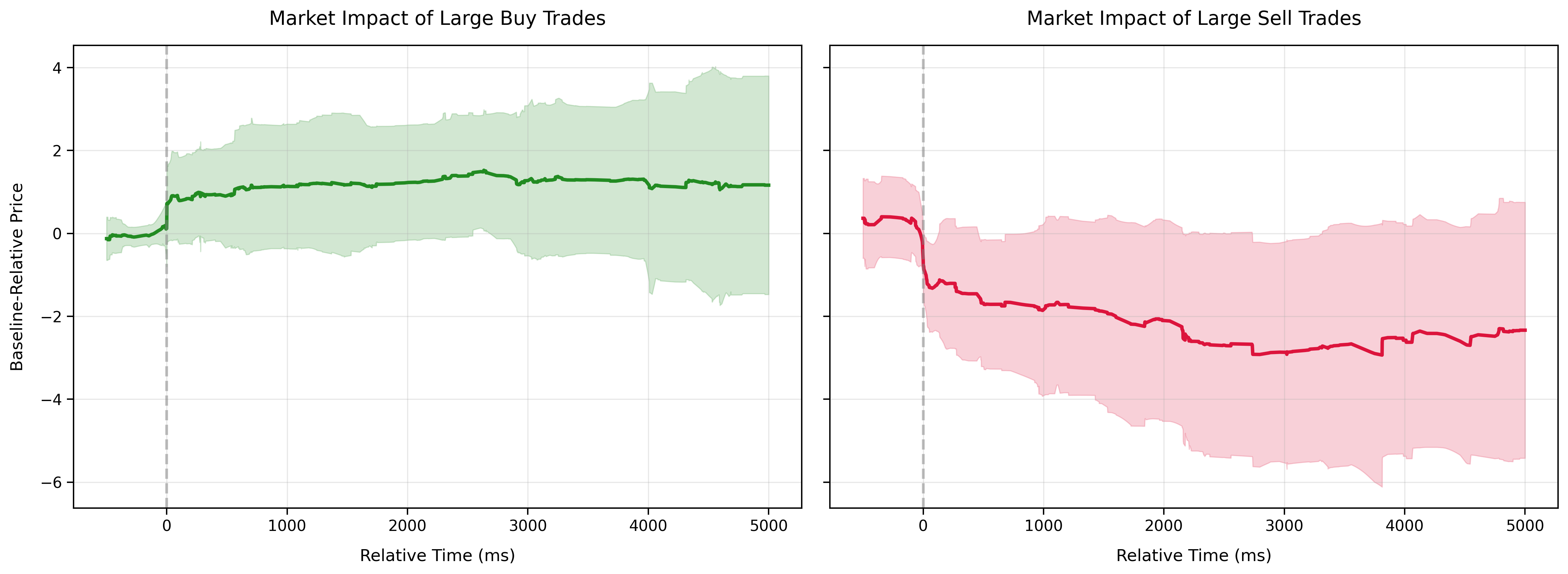}
    \caption{Market impact illustration using AMZN stock on June 21, 2012, based on 5-level LOBSTER data \citep{huang2011lobster}. The left panel shows the price response to executing a buy order of $100$ shares at the time $t = 0\ \text{ms}$, which consumes ask-side liquidity within $1000\ \text{ms}$ and induces an immediate upward shift in price. The right panel shows the analogous impact of a sell order. This plot indicates that the transition dynamics induced by trading are not symmetrically distributed around the nominal kernel.  }
    \label{fig:market-impact-nonsymmetry}
\end{figure}

\subsection{Related Work}
\paragraph{Existing Approaches to Handle Market Impact} The most common approach for handling market impact is to simulate the electronic market more accurately. By applying high-fidelity market simulators, often built upon limit order book (LOB) dynamics \cite{almgren2001optimal,obizhaeva2013optimal,bouchaud2009price,gatheral2012transient,berti2025trades}, trade-level data \cite{almgren2005direct,bugaenko2020empirical}, or large-scale agent-based simulators \cite{he2020multi,todd2016agent,gao2024simulating,li2025mars}, it captures more detailed market microstructure and reduces the gap between the simulated and the real trading environments. Prominent approaches to incorporating market impact into backtesting include agent-based simulation frameworks \citep{he2020multi,todd2016agent,gao2024simulating,li2025mars}, data-driven LOB reconstruction models \citep{gould2013limit,tsantekidis2017forecasting,berti2025trades}, and hybrid systems that integrate historical data replay with synthetic order flow generation \citep{ragel2024reinforcement,li2024developing}. However, access to high-quality market data is often limited, and simulating market environments with  agent-based systems remains prohibitively expensive. Therefore, a practical alternative is to train the agent directly on historical data without market impact while still encouraging it to account for potential worst-case scenarios.


\paragraph{Robust RL in Finance} Robust RL, with its intrinsic ability to handle model misspecification, provides a natural framework for incorporating market impact considerations during training.  Jaimungal et al.\ \cite{jaimungal2022robust} propose  a robust reinforcement learning framework based on rank-dependent utility to address uncertainty in financial decision-making, demonstrating the effectiveness of robust RL in portfolio allocation, benchmark strategy optimization, and statistical arbitrage. Shi  et al.\ \cite{shi2023robust} formulate portfolio optimization as a robust RL problem to enhance resilience. We  et al.\ \cite{wu2023robust} extend robust risk-aware RL to manage the risks associated with path-dependent financial derivatives, showcasing its effectiveness in complex hedging scenarios. However, existing work primarily focuses on fully symmetric uncertainty sets, which fail to capture the directional characteristics of financial markets. Addressing this limitation is the central focus of our paper.

\paragraph{Modeling the Uncertainty Set in Robust RL} The uncertainty set captures the discrepancy between training and deployment environments. However, robust RL becomes computationally intractable when the uncertainty set is highly irregular \cite{li2022first,wang2023policy,kumar2023efficient,kumar2023policy}. To mitigate this issue, it is common to impose structural assumptions that enable tractable solutions. We highlight several representative structures, with further details deferred to \Cref{appendix:uncertainty-sets}. The $R$-contamination model \cite{wang2021online} defines an uncertainty set as a sphere of radius $R$ centered around the nominal transition kernel, admitting analytical solutions for robust policy evaluation. The $\ell_p$-norm uncertainty sets are also widely studied due to their closed-form solutions \cite{kumar2023efficient,kumar2023policy}.  The integral probability metric (IPM) and double-sampling uncertainty sets have been shown to allow efficient computation \cite{zhou2023natural}. Other works  include uncertainty sets based on the Wasserstein metric and $f$-divergence, which have also received considerable attention \citep{abdullah2019wasserstein,kuang2022learning,duchi2021learning}.

\section{Preliminaries: Robust Reinforcement Learning}   
We focus on the discounted infinite-horizon Markov Decision Processes (MDPs) \cite{sutton1999reinforcement}, formally defined as a five-tuple $(\mathcal{S}, \mathcal{A}, \mathbb{P}, r, \gamma)$, where $\mathcal{S}$ and $\mathcal{A}$ denote the state and action spaces\footnote{For theoretical analysis, we restrict our attention to MDPs with finite state and action spaces. This assumption avoids the technical complications arising from continuous or hybrid spaces, which, although explored in some prior work \cite{jia2022policy,xiong2022deterministic,zhang2018networked,hu2023toward}, remain analytically open without imposing additional assumptions, especially for robust RL problems. Nevertheless, our empirical results extend to continuous settings, demonstrating the practical applicability of our approach beyond the theoretical scope.}, respectively. The transition kernel $\mathbb{P}(s' \mid s, a)$ specifies the probability of transitioning to state $s'$ from state $s$ after taking action $a$. The reward function $r: \mathcal{S} \rightarrow [0,1]$ assigns a bounded reward to each state, and the discount factor $\gamma \in (0,1)$ models the agent’s preference for immediate rewards over future ones.

Instead of assuming a fixed transition kernel $\mathbb{P}$, we account for the effects of \emph{model misspecification}. Let $\PP_0$ denote the nominal transition kernel, and define the $(s,a)$-\textit{uncertainty set} $\calU_{s,a}\subset  \mathbb{R}^{|\mathcal{S}|}$ at the point $(s,a)\in \calS \times \calA$ as 
\begin{align*}
    \text{(}(s,a)\text{-Uncertainty Set)} \quad \calU_{s,a}&:=\{ u_{s,a} \in \RR^{|\calS|}\mid \ u_{s,a} \in C_{s,a},\ u_{s,a}^\top  \mathbf{1}_{|\calS|}=0 \} , 
\end{align*} 
where  $ C_{s,a} \subset \RR^{|\calS|}$ is a convex set, $\mathbf{1}_{|\calS|}$ is an all-one vector with the dimension $|\calS|$ (for convenience, we will usually omit the subscript $|\calS|$).  The convex set is commonly chosen as a ball-shaped set (e.g. $C_{s,a}= B_p(\beta_{s,a}):= \{ u_{s,a} \in \RR^{|\calS|} \mid \|u_{s,a}\|_p \leq \beta_{s,a}\}$ for the $\ell_p$-norm uncertainty set),  where $\beta_{s,a} \geq 0$ is a radius parameter that quantifies the allowable deviation.  The zero-sum constraint $u_{s,a}^\top  \mathbf{1}_{|\calS|}=0$ ensures the perturbed transition $\PP(\cdot\mid s,a) + u_{s,a}$ remains a valid probability distribution. The \textit{uncertainty set} $\calU$ and the \textit{robust transition model} are then defined as 
\begin{align}\label{eq:uncertainty-model}
   \text{(Uncertainty Set)} &\qquad \calU := \bigtimes_{(s,a)\in\calS\times \calA} \calU_{s,a}, \\ 
   \text{(Robust Transition Model)} &\qquad  \calP_\calU :=  \left\{ \PP_u := \PP_0 + u \Big| \ u \in  \calU \right\},
\end{align} 
respectively. Throughout this paper, we assume the parameter of uncertainty set is chosen appropriately such that all elements in the robust transition model is well-defined \citep{kumar2023efficient,kumar2023policy}. Given these notations in place, we define the value function of the policy $\pi$ with the uncertainty $u$ as the value function with the transition probability $\PP_u \in \calP$:
    \begin{align*}
        V_u^\pi(s) &:= \EE  \big[\sum_{t=0}^\infty \gamma^t r(s_t,a_t) \mid s_0=s, \PP_u, \pi \big],
    \end{align*}
    The robust value function is the worst-case value function over all uncertainties; that is 
$V^\pi(s):= \min_{u\in \calU} V_u^\pi(s).$ Similarly, we can also define the robust Q-function and the robust advantage function as $ Q^\pi(s,a) :=\min_u \EE  \big[\sum_{t=0}^\infty \gamma^t r(s_t,a_t) \mid s_0=s, a_0=a, \PP_u, \pi \big]$ and $A^\pi(s,a) := Q^\pi(s,a) - V^\pi(s)$, respectively.  

The goal of robust RL is to learn a parameterized policy $\pi_\theta$ that maximizes the worst-case value function $V^{\pi_\theta}(s_0)$, where $s_0$ denotes the initial state. A standard approach applies the robust policy gradient formula \citep{li2022first}:
$$\nabla_{\theta } V^\pi(s) = \EE_{s\sim d^{\pi_\theta}, a\sim \pi_\theta}\left[  Q^{\pi_\theta}(s,a) \nabla_\theta \log \pi_\theta(a|s) \right],$$
where $d^{\pi_\theta}$ is the stationary distribution induced by $\pi_\theta$, and $Q^{\pi_\theta}$ denotes the robust Q-function. This formulation reduces robust RL to a gradient-based optimization problem, shifting the main challenge of robust RL to accurately evaluate the robust value function, which is our focus in \Cref{sec:robust-td-learning}.

\section{Modeling Market Impact with Elliptic Uncertainty Sets}




 
\subsection{Limitations of Symmetric Uncertainty Sets}\label{sec:limitations} 

Robust MDPs offer an ideal framework to handle model misspecification by allowing the transition kernel to deviate within a prescribed uncertainty set. However, most existing formulations adopt symmetric structures, typically the ball defined by a specific norm, that treat all directions of perturbation equally. While mathematically convenient, these symmetric structures often fail to reflect the directional nature of real-world uncertainties.

Symmetry in this context typically refers to invariance under the signed permutation group (see \Cref{appendix:spg} for details). A canonical example is the $\ell_p$-norm ball:
\[
B_p(\beta) := \{ u \in \RR^d \mid \|u\|_p \leq \beta \},
\]
which satisfies the property that for any $u \in B_p(\beta)$, all signed permutations of $u$ are also contained in the set. It enforces an implicit assumption of isotropic uncertainty, equally plausible in all directions, which often includes unrealistic perturbations. We demonstrate it in the following example: 

\begin{figure}[t]
    \centering
    \includegraphics[width=1\linewidth]{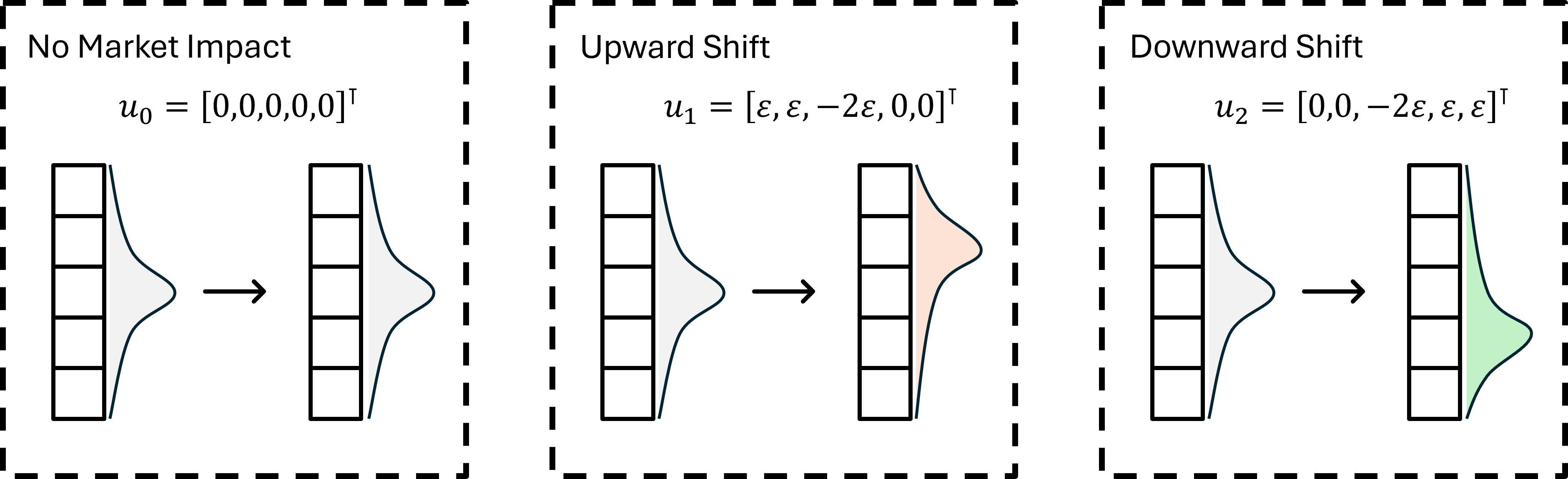}
    \caption{Illustration of the transition kernel in a simplified robust RL setting using an $\ell_\infty$-norm uncertainty set. Only one of the upward (buy) or downward (sell) shift is plausible in a given scenario; however, the ball-shaped uncertainty set must include both shifts due to its symmetric structure, which motivates us to propose an uncertainty set with non-symmetric structures.}
    \label{fig:simple-example}
\end{figure}
\begin{example}[Symmetric Sets Fail to Capture Directional Uncertainty] \label{example:failure}
In financial markets, large buy or sell orders induce directional shifts in asset prices due to liquidity consumption (see \Cref{fig:market-impact-nonsymmetry}). In \Cref{fig:simple-example}, we consider the following classical $\ell_\infty$-norm $(s,a)$-uncertainty set:
$$
\calU_{s,a} := \left\{ u \in \RR^d \mid \|u\|_\infty \leq 2\varepsilon,\ u^\top \mathbf{1} = 0 \right\}.
$$
This set includes, for example, the perturbation vectors
$$
u_1 = [\varepsilon, \varepsilon, -2\varepsilon, 0, 0]^\top, \quad \text{and} \quad u_2 = [0, 0, -2\varepsilon, \varepsilon, \varepsilon]^\top.
$$
Both $u_1$ and $u_2$ satisfy the norm and mean constraints, and since they are signed permutations of each other, they must either both belong to $\calU_{s,a}$ or be excluded together. However, this symmetry fails to reflect market realities: under a buy action, $u_1$ represents a plausible upward shift due to liquidity-driven market impact, while $u_2$, corresponding to a downward shift, is implausible. Thus, the symmetric structure forces inclusion of perturbations that contradict the directional market impact, potentially leading to overly conservative or unrealistic robust policies.
\end{example}


This observation underscores the importance of developing a robust RL framework that can capture the directional nature of environment shifts observed in financial markets. To this end, we introduce a novel class of \emph{elliptic uncertainty sets}, which generalize traditional norm-bounded sets by allowing non-symmetric perturbations, while retaining closed-form tractability under certain conditions.

\paragraph{On the Conservativeness of Robust Policies} Although our proposed elliptic uncertainty sets better capture the directional nature of market impact, this alone does not immediately clarify why they improve robustness. To illustrate the underlying intuition, we present a simple example:
{%
  \setlength{\leftmargini}{1.5em}
  \begin{itemize}  
    \item The robust value function $V^\pi:= \min_{u\in \mathcal{U}} V_u^\pi$ models the worst-case discounted future return. If the uncertainty set $\mathcal{U}$ is larger, it is more conservative, as it yields smaller value. 
    \item In the ideal case, the uncertainty set consists of a single element $u_0$ that exactly characterizes the MDP induced by the true market impact. Training an RL agent on the robust MDP with $\mathcal{U} = {u_0}$ is then equivalent to training directly on the real LOB data.
    \item In the less ideal case where $\mathcal{U}$ includes additional elements, \textit{e.g.} $\mathcal{U}=\{u_0,u_1\}$. The robust value function may still achieve its minimum at $u_0$ ($u_0=\argmin_{u}  \min_{u\in \mathcal{U}} V_u^\pi$).  In this scenario, the robust formulation reduces to the ideal case. Otherwise, if the minimum is attained at some $u \neq u_0$, the robust value becomes strictly smaller than $V_{u_0}^\pi$, making the policy more conservative.
\end{itemize} }
Unfortunately, neither our approach nor standard robust RL methods can theoretically rule out this latter ``overly conservative'' scenario.  Our guiding intuition, however, is that smaller uncertainty sets are less likely to contain such overly conservative elements. By trimming down the traditional uncertainty sets, our method reduces the risk of unnecessary conservatism, though it does not eliminate it entirely.

\subsection{The Elliptic Uncertainty Sets}\label{sec:de}
The \emph{elliptic uncertainty sets} generalize the classical $\ell_p$-norm uncertainty set by incorporating directional non-symmetry. Formally, we have the following definition:
\begin{definition}[Elliptic $(s,a)$-Uncertainty Set]\label{def:elliptic-uncertainty-set}
For each state-action pair $(s,a) \in \calS \times \calA$, the  \textit{\textbf{elliptic $(s,a)$-uncertainty set}} is defined as 
\begin{align} \label{eq:ellipse}
\calU_{s,a} := \left\{ u \in \RR^{|\calS|} \ \middle| \ \sum_{n=1}^N \| u - u_n^{s,a} \| \leq \beta_{s,a},\ u^\top \mathbf{1}_{|\calS|} = 0 \right\},  
\end{align} where $\{ u^{s,a}_n\}_{n=1}^N \in \RR^{|\calS|}$ are called the \emph{foci} of the ellipse, $\beta_{s,a} \geq 0$ is the \textit{uncertainty size}, and  $\| \cdot \|:\RR^d \to \RR$ is an arbitrary norm. Particularly, when $\| \cdot \|$ is taken as the  $\ell_p$-norm ($p\in[1,+\infty]$), $\calU_{s,a}$ is called the \textit{\textbf{$\ell_p$-ellipse uncertainty set}}.
\end{definition}
Here the constraint $u^\top \mathbf{1}_{|\calS|} = 0$ ensures the perturbed transition still defines a valid probability distribution. For convenience, we will omit the superscript in $u_n^{s,a}$ and the subscript in $\beta_{s,a}$ when the context is clear. Importantly, we note that it is unavoidable that the element in $\mathcal{U}_{s,a}$ may not be a valid probability distribution when some entries of $u + \mathbb{P}_0(\cdot|s,a)$ are negative. To avoid this scenario, we include the following regular condition, which is also presented in the $\ell_p$-norm uncertainty set literature \citep{kumar2023efficient,kumar2023policy}:
\begin{assumption}\label{ass:111}
    For the given MDP $(\calS,\calA, \PP_0, r, \gamma)$, for any set of foci $\{ u_{n}^{s,a}\}_{n=1}^N$, there exists a constant $\overline{\beta}>0$ such that for all $\beta_{s,a} < \overline{\beta}$, \Cref{eq:ellipse} induces valid probability transitions; that is, all entries of $u+  \PP_0(\cdot|s,a)$ are non-negative. 
\end{assumption}

\paragraph{Connection to the Classical Ellipse} Our definition draws directly from the geometric characterization of an ellipse: the set of points for which the sum of distances to multiple foci is bounded by a constant $\beta_{s,a}$. We show that it recovers classical structures as special cases with the following two concrete examples:
\begin{enumerate}[(1)]
    \item When $N=1$ and $u_1=0$, the set \Cref{eq:ellipse} reduces to the $\ell_p$-norm ball $$B_p(\beta):= \{ u \mid \|u\|_p \leq \beta\},$$ which aligns with the standard uncertainty set used in robust RL (e.g., \citep{kumar2023efficient,kumar2023policy,li2022first}).
    \item When $N=2$ and $p=2$,  \Cref{eq:ellipse} becomes 
    $\|u-u_1\|_2 + \|u-u_2\|_2 \leq \beta.$
    Defining the midpoint $\bar{u} := \frac{u_1 + u_2}{2}$, there exists a matrix $A$ such that this constraint is equivalent to a classical quadratic form of an ellipse: 
$$(u - \bar{u})^\top A (u - \bar{u}) \leq 1,$$ 
where the detailed derivation is given in \Cref{lemma:quadratic-form}.
\end{enumerate} 
These examples demonstrate that our formulation significantly generalizes classical ellipses by allowing arbitrary norms and accommodating multiple foci, thereby enabling more flexible modeling of non-symmetric perturbations. However, such generality often comes at the cost of increased complexity. To address this concern, in the next section, we show that, under certain parameter choices, our proposed uncertainty set remains as trackable as traditional $\ell_p$-norm uncertainty sets.

\paragraph{Limitations \& Potential Directions} As our approach is mainly adopted from the $\ell_p$-norm uncertainty set, it does not guarantee that the resulting distributions are absolutely continuous with respect to the nominal transition distribution even with \Cref{ass:111}. One potential solution is incorporating $f$-divergence or distributionally robust RL techniques \citep{smirnova2019distributionally,blanchet2023double,he2025sample,li2023rectangularity,liu2024distributionally,lu2024distributionally}.

\subsection{Solving the Worst-Case Uncertainty}\label{sec:robust-td-learning}  
Solving the worst-case uncertainty $u^*:=\argmin V_u^\pi(s_0)$ plays a crucial role in efficient robust policy evaluation. The $\ell_p$-norm uncertainty set \citep{kumar2023efficient,kumar2023policy} and the $R$-contamination model \citep{wang2021online} are popular as the solution $u^*$ is closed-form.  When the uncertainty set is complicated, many existing robust RL methods require an external optimization loop to determine the worst-case transition probability, which can be impractical in real-world scenarios. 

To address this issue, we derive an implicit solution (\Cref{thm:implicit}) for the worst-case uncertainty that avoids additional interaction with the environment. Under certain conditions, we further provide an explicit closed-form solution (\Cref{thm:explicit}), enabling direct computation without the need for external solvers. 

We start with recapping some backgrounds in robust TD learning. Let $\mathcal{V}$ denote all functions mapping from the state space $\calS$ to the Euclidean space $\RR$. Given that $\calU$ is an arbitrary uncertainty set, the robust Bellman operator associated with the policy $\pi$, $\mathscr{T}_\calU^\pi: \mathcal{V} \to \mathcal{V}$, is defined as    
\begin{align*}
    \mathscr{T}_\calU^\pi v (s) = \sum_{a\in \calA} \pi(a|s) \left[ r(s,a) + \gamma  \min_{u\in \calU_{s,a}} u^\top v + \gamma \sum_{s'\in \calS} \PP_0(s'|s,a) v(s') \right]. 
\end{align*}
As shown by \cite{wiesemann2013robust,li2022first}, when $\gamma\in (0,1)$, the robust Bellman operator is a contraction operator, which admits the unique fixed point as the robust value function $V^\pi \in \mathcal{V}$. When $\calU$ is given by the $\ell_p$-ellipse uncertainty set (\Cref{eq:ellipse}), then
\begin{align*}
    \mathscr{T}_\calU^\pi v (s) &= \sum_{a\in \calA} \pi(a|s) \left[ r(s,a) + \gamma  \min_{\substack{\sum_n \|u-u_n\|\leq \beta\\ u^\top \mathbf{1} =0}} u^\top v + \gamma \sum_{s'\in \calS} \PP_0(s'|s,a) v(s') \right] .
\end{align*}
It turns out that if we can solve the optimization problem 
\begin{align}\label{eq:target-optimization-problem}
    \min_{\substack{\sum_n \|u-u_n\|\leq \beta_{s,a}\\ u^\top \mathbf{1} =0}} u^\top v,
\end{align}
the robust Bellman operator is just the standard Bellman operator (over the nominal transition probability) with a solved shift. As the result, we can simply apply the standard TD-learning with adding this correction term to solve the desired robust value function. In the following theorem, we present a general recipe of solving this optimization problem.  
 
\begin{theorem}[Implicit]\label{thm:implicit}
    Let $d:= |\calS|$ be the cardinal of state space. Suppose that $\{u_n\}_{n=1}^N \subset \RR^d$ for each $(s,a)\in \calS \times \calA$, the uncertainty size $\beta\geq 0$, and $ v\in\RR^d$. Then there exists $\lambda^*$ and $\mu^*$ such that the optimization problem defined by \Cref{eq:target-optimization-problem} is solved by 
$$u^* =\argmin_u\bigl[(v+ \mu^* \mathbf{1})^\top u + \lambda^* \sum_{n=1}^N \| u - u_n  \|_p\bigr].$$
\end{theorem}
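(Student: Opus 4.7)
The plan is to derive this implicit characterization via Lagrangian duality. The primal problem \Cref{eq:target-optimization-problem} is a convex program with a linear objective $u^\top v$, a convex inequality constraint $\sum_{n=1}^N \|u-u_n\| \leq \beta$ (convex as a sum of norms), and an affine equality constraint $u^\top \mathbf{1} = 0$. Since both the objective and the constraints split nicely, Lagrangian duality should give exactly the statement claimed, with $\lambda^*$ the multiplier for the norm inequality and $\mu^*$ the multiplier for the hyperplane constraint.

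Concretely, I would form the Lagrangian
\[
L(u,\lambda,\mu) \;=\; (v+\mu\mathbf{1})^\top u \;+\; \lambda \sum_{n=1}^N \|u-u_n\| \;-\; \lambda\beta,
\]
with $\lambda \geq 0$ and $\mu \in \RR$, and then verify Slater's condition so that strong duality applies. Slater's condition holds whenever there exists some $u$ with $u^\top \mathbf{1} = 0$ and $\sum_n \|u-u_n\| < \beta$; under \Cref{ass:111} this is available for non-degenerate choices of $\beta$ (the degenerate case in which the feasible set collapses to a single point can be handled by a direct argument). By strong duality for convex programs with affine equality constraints and a Slater point, there exist optimal dual multipliers $(\lambda^*, \mu^*)$ with $\lambda^* \geq 0$, and the primal optimum $u^*$ must be a minimizer of $u \mapsto L(u, \lambda^*, \mu^*)$. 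Dropping the constant $-\lambda^*\beta$ term gives exactly
\[
u^* \in \argmin_u \Bigl[(v+\mu^*\mathbf{1})^\top u \;+\; \lambda^*\sum_{n=1}^N \|u-u_n\|\Bigr],
\]
which is the desired form.

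The main obstacle is verifying that the dual problem actually attains its optimum and that the inner unconstrained minimization in the Lagrangian is well-posed. The inner problem $\min_u L(u, \lambda, \mu)$ is bounded below when $\lambda > 0$ because $\sum_n \|u-u_n\|$ grows at least linearly as $\|u\| \to \infty$ (by the triangle inequality, dominated by $N\|u\|$ minus a constant), which overwhelms the linear term $(v+\mu\mathbf{1})^\top u$ whenever $\lambda$ is large enough. The regime $\lambda = 0$ corresponds to an inactive norm constraint and forces $v + \mu^*\mathbf{1} = 0$ for dual feasibility, which reduces the problem to projection onto the hyperplane $\{u^\top \mathbf{1}=0\}$ and must be handled separately. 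Existence of an optimal $(\lambda^*, \mu^*)$ then follows from coercivity of the negative dual function together with Slater's condition; once existence is secured, the KKT stationarity condition on $u^*$ yields the claim.
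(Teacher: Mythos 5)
Your proposal is correct and follows essentially the same route as the paper: its proof likewise forms the Lagrangian with a multiplier $\lambda \ge 0$ for the sum-of-norms constraint and $\mu \in \RR$ for the zero-sum constraint, invokes Slater's condition to obtain strong duality and dual attainment, and concludes that the primal optimum minimizes $u \mapsto (v+\mu^*\mathbf{1})^\top u + \lambda^*\sum_{n}\|u-u_n\|_p$. The only difference is that the paper's appendix version additionally derives explicit expressions for $\lambda^*$ and $\mu^*$ via Fenchel conjugation of the sum of shifted norms, which goes beyond the pure existence claim you were asked to prove.
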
 
\begin{remark}
    We say a solution of the optimization problem is ``implicit'' if it can be represented as an equation in the form $g(u,v,\beta, \{u_n\})=0$. In this theorem, as the right-hand side 
    $$G(u,v,\beta,  \{u_n\}):= (v+ \mu^* \mathbf{1})^\top u + \lambda^* \sum_{n=1}^N \| u - u_n  \|_p$$ 
    is proper, convex, and coercive; we can surely re-write it as the sub-gradient form by letting $$g(u,v,\beta, \{u_n\})\in  \partial_u G(u,v,\beta, \{u_n\}).$$ Then we obtain the implicit representation $ g(u,v,\beta, \{u_n\})=0$. Moreover, we derive the formula of $\lambda^*$ and $\mu^*$ beyond the existence; the full result is presented in \Cref{thm:implicit-appendix} with more details.  
\end{remark} 

The implicit solution has already shown significant advances compared to some of existing robust RL methods which typically require to solve the worst-case transition probability using additional state-action sample generated from the agent-environment iteration.   However, it is still (slightly) impractical to solve additional convex optimization problems in every iteration. Fortunately, under certain conditions, the solution can be ``explicit''; that is, we can write the optimal solution in the form of $u^*=f(v,\beta, \{u_n\})$. 
\begin{theorem}[Explicit]\label{thm:explicit}
   Let $d:= |\calS|$ be the cardinal of state space. Suppose that $\{u_n\}_{n=1}^N \subset \RR^d$ for each $(s,a)\in \calS \times \calA$, the uncertainty size $\beta\geq 0$, and $ v\in\RR^d$.  The optimization problem defined by \Cref{eq:target-optimization-problem} is explicitly solved in the following cases: 
    \begin{enumerate}[(a)]
        \item Let $N=1$ and $\frac{1}{p} + \frac{1}{q} = 1$. Then the minimizer of \Cref{eq:target-optimization-problem} is given by 
        $$ u^* = u_1 + \beta   \frac{ \sign(v + \mu^*\mathbf{1} )  \odot | v + \mu^*\mathbf{1}|^{q-1} }{  \| v + \mu^*\mathbf{1} \|_q^{q-1}  }.$$ 
        Here $\sign$ and $|\cdot|$ are coordinate-wise sign function and absolute value, respectively; $\odot$ is the coordinate-wise product; and $\mu^*:=\argmin_{\mu \in \RR} \| v + \mu \mathbf{1}\|_q$.

        \item Let $p=1$ and $N=2$. Suppose that $\beta > \|u_2 - u_1\|_1$. Then the minimizer of \Cref{eq:target-optimization-problem} is given by 
        $$u^*=   \frac{u_1+u_2}{2}- \frac{\beta - \|u_1 - u_2\|_1}{2} \left[ \sign( v + \mu^* \mathbf{1})\odot \mathbb{I}_{\{ | v + \mu^* \mathbf{1} |= 2 \lambda^* \}}\right].$$ 
        Here $\sign$, $|\cdot|$, and $\mathbb{I}$ are coordinate-wise sign function, absolute value, and indicator function, respectively; $\odot$ is the coordinate-wise product; 
        \[\mu^*:= - \frac{\max_j v_j + \min_j v_j}{2}, \quad\text{and}\quad \lambda^*:= - \frac{\max_j v_j - \min_j v_j}{4}. \]
        
        \item Let $p=2$ and $N=2$. Suppose that $\beta > \|u_2 - u_1\|_2$. Then the minimizer of \Cref{eq:target-optimization-problem} is given by 
        $$u^*=  \frac{u_1+u_2}{2}- \frac{\sqrt{\beta^2 - \|u_2 - u_1\|_2^2}}{2}  \frac{1}{\underline{\lambda}^*}\Omega^{-1} (v + \mu^* \mathbf{1}).$$
        Here $\Omega:= I - \frac{1}{\beta^2} (u_2 - u_1)(u_2 - u_1)^\top$;
        \[\underline{\lambda}^* = \sqrt{ (v+\mu^* \mathbf{1})^\top \Omega^{-1}  (v+\mu^* \mathbf{1}) }, \quad\text{and}\quad {\mu}^* = -\frac{v^\top \Omega^{-1}\mathbf{1}}{\mathbf{1}^\top \Omega^{-1}\mathbf{1}}.\] 
    \end{enumerate} 
\end{theorem}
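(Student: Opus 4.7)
The plan is to proceed case by case, using Theorem \ref{thm:implicit} as the starting point: it reduces each constrained problem to the unconstrained minimization of the Lagrangian $L(u) := (v + \mu^* \mathbf{1})^\top u + \lambda^* \sum_{n=1}^N \|u - u_n\|_p$, and my job in each case is to compute the explicit minimizer of $L$ and then pin down $\mu^*$ and $\lambda^*$ from the primal constraints $u^\top \mathbf{1} = 0$ and (when active) $\sum_n \|u - u_n\|_p = \beta$.

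For case (a) ($N = 1$), I would substitute $w = u - u_1$ so that $L$ becomes $(v + \mu^*\mathbf{1})^\top w + \lambda^* \|w\|_p + \text{const}$. Since $\|\cdot\|_p$ is smooth on $\RR^d\setminus\{0\}$ for $p\in(1,\infty)$, stationarity gives $\sign(w_i)|w_i|^{p-1}/\|w\|_p^{p-1} \propto v_i + \mu^*$, which together with $\|w\|_p = \beta$ is exactly the equality case of Hölder's inequality and yields $w \propto \sign(v + \mu^*\mathbf{1})\odot |v+\mu^*\mathbf{1}|^{q-1}$ with normalizer $\|v+\mu^*\mathbf{1}\|_q^{q-1}$. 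The linear constraint $u^\top \mathbf{1} = 0$ then reduces, after substitution, to minimizing $\|v + \mu\mathbf{1}\|_q$ over $\mu\in\RR$, giving the stated $\mu^*$. The endpoints $p=1$ and $p=\infty$ are handled by a limiting/subdifferential argument.

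For case (c) ($p = 2$, $N = 2$), I would invoke Lemma \ref{lemma:quadratic-form} to rewrite the elliptic constraint as $(u - \bar{u})^\top \Omega (u - \bar{u}) \le b^2$ with $\bar{u} = (u_1 + u_2)/2$, $b^2 = (\beta^2 - \|u_2 - u_1\|_2^2)/4$, and $\Omega = I - (u_2 - u_1)(u_2 - u_1)^\top / \beta^2$. Under the hypothesis $\beta > \|u_2 - u_1\|_2$ the matrix $\Omega$ is positive definite, hence invertible. Setting $w = u - \bar{u}$, the Lagrangian stationarity condition $v + \mu\mathbf{1} + 2\lambda \Omega w = 0$ yields $w^* = -(2\lambda)^{-1}\Omega^{-1}(v + \mu\mathbf{1})$; activating the quadratic constraint $w^\top \Omega w = b^2$ then identifies $\underline{\lambda}^* = 2\lambda b = \sqrt{(v+\mu^*\mathbf{1})^\top \Omega^{-1}(v+\mu^*\mathbf{1})}$, and the linear constraint (using $\bar{u}^\top \mathbf{1} = 0$, which is the standard assumption since each focus is itself a valid perturbation vector) reduces to $\mathbf{1}^\top \Omega^{-1}(v + \mu^*\mathbf{1}) = 0$, giving the stated $\mu^*$.

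The hard part will be case (b) ($p = 1$, $N = 2$), because the Lagrangian is piecewise linear and separable across coordinates with flat regions. Per coordinate $j$, $g_j(u_j) = (v_j + \mu^*)u_j + \lambda^*(|u_j - u_{1,j}| + |u_j - u_{2,j}|)$ has slopes $v_j + \mu^* \pm 2\lambda^*$ outside the interval $[\min(u_{1,j},u_{2,j}), \max(u_{1,j},u_{2,j})]$ and slope $v_j + \mu^*$ inside, so boundedness forces $|v_j + \mu^*| \le 2\lambda^*$ for every $j$. The choice $\mu^* = -(\max_j v_j + \min_j v_j)/2$ symmetrizes the spectrum of $v + \mu^*\mathbf{1}$, and $|\lambda^*| = (\max_j v_j - \min_j v_j)/4$ makes the saturation $|v_j + \mu^*| = 2|\lambda^*|$ occur precisely at the indices attaining the extreme entries of $v$. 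At non-extremal coordinates the Lagrangian has strictly increasing slopes away from the central interval and is constant inside it (since $|u_j - u_{1,j}| + |u_j - u_{2,j}| = |u_{1,j} - u_{2,j}|$ there), so we may legitimately pick the midpoint $(u_{1,j}+u_{2,j})/2$. The subtle part is using the flat regions at the extremal coordinates to absorb the slack $\beta - \|u_1 - u_2\|_1$ while simultaneously satisfying $u^\top \mathbf{1} = 0$; the direction of this absorption must align with $\sign(v_j + \mu^*)$, which is exactly what the indicator factor $\mathbb{I}_{\{|v+\mu^*\mathbf{1}| = 2\lambda^*\}}$ in the stated formula encodes. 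Verifying that this allocation is feasible (and optimal) requires a careful KKT/complementary-slackness argument, which is the main technical hurdle.
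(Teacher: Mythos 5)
Parts (a) and (c) of your plan follow essentially the paper's route and are sound. In (a) you re-derive the shifted $\ell_p$-ball solution by stationarity and the equality case of H\"older, where the paper simply translates by $u_1$ and invokes the known $\ell_p$-norm formula; both arguments quietly use that the focus satisfies $\mathbf{1}^\top u_1=0$ so that the shift preserves the zero-sum constraint. In (c) you use the same reduction via \Cref{lemma:quadratic-form} to a quadratic constraint, but you pin down $\mu^*$ by imposing primal feasibility $\mathbf{1}^\top u^*=0$ (together with $\mathbf{1}^\top\bar u=0$) on the stationary point, whereas the paper maximizes the dual over $\mu$ through \Cref{lemma:mu-supremum} and only afterwards simplifies using $\mathbf{1}^\top\bar u=0$; your shortcut is legitimate (Slater's condition holds because $\beta>\|u_2-u_1\|_2$ makes $\bar u$ an interior feasible point lying on the hyperplane) and it recovers the stated $\underline{\lambda}^*$ and $\mu^*$ somewhat more directly than the paper does.

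Part (b), however, is only an outline, and the step you defer is precisely the crux. You correctly identify the per-coordinate piecewise-linear structure, the boundedness requirement $|v_j+\mu|\le 2\lambda$, the choice of $\mu^*$ and $\lambda^*$, and the qualitative shape of the minimizer (midpoints at inactive coordinates, outward drift at the coordinates attaining $\max_j v_j$ and $\min_j v_j$) --- the same strategy the paper uses, which computes the scalar minimizations in closed form (\Cref{lemma:proximal-3}), maximizes the dual over $\lambda$ and then $\mu$, and reads the primal off the KKT conditions. What your proposal never does is determine the drift magnitude from the constraints and verify feasibility, activity of the ellipse constraint, preservation of the zero-sum constraint, and attainment of the dual value. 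This is not routine bookkeeping: once a coordinate leaves the interval between the two foci, the two-point distance $|u_j-u_{1j}|+|u_j-u_{2j}|$ grows at rate $2$ in the drift, so the slack $\beta-\|u_1-u_2\|_1$ absorbed at the two active coordinates, the focal widths $|u_{1j}-u_{2j}|$ at those coordinates, and the cancellation between the opposite-signed drifts at the max/min coordinates must all be reconciled before the coefficient $(\beta-\|u_1-u_2\|_1)/2$ in the stated formula can be confirmed; a sanity check of the degenerate case $u_1=u_2$, where the constraint is a plain $\ell_1$ ball of radius $\beta/2$, shows exactly this accounting is where the argument can go wrong. Until that complementary-slackness verification is actually carried out, case (b) is unproven in your write-up.
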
 
\begin{remark}
    This result presents a clean form of the worst-case uncertainty $u^*$ under certain conditions. Unlike the implicit case where it takes an additional root-finding algorithm to solve an approximated $u^*$, in the explicit case, if the current value function $v \in \mathcal{V}$ is given and the parameters of the uncertainty set ($u_i$ and $\beta$) have been determined, the uncertainty $u^*$ can be explicitly solved. The full proof is presented in \Cref{thm:explicit-appendix}.
\end{remark} 


 
\subsection{Robust TD Learning Algorithm} 
Given \Cref{thm:implicit} and \Cref{thm:explicit} in place, we immediately obtain the robust TD learning algorithm for robust policy evaluation. Given the current value function $v$, we can calculate $u^*$ using the implicit and the explicit formula; assume the current state-action pair is given as $(s,a,s')$, then the updated value function is given by  
\begin{align}\label{eq:td-learning-update}
    v' (s) = v (s) + \eta \left(  r(s,a)  + \gamma \sum_{s' }\left(\PP_0(s'|s,a)+u^*(s') \right) \left[v(s')   + \gamma  v(s') \right]- v(s)\right).
\end{align} 
We can further simplify this update rule by using an unbiased estimator of that:
\begin{align} 
    v'(s) = v(s) + \eta \gamma v^\top u^*+ \eta \left(  r(s,a) + \gamma v(s') - v(s) \right),
\end{align} 
 where $s' \sim \PP_0(\cdot|s,a)$. The formulation leads us to \Cref{alg:generate-1}.

\begin{algorithm}[H]  
\SetKwInput{KwInput}{Input}
\SetKwInput{KwOutput}{Output}
\KwInput{The target policy $\pi$, the foci $\{p_i\}_{i=1}^N \subset \RR^d$, and $\{ \beta_{s,a}\}$ the uncertainty size}
\BlankLine
Sample the initial state $s_0$ from the initial distribution\; 
Initialize the value function $v_0$\; 
\For{$t=0,1,2,\dots,T-1$}{
    Sample the action $a_t \sim \pi(\cdot|s_t)$; Transition from $s_t$ to $s_{t+1}\sim \PP_0(\cdot|s_t, a_t)$\; 
    Calculate $u^*$ using $\{p_i\}_{i=1}^N$, $\{ \beta_{s,a}\}$, and $v_t$\; 
    Robust TD-learning: $v'(s) = v(s) + \eta \gamma v^\top u^*+ \eta \left(  r(s,a) + \gamma v(s') - v(s) \right)$\; 
} 
\BlankLine
\KwOutput{The final value function $v_T$}
\caption{Robust Policy Evaluation}\label{alg:generate-1}
\end{algorithm} 
\begin{remark}
   The convergence of this algorithm, as well as its corresponding Actor-Critic-style policy gradient algorithm, follows directly by applying the standard proof routine from the robust RL literature (e.g. \citep{zhou2023natural}). For completeness, we include the convergence result and full proof in the supplementary material.
\end{remark}
 
\section{Experiments}\label{sec:experiments}
%

To validate our theoretical findings and demonstrate the practical effectiveness of robust 
RL framework in the environment with the market impact, we conduct experiments on two different tasks that are closely tied to market impact:  (1) minute-level single-asset strategy, and (2) large-volume portfolio rebalancing. In minute-level trading, even small trade sizes can noticeably move prices, leading to slippage.
Similarly, large-scale portfolio rebalancing, often performed by large financial institutions, can significantly affect asset prices due to the large order volumes involved.

 
\subsection{Performance Comparison on Single-Asset Intra-Day Trading}
We start with the single-asset minute-level trading. The non-RL baseline is chosen as the momentum strategy \cite{zarattini2024beat}, which is designed based on the empirical observation where assets that have performed well in the recent past are more likely to continue performing well in the near future. 

\paragraph{Training and Evaluation of RL Agents}
We implement a Gym-like RL environment \citep{brockman2016openai,towers2024gymnasium} constructed on historical data, with full environment details provided in \Cref{appendix:rl-environment}.  
All RL agents are trained on one year of earlier historical data (from May 9th, 2021 to May 9th, 2022 as the nominal environment) without accounting for market impact. Their performances are then evaluated over the period from June 9 to December 9, 2022, with the market impact included.
To simulate market impact, we reconstruct LOB dynamics using a short period of real trading orders and determine the execution price via the volume-weighted average price (VWAP).
A simple example illustrating this estimation process is shown in \Cref{tab:amzn_execution_example}, \Cref{appendix-experiment}.

\begin{table}[t]
\centering
\caption{Performance comparison of different RL agents on selected assets under the simulated market impact from June 9 to December 9, 2022. Robust RL with $\ell_p$-ellipse uncertainty set consistently achieves the highest Sharpe ratio. }
\label{tab:performance}
\resizebox{\textwidth}{!}{%
\begin{tabular}{cccccc}
\toprule
\textbf{Asset} & \textbf{Method} & \textbf{Final Value (\$)} &
\textbf{Annualized Return (\%)} & \textbf{Sharpe Ratio} & \textbf{Max Drawdown (\%)} \\ \midrule 
\multirow{4}{*}{META} %
  & Momentum & $\,96\,334$ & $-7.3\%$ & $-0.95$ & $-4.2\%$ \\
  & Non-Robust RL & $\,120\,347$ & $44.8\%$ & $1.74$ & $-11.3\%$ \\
  & Robust RL ($\ell_p$-Ball) & $\,97\,103$ & $-5.7\%$ & $-0.28$ & $-13.4\%$ \\
  & Robust RL ($\ell_p$-Ellipse) & $\,138\,011$ & $90.8\%$ & \cellcolor{orange!30} $2.48$ & $-9.9\%$ \\ 
\midrule
\multirow{4}{*}{MSFT} %
  & Momentum & $\,105\,163$ & $10.6\%$ & $1.10$ & $-5.3\%$ \\
  & Non-Robust RL & $\,87\,440$ & $-23.5\%$ & $-1.75$ & $-16.9\%$ \\
  & Robust RL ($\ell_p$-Ball) & $\,92\,159$ & $-15.1\%$ & $-0.82$ & $-11.2\%$ \\
  & Robust RL ($\ell_p$-Ellipse) & $\,111\,485$ & $24.4\%$ & \cellcolor{orange!30} $1.20$ & $-10.1\%$ \\
\midrule
\multirow{4}{*}{SPY} %
  & Momentum & $\,107\,333$ & $15.2\%$ & $1.69$ & $-3.2\%$ \\
  & Non-Robust RL & $\,91\,947$ & $-15.5\%$ & $-1.64$ & $-11.9\%$ \\
  & Robust RL ($\ell_p$-Ball) & $\,100\,560$ & $1.1\%$ & $0.17$ & $-6.4\%$ \\
  & Robust RL ($\ell_p$-Ellipse) & $\,109\,272$ & $19.4\%$ & \cellcolor{orange!30} $1.60$ & $-5.8\%$ \\
\bottomrule
\end{tabular}}
\end{table}

\begin{figure}[t]
    \centering
    \includegraphics[width=0.86\linewidth]{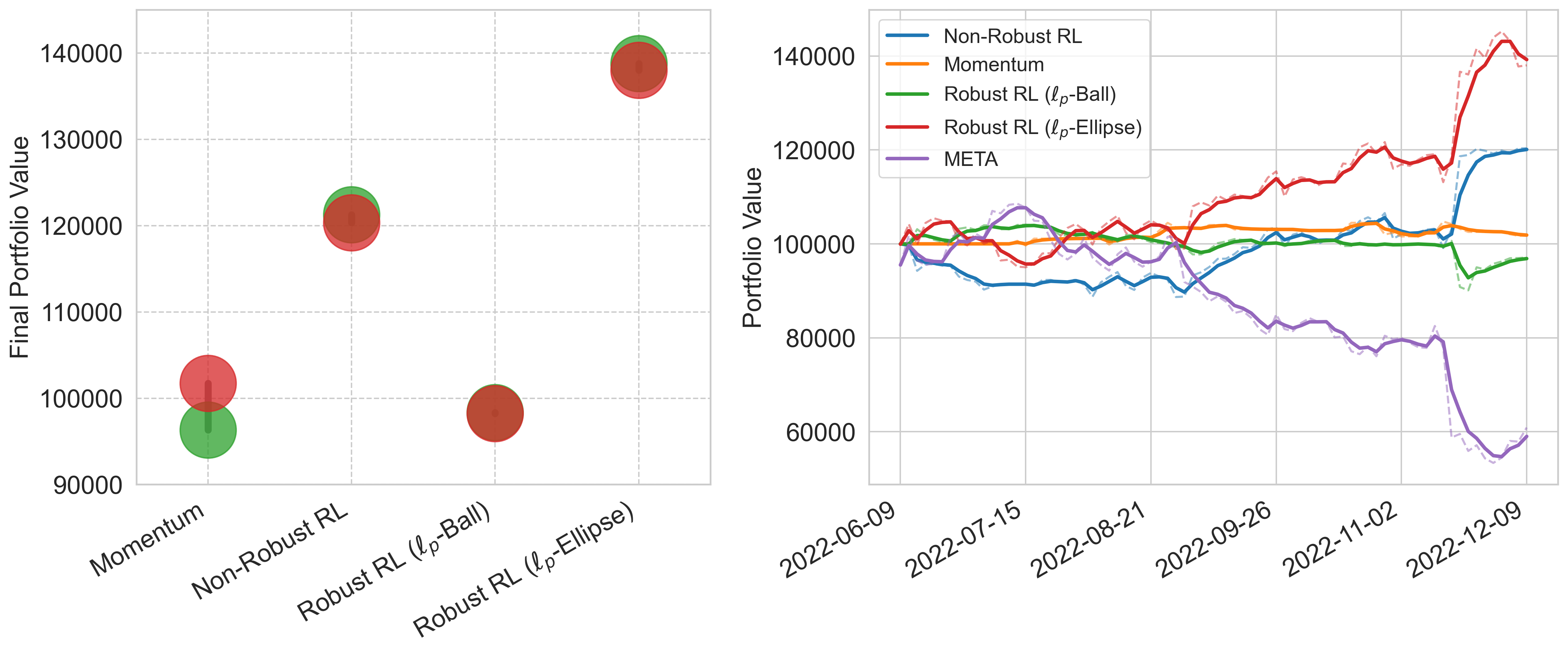} 
    \caption{ Performance comparison of trading strategies on the META stock from June 9 to December 9, 2022, under simulated market impact. 
The left panel compares the final portfolio values with (in red) and without market impact (in green), illustrating the robustness of each method to execution-related slippage. The right panel shows the cumulative returns over the evaluation period, tracking the performance of the four strategies in \Cref{tab:performance}, alongside the baseline performance of the META stock. }
    \label{fig:robustness-and-return-curves}
\end{figure}



\paragraph{Results}
As shown in \Cref{tab:performance}, our proposed method consistently outperforms the momentum strategy, the non-robust RL, and the symmetric robust RL baselines (based on $\ell_p$-norm balls) in terms of the risk-adjusted return (Sharpe Ratio). These experiments validate the following key understandings:
\textit{(i)} While robust RL with symmetric uncertainty sets significantly mitigates the effects of market impact (as illustrated in the left panel of \Cref{fig:robustness-and-return-curves}), it often produces overly conservative strategies that compromise profitability by taking implausible perturbations into consideration;
\textit{(ii)} the non-robust RL usually suffers greater risk exposure, resulting in the highest Max Drawdown among all methods;
\textit{(iii)} in contrast, the proposed $\ell_p$-ellipse uncertainty set effectively captures the directional non-symmetry of market impact, allowing the agent to achieve a more favorable trade-off between robustness and return.

\subsection{Robustness to the Market Impact Scaling in the Trading Volume} 
In this subsection, we show that a policy trained in a low-volume environment continues to mitigate market impact when transferred to portfolios with significantly larger volumes. We consider a multi-asset portfolio allocation task, modeling the realistic setting where large volumes are traded over short periods to maintain a low-variance portfolio. The same Gym-like RL environment and evaluation period from the previous experiment are used. We evaluate the robustness to the market impact using the relative portfolio gap, the normalized absolute difference in final portfolio value with and without market impact:
$$\text{Relative Port. Gap}=\frac{|\text{Port. Value with MI} - \text{Port. Value without MI}|}{\text{Initial Cash}},$$
where MI represents the market impact. Additional experimental details are provided in \Cref{appendix-experiment}.

\begin{figure}[H]
    \centering
    \includegraphics[width=0.99\linewidth]{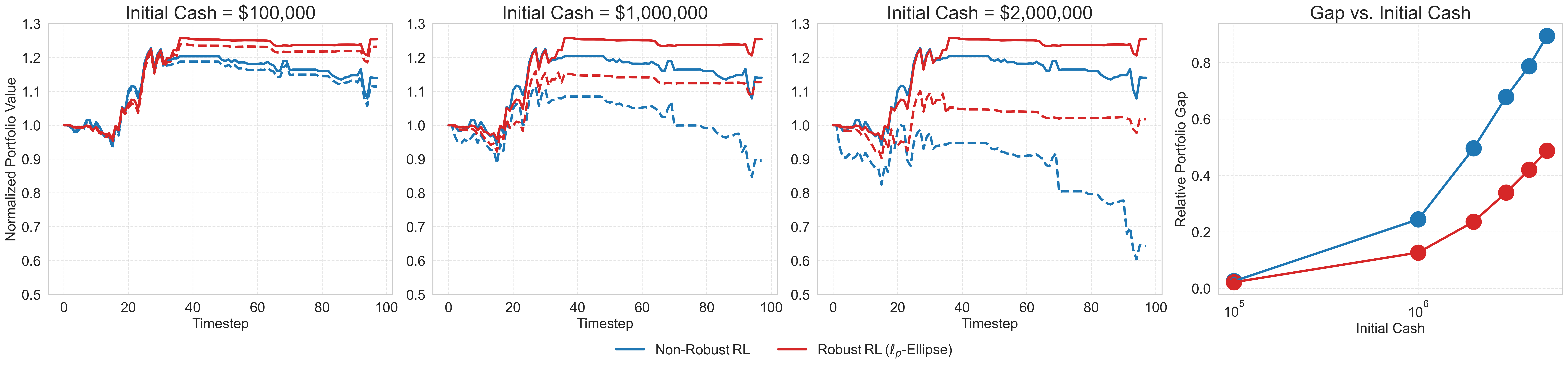} 
    \caption{Robustness of RL agents to market impact under increasing trading volumes. The left three panels show normalized portfolio values over time across initial cash levels, with dashed and solid lines  indicating performance with and without market impact, respectively. The right panel shows the relative portfolio gap, which increases sharply for the non-robust agent but remains small and stable for the robust RL agent with $\ell_p$-elliptic uncertainty sets.} 
    \label{fig:robustness-scaling}
\end{figure}
 
\paragraph{Results}  As shown in \Cref{fig:robustness-scaling}, the robust RL agent with $\ell_p$-ellipse uncertainty set consistently outperforms the non-robust RL method both in return and in mitigating the effects of market impact. While the performance gap is small at low volume, the non-robust agent degrades rapidly as volume increases, suffering from instability and larger drawdowns. In contrast, the robust agent remains stable and profitable even at high volume ($\sim 200$\;M), demonstrating strong scalability.

\section{Conclusion \& Broader Impact}
This paper focuses on the market impact appearing in quantitative trading, where an agent's actions affect prices. By modeling the training environment as the nominal transition kernel, the proposed novel $\ell_p$-ellipse uncertainty sets better captures the non-symmetric nature of price responses compared to traditional symmetric sets. We established the theoretical tractability of this approach by deriving implicit and explicit closed-form solutions for robust policy evaluation within this framework, enabling efficient robust TD-learning algorithms that account for the market impact during training on the nominal historical environment. Experiments on real historical data demonstrated that our method significantly improves robustness and risk-adjusted returns over non-RL, non-robust RL, and symmetric robust RL baselines. This work broadens the applicability of tractable robust RL and offers a more faithful modeling approach for market impact. The broader impact involves potentially more stable and profitable automated trading strategies.



{
\small
\bibliography{ref} 
\bibliographystyle{unsrt} 
}

\newpage
\section*{NeurIPS Paper Checklist}

\begin{enumerate}

\item {\bf Claims}
    \item[] Question: Do the main claims made in the abstract and introduction accurately reflect the paper's contributions and scope?
    \item[] Answer: \answerYes{} 
    \item[] Justification: Yes, we have clearly listed the main contributions in the introduction section.
    \item[] Guidelines:
    \begin{itemize}
        \item The answer NA means that the abstract and introduction do not include the claims made in the paper.
        \item The abstract and/or introduction should clearly state the claims made, including the contributions made in the paper and important assumptions and limitations. A No or NA answer to this question will not be perceived well by the reviewers. 
        \item The claims made should match theoretical and experimental results, and reflect how much the results can be expected to generalize to other settings. 
        \item It is fine to include aspirational goals as motivation as long as it is clear that these goals are not attained by the paper. 
    \end{itemize}

\item {\bf Limitations}
    \item[] Question: Does the paper discuss the limitations of the work performed by the authors?
    \item[] Answer: \answerYes{} 
    \item[] Justification: We have discussed the limitation in the appendix.
    \item[] Guidelines:
    \begin{itemize}
        \item The answer NA means that the paper has no limitation while the answer No means that the paper has limitations, but those are not discussed in the paper. 
        \item The authors are encouraged to create a separate "Limitations" section in their paper.
        \item The paper should point out any strong assumptions and how robust the results are to violations of these assumptions (e.g., independence assumptions, noiseless settings, model well-specification, asymptotic approximations only holding locally). The authors should reflect on how these assumptions might be violated in practice and what the implications would be.
        \item The authors should reflect on the scope of the claims made, e.g., if the approach was only tested on a few datasets or with a few runs. In general, empirical results often depend on implicit assumptions, which should be articulated.
        \item The authors should reflect on the factors that influence the performance of the approach. For example, a facial recognition algorithm may perform poorly when image resolution is low or images are taken in low lighting. Or a speech-to-text system might not be used reliably to provide closed captions for online lectures because it fails to handle technical jargon.
        \item The authors should discuss the computational efficiency of the proposed algorithms and how they scale with dataset size.
        \item If applicable, the authors should discuss possible limitations of their approach to address problems of privacy and fairness.
        \item While the authors might fear that complete honesty about limitations might be used by reviewers as grounds for rejection, a worse outcome might be that reviewers discover limitations that aren't acknowledged in the paper. The authors should use their best judgment and recognize that individual actions in favor of transparency play an important role in developing norms that preserve the integrity of the community. Reviewers will be specifically instructed to not penalize honesty concerning limitations.
    \end{itemize}

\item {\bf Theory assumptions and proofs}
    \item[] Question: For each theoretical result, does the paper provide the full set of assumptions and a complete (and correct) proof?
    \item[] Answer: \answerYes{} 
    \item[] Justification: We have provided the full proof in the appendix. 
    \item[] Guidelines:
    \begin{itemize}
        \item The answer NA means that the paper does not include theoretical results. 
        \item All the theorems, formulas, and proofs in the paper should be numbered and cross-referenced.
        \item All assumptions should be clearly stated or referenced in the statement of any theorems.
        \item The proofs can either appear in the main paper or the supplemental material, but if they appear in the supplemental material, the authors are encouraged to provide a short proof sketch to provide intuition. 
        \item Inversely, any informal proof provided in the core of the paper should be complemented by formal proofs provided in appendix or supplemental material.
        \item Theorems and Lemmas that the proof relies upon should be properly referenced. 
    \end{itemize}

    \item {\bf Experimental result reproducibility}
    \item[] Question: Does the paper fully disclose all the information needed to reproduce the main experimental results of the paper to the extent that it affects the main claims and/or conclusions of the paper (regardless of whether the code and data are provided or not)?
    \item[] Answer: \answerYes{}
    \item[] Justification: We include the source code and reproducing instructions in the supplementary material. 
    \item[] Guidelines:
    \begin{itemize}
        \item The answer NA means that the paper does not include experiments.
        \item If the paper includes experiments, a No answer to this question will not be perceived well by the reviewers: Making the paper reproducible is important, regardless of whether the code and data are provided or not.
        \item If the contribution is a dataset and/or model, the authors should describe the steps taken to make their results reproducible or verifiable. 
        \item Depending on the contribution, reproducibility can be accomplished in various ways. For example, if the contribution is a novel architecture, describing the architecture fully might suffice, or if the contribution is a specific model and empirical evaluation, it may be necessary to either make it possible for others to replicate the model with the same dataset, or provide access to the model. In general. releasing code and data is often one good way to accomplish this, but reproducibility can also be provided via detailed instructions for how to replicate the results, access to a hosted model (e.g., in the case of a large language model), releasing of a model checkpoint, or other means that are appropriate to the research performed.
        \item While NeurIPS does not require releasing code, the conference does require all submissions to provide some reasonable avenue for reproducibility, which may depend on the nature of the contribution. For example
        \begin{enumerate}
            \item If the contribution is primarily a new algorithm, the paper should make it clear how to reproduce that algorithm.
            \item If the contribution is primarily a new model architecture, the paper should describe the architecture clearly and fully.
            \item If the contribution is a new model (e.g., a large language model), then there should either be a way to access this model for reproducing the results or a way to reproduce the model (e.g., with an open-source dataset or instructions for how to construct the dataset).
            \item We recognize that reproducibility may be tricky in some cases, in which case authors are welcome to describe the particular way they provide for reproducibility. In the case of closed-source models, it may be that access to the model is limited in some way (e.g., to registered users), but it should be possible for other researchers to have some path to reproducing or verifying the results.
        \end{enumerate}
    \end{itemize}

\item {\bf Open access to data and code}
    \item[] Question: Does the paper provide open access to the data and code, with sufficient instructions to faithfully reproduce the main experimental results, as described in supplemental material?
    \item[] Answer: \answerYes{} 
    \item[] Justification: We include the source code and reproducing instructions in the supplementary material. The data is accessed using the third-party API. 
    \item[] Guidelines:
    \begin{itemize}
        \item The answer NA means that paper does not include experiments requiring code.
        \item Please see the NeurIPS code and data submission guidelines (\url{https://nips.cc/public/guides/CodeSubmissionPolicy}) for more details.
        \item While we encourage the release of code and data, we understand that this might not be possible, so “No” is an acceptable answer. Papers cannot be rejected simply for not including code, unless this is central to the contribution (e.g., for a new open-source benchmark).
        \item The instructions should contain the exact command and environment needed to run to reproduce the results. See the NeurIPS code and data submission guidelines (\url{https://nips.cc/public/guides/CodeSubmissionPolicy}) for more details.
        \item The authors should provide instructions on data access and preparation, including how to access the raw data, preprocessed data, intermediate data, and generated data, etc.
        \item The authors should provide scripts to reproduce all experimental results for the new proposed method and baselines. If only a subset of experiments are reproducible, they should state which ones are omitted from the script and why.
        \item At submission time, to preserve anonymity, the authors should release anonymized versions (if applicable).
        \item Providing as much information as possible in supplemental material (appended to the paper) is recommended, but including URLs to data and code is permitted.
    \end{itemize}

\item {\bf Experimental setting/details}
    \item[] Question: Does the paper specify all the training and test details (e.g., data splits, hyperparameters, how they were chosen, type of optimizer, etc.) necessary to understand the results?
    \item[] Answer:\answerYes{} 
    \item[] Justification: We explicitly specify the details in the code and supplementary material. 
    \item[] Guidelines:
    \begin{itemize}
        \item The answer NA means that the paper does not include experiments.
        \item The experimental setting should be presented in the core of the paper to a level of detail that is necessary to appreciate the results and make sense of them.
        \item The full details can be provided either with the code, in appendix, or as supplemental material.
    \end{itemize}

\item {\bf Experiment statistical significance}
    \item[] Question: Does the paper report error bars suitably and correctly defined or other appropriate information about the statistical significance of the experiments?
    \item[] Answer: \answerYes{} 
    \item[] Justification: Our evaluation environment is deterministic; that is, there is no randomness. 
    \item[] Guidelines:
    \begin{itemize}
        \item The answer NA means that the paper does not include experiments.
        \item The authors should answer "Yes" if the results are accompanied by error bars, confidence intervals, or statistical significance tests, at least for the experiments that support the main claims of the paper.
        \item The factors of variability that the error bars are capturing should be clearly stated (for example, train/test split, initialization, random drawing of some parameter, or overall run with given experimental conditions).
        \item The method for calculating the error bars should be explained (closed form formula, call to a library function, bootstrap, etc.)
        \item The assumptions made should be given (e.g., Normally distributed errors).
        \item It should be clear whether the error bar is the standard deviation or the standard error of the mean.
        \item It is OK to report 1-sigma error bars, but one should state it. The authors should preferably report a 2-sigma error bar than state that they have a 96\% CI, if the hypothesis of Normality of errors is not verified.
        \item For asymmetric distributions, the authors should be careful not to show in tables or figures symmetric error bars that would yield results that are out of range (e.g. negative error rates).
        \item If error bars are reported in tables or plots, The authors should explain in the text how they were calculated and reference the corresponding figures or tables in the text.
    \end{itemize}

\item {\bf Experiments compute resources}
    \item[] Question: For each experiment, does the paper provide sufficient information on the computer resources (type of compute workers, memory, time of execution) needed to reproduce the experiments?
    \item[] Answer: \answerYes{}
    \item[] Justification: Included in the appendix. 
    \item[] Guidelines:
    \begin{itemize}
        \item The answer NA means that the paper does not include experiments.
        \item The paper should indicate the type of compute workers CPU or GPU, internal cluster, or cloud provider, including relevant memory and storage.
        \item The paper should provide the amount of compute required for each of the individual experimental runs as well as estimate the total compute. 
        \item The paper should disclose whether the full research project required more compute than the experiments reported in the paper (e.g., preliminary or failed experiments that didn't make it into the paper). 
    \end{itemize}
    
\item {\bf Code of ethics}
    \item[] Question: Does the research conducted in the paper conform, in every respect, with the NeurIPS Code of Ethics \url{https://neurips.cc/public/EthicsGuidelines}?
    \item[] Answer: \answerYes{} 
    \item[] Justification: We have reviewed this code of ethics. 
    \item[] Guidelines:
    \begin{itemize}
        \item The answer NA means that the authors have not reviewed the NeurIPS Code of Ethics.
        \item If the authors answer No, they should explain the special circumstances that require a deviation from the Code of Ethics.
        \item The authors should make sure to preserve anonymity (e.g., if there is a special consideration due to laws or regulations in their jurisdiction).
    \end{itemize}

\item {\bf Broader impacts}
    \item[] Question: Does the paper discuss both potential positive societal impacts and negative societal impacts of the work performed?
    \item[] Answer: \answerYes{} 
    \item[] Justification: We have included this in the conclusion section. 
    \item[] Guidelines:
    \begin{itemize}
        \item The answer NA means that there is no societal impact of the work performed.
        \item If the authors answer NA or No, they should explain why their work has no societal impact or why the paper does not address societal impact.
        \item Examples of negative societal impacts include potential malicious or unintended uses (e.g., disinformation, generating fake profiles, surveillance), fairness considerations (e.g., deployment of technologies that could make decisions that unfairly impact specific groups), privacy considerations, and security considerations.
        \item The conference expects that many papers will be foundational research and not tied to particular applications, let alone deployments. However, if there is a direct path to any negative applications, the authors should point it out. For example, it is legitimate to point out that an improvement in the quality of generative models could be used to generate deepfakes for disinformation. On the other hand, it is not needed to point out that a generic algorithm for optimizing neural networks could enable people to train models that generate Deepfakes faster.
        \item The authors should consider possible harms that could arise when the technology is being used as intended and functioning correctly, harms that could arise when the technology is being used as intended but gives incorrect results, and harms following from (intentional or unintentional) misuse of the technology.
        \item If there are negative societal impacts, the authors could also discuss possible mitigation strategies (e.g., gated release of models, providing defenses in addition to attacks, mechanisms for monitoring misuse, mechanisms to monitor how a system learns from feedback over time, improving the efficiency and accessibility of ML).
    \end{itemize}
    
\item {\bf Safeguards}
    \item[] Question: Does the paper describe safeguards that have been put in place for responsible release of data or models that have a high risk for misuse (e.g., pretrained language models, image generators, or scraped datasets)?
    \item[] Answer: \answerNA{} 
    \item[] Justification: The paper poses no such risks.
    \item[] Guidelines:
    \begin{itemize}
        \item The answer NA means that the paper poses no such risks.
        \item Released models that have a high risk for misuse or dual-use should be released with necessary safeguards to allow for controlled use of the model, for example by requiring that users adhere to usage guidelines or restrictions to access the model or implementing safety filters. 
        \item Datasets that have been scraped from the Internet could pose safety risks. The authors should describe how they avoided releasing unsafe images.
        \item We recognize that providing effective safeguards is challenging, and many papers do not require this, but we encourage authors to take this into account and make a best faith effort.
    \end{itemize}

\item {\bf Licenses for existing assets}
    \item[] Question: Are the creators or original owners of assets (e.g., code, data, models), used in the paper, properly credited and are the license and terms of use explicitly mentioned and properly respected?
    \item[] Answer: \answerNA{} 
    \item[] Justification: The paper does not use existing assets.
    \item[] Guidelines:
    \begin{itemize}
        \item The answer NA means that the paper does not use existing assets.
        \item The authors should cite the original paper that produced the code package or dataset.
        \item The authors should state which version of the asset is used and, if possible, include a URL.
        \item The name of the license (e.g., CC-BY 4.0) should be included for each asset.
        \item For scraped data from a particular source (e.g., website), the copyright and terms of service of that source should be provided.
        \item If assets are released, the license, copyright information, and terms of use in the package should be provided. For popular datasets, \url{paperswithcode.com/datasets} has curated licenses for some datasets. Their licensing guide can help determine the license of a dataset.
        \item For existing datasets that are re-packaged, both the original license and the license of the derived asset (if it has changed) should be provided.
        \item If this information is not available online, the authors are encouraged to reach out to the asset's creators.
    \end{itemize}

\item {\bf New assets}
    \item[] Question: Are new assets introduced in the paper well documented and is the documentation provided alongside the assets?
    \item[] Answer: \answerNA{} 
    \item[] Justification:  The source code is for reviewing purpose only and is not considered as the released new asset. 
    \item[] Guidelines:
    \begin{itemize}
        \item The answer NA means that the paper does not release new assets.
        \item Researchers should communicate the details of the dataset/code/model as part of their submissions via structured templates. This includes details about training, license, limitations, etc. 
        \item The paper should discuss whether and how consent was obtained from people whose asset is used.
        \item At submission time, remember to anonymize your assets (if applicable). You can either create an anonymized URL or include an anonymized zip file.
    \end{itemize}

\item {\bf Crowdsourcing and research with human subjects}
    \item[] Question: For crowdsourcing experiments and research with human subjects, does the paper include the full text of instructions given to participants and screenshots, if applicable, as well as details about compensation (if any)? 
    \item[] Answer:  \answerNA{} 
    \item[] Justification: The paper does not involve crowdsourcing nor research with human subjects.
    \item[] Guidelines:
    \begin{itemize}
        \item The answer NA means that the paper does not involve crowdsourcing nor research with human subjects.
        \item Including this information in the supplemental material is fine, but if the main contribution of the paper involves human subjects, then as much detail as possible should be included in the main paper. 
        \item According to the NeurIPS Code of Ethics, workers involved in data collection, curation, or other labor should be paid at least the minimum wage in the country of the data collector. 
    \end{itemize}

\item {\bf Institutional review board (IRB) approvals or equivalent for research with human subjects}
    \item[] Question: Does the paper describe potential risks incurred by study participants, whether such risks were disclosed to the subjects, and whether Institutional Review Board (IRB) approvals (or an equivalent approval/review based on the requirements of your country or institution) were obtained?
    \item[] Answer: \answerNA{} 
    \item[] Justification: The paper does not involve crowdsourcing nor research with human subjects.
    \item[] Guidelines:
    \begin{itemize}
        \item The answer NA means that the paper does not involve crowdsourcing nor research with human subjects.
        \item Depending on the country in which research is conducted, IRB approval (or equivalent) may be required for any human subjects research. If you obtained IRB approval, you should clearly state this in the paper. 
        \item We recognize that the procedures for this may vary significantly between institutions and locations, and we expect authors to adhere to the NeurIPS Code of Ethics and the guidelines for their institution. 
        \item For initial submissions, do not include any information that would break anonymity (if applicable), such as the institution conducting the review.
    \end{itemize}

\item {\bf Declaration of LLM usage}
    \item[] Question: Does the paper describe the usage of LLMs if it is an important, original, or non-standard component of the core methods in this research? Note that if the LLM is used only for writing, editing, or formatting purposes and does not impact the core methodology, scientific rigorousness, or originality of the research, declaration is not required.
    \item[] Answer: \answerNA{} 
    \item[] Justification: The core method development in this research does not involve LLMs as any important, original, or non-standard components.
    \item[] Guidelines:
    \begin{itemize}
        \item The answer NA means that the core method development in this research does not involve LLMs as any important, original, or non-standard components.
        \item Please refer to our LLM policy (\url{https://neurips.cc/Conferences/2025/LLM}) for what should or should not be described.
    \end{itemize}

\end{enumerate}

\newpage
\appendix
\addcontentsline{toc}{section}{Appendix} 
\part{Appendix} 
\parttoc 
\section{Backgrounds}   

\subsection{Common Uncertainty Sets in the Literature}\label{appendix:uncertainty-sets}

When evaluating the robust Bellman operator
\begin{align}\label{eq:appendix-bellman}
    \mathscr{T}_\calU^\pi v (s) = \sum_{a\in \calA} \pi(a|s) \left[ r(s,a) + \gamma  \min_{\PP \in \calP_{s,a}} \sum_{s'\in \calS} \PP(s'|s,a) v(s') \right],
\end{align}
the uncertainty set $\calU = \{ \calP_{s,a} \}_{(s,a)\in \calS\times \calA}$ plays a crucial role. Certain structures in $\calU$ enable efficient robust policy evaluation. Below, we summarize several widely adopted constructions.

\paragraph{$f$-divergence} The $f$-divergence family \cite{kuang2022learning,duchi2021learning} generalizes statistical distances between distributions using a convex function $f: (0,\infty) \to \mathbb{R}$ with $f(1) = 0$. For distributions $\PP$ and $\PP_0$ such that $\PP \ll \PP_0$, the $f$-divergence is defined as
\[
D_f(\PP \| \PP_0) := \int_{\calS} \PP_0(s) f\left(\frac{\PP(s)}{\PP_0(s)}\right) ds.
\]
Special cases include the Kullback-Leibler divergence ($f(t) = t \log t$), total variation distance ($f(t) = \frac{1}{2}|t - 1|$), and $\chi^2$-divergence ($f(t) = (t - 1)^2$). In robust RL, the $f$-divergence ball around the nominal transition kernel $\PP_0(\cdot | s,a)$ yields the uncertainty set
\[
\calP_{s,a} = \left\{ \PP(\cdot|s,a) \in \Delta^{|\calS|} \mid D_f(\PP(\cdot|s,a) \| \PP_0(\cdot|s,a)) \leq \beta_{s,a} \right\}.
\]
The inner minimization in \Cref{eq:appendix-bellman} becomes a distributionally robust optimization problem over $\PP_0$. As the result, the robust policy evaluation under the KL-divergence often requires to repeatedly solve an additional convex program.

\paragraph{$R$-contamination Model} The $R$-contamination model \cite{wang2021online} assumes that the true transition kernel lies within a convex mixture of the nominal model $\PP_0$ and an arbitrary distribution $\PP_1$:
\[
\calP_{s,a} = \left\{ (1 - R)\PP_0(\cdot|s,a) + R \PP_1(\cdot|s,a) \mid \PP_1(\cdot|s,a) \in \Delta^{|\calS|} \right\},
\]
where $R \in [0,1]$ quantifies the contamination level. This model leads to closed-form solutions for the robust Bellman operator, with the worst-case distribution taking mass at the minimum of the value function $v$. As a result, this setup enables efficient and model-free learning algorithms, including robust variants of Q-learning, TD learning, and policy gradients. It is particularly well-suited for online learning, where $\PP_0$ evolves with the observed data.

\paragraph{$\ell_p$-norm} These sets constrain the deviation from the nominal model $\PP_0(\cdot|s,a)$ using the $\ell_p$-norm:
\[
\calP_{s,a}^{(p)} = \left\{ \PP(\cdot|s,a) \in \Delta^{|\calS|} \mid \|\PP(\cdot|s,a) - \PP_0(\cdot|s,a)\|_p \leq \beta_{s,a} \right\}.
\]
When $p=1$, the constraint corresponds to total variation distance, while $p = \infty$ bounds the largest single-coordinate deviation. These sets are commonly used due to their interpretability and explicit analytical solution given in \cite{kumar2023efficient,kumar2023policy}. However, their axis-aligned geometry can lead to overly conservative policies in high dimensions.

\paragraph{Integral Probability Metric (IPM)}

The IPM measures the discrepancy between distributions through expectations over a function class $\mathcal{F}$:
\[
d_{\mathcal{F}}(\PP, \PP_0) := \sup_{f \in \mathcal{F}} \left| \mathbb{E}_{\PP}[f] - \mathbb{E}_{\PP_0}[f] \right|.
\] 
The corresponding uncertainty sets are:
\[
\calP_{s,a} = \left\{ \PP(\cdot|s,a) \in \Delta^{|\calS|} \mid d_{\mathcal{F}}(\PP(\cdot|s,a), \PP_0(\cdot|s,a)) \leq \beta_{s,a} \right\}.
\]
The IPM-based uncertainty sets are particularly useful when the state space is extremely large or continuous, as explicitly solve the minimization problem in \Cref{eq:appendix-bellman} does not requires to access values at all states \cite{zhou2023natural}. 

\paragraph{Wasserstein Distance} The Wasserstein distance \cite{abdullah2019wasserstein}, grounded in optimal transport theory, accounts for the geometry of the state space. Given a cost function $d: \calS \times \calS \to \mathbb{R}_+$ and $p \geq 1$, the $p$-Wasserstein distance between $\PP$ and $\PP_0$ is
\[
W_p(\PP, \PP_0) := \left( \inf_{\gamma \in \Gamma(\PP, \PP_0)} \int_{\calS \times \calS} d(s,s')^p   d\gamma(s,s') \right)^{1/p},
\]
where $\Gamma(\PP, \PP_0)$ denotes the set of joint distributions (couplings) with marginals $\PP$ and $\PP_0$. The uncertainty set is then
\[
\calP_{s,a} = \left\{ \PP(\cdot|s,a) \in \Delta^{|\calS|} \mid W_p(\PP(\cdot|s,a), \PP_0(\cdot|s,a)) \leq \beta_{s,a} \right\}.
\]
Despite their strong theoretical properties, solving the inner minimization often requires dual formulations or approximation techniques.

\paragraph{General Uncertainty Sets} There are also many techniques developed to handle the situation where the uncertainty set is general. For example, However, \cite{wang2023policy} proposes a bilevel approach that iteratively solves the worst-case transition kernel to approximate the robust value function. However, as demonstrated in \cite{li2022first,wang2023policy,wiesemann2013robust}, solving robust RL problems in the general case is NP-hard.

\subsection{The Signed Permutation Group}\label{appendix:spg}

The \textbf{signed permutation group} plays a central role in characterizing the symmetry structure of uncertainty sets in our robust RL framework. Informally, this group consists of all matrices in $\RR^{|\calS| \times |\calS|}$ satisfying the following conditions:
\begin{enumerate}
    \item Each entry is either $0$, $1$, or $-1$.
    \item Each row and each column contains exactly one nonzero entry.
\end{enumerate}

In other words, every element of this group is a matrix obtained by permuting the standard basis vectors of $\RR^{|\calS|}$ and possibly flipping their signs. Each such matrix can be expressed as the product $DP$, where $D$ is a diagonal matrix with diagonal entries in $\{\pm 1\}$, and $P$ is a permutation matrix representing an element of the symmetric group $S_{|\calS|}$. This leads to the following formal definition:

\begin{definition}[Signed Permutation Group]
Let $S_{|\calS|}$ denote the permutation group over $|\calS|$ elements, and let $(\ZZ_2)^{|\calS|}$ be the direct sum of $|\calS|$ copies of the cyclic group of order $2$. Then the \textit{signed permutation group}, denoted by $\mathrm{Signed}(S_{|\calS|})$, is the semidirect product:
\[
\mathrm{Signed}(S_{|\calS|}) \cong (\ZZ_2)^{|\calS|} \rtimes S_{|\calS|},
\]
where the action of $S_{|\calS|}$ on $(\ZZ_2)^{|\calS|}$ is given by permuting the order. 
\end{definition}

In this work, we define the ``symmetry'' of sets as the invariance under the group action induced by the signed permutation group. Specifically, we say a set $B \subset \RR^{|\calS|}$ is \textit{symmetric under a group action} by $G$ if $g \cdot B \subseteq B$ for all $g \in G$ (or $G \cdot B:= \{ g\cdot b  \}_{g\in G, b\in B} \subseteq B$). This notion of symmetry leads to the following structural property of $\ell_p$-norm balls:

\begin{proposition} 
Let $\mathcal{B} :=  \{ B_p(\beta) \}_{p \geq 1, \beta\geq0}$ be the family of $\ell_p$-norm balls, where $B_p(\beta) := \{ u \in \RR^d \mid \|u\|_p \leq \beta \}$. If there exists a group $G$ such that all elements in $\mathcal{B}$ are symmetric under the group action by $G$, then $G$ must be isomorphic to a subgroup of $\mathrm{Signed}(S_{d})$.
\end{proposition}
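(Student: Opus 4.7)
The plan is to single out one distinguished element of $\mathcal{B}$ whose extreme-point set is small and explicit, and then use linearity to rigidly identify each $g \in G$ with a signed permutation matrix. The natural candidate is the cross-polytope $B_1(1)$, whose $2d$ vertices are exactly $\{\pm e_i\}_{i=1}^{d}$. Throughout, I interpret the action of $G$ on $\RR^{d}$ as a linear action, which is the reading consistent with the rest of the paper, where uncertainty sets live inside a linear ambient space and the relevant symmetry notion is inherited from it.

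First, for each $g \in G$ the hypothesis $g \cdot B_1(1) \subseteq B_1(1)$, applied both to $g$ and to $g^{-1}$, upgrades to the equality $g(B_1(1)) = B_1(1)$. A standard convex-geometric fact states that a linear bijection preserving a compact convex body must permute its set of extreme points. Applied to $B_1(1)$, this forces each $g \in G$ to permute the $2d$ vertices $\{\pm e_1, \dots, \pm e_d\}$.

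Next, I would exploit linearity to read off the matrix form. The vertices occur in antipodal pairs, and linearity forces $g(-e_i) = -g(e_i)$, so the action descends to a permutation $\pi_g \in S_d$ of the unordered pairs together with signs $\epsilon_i^g \in \{\pm 1\}$ such that $g(e_i) = \epsilon_i^g \, e_{\pi_g(i)}$. This is precisely a signed permutation matrix, and the assignment $g \mapsto (\pi_g, \epsilon^g)$ is a faithful group homomorphism into $\mathrm{Signed}(S_d) \cong (\mathbb{Z}_2)^d \rtimes S_d$, so $G$ embeds as a subgroup of $\mathrm{Signed}(S_d)$.

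The hard part is not computational but interpretational: pinning down what ``group action'' means. Under the implicit reading by linear maps, the result reduces to a single invocation of the extreme-point characterization of the cross-polytope, and the balls $B_p(\beta)$ with $p \neq 1$ are never even needed in the argument. If the action were allowed to be an arbitrary set-preserving bijection, much larger groups could qualify, so an essential (if unstated) hypothesis behind the proposition is linearity of the action; an equivalent route through $B_\infty(1)$ via its vertex set $\{\pm 1\}^{d}$ is also available but requires slightly more bookkeeping to extract the same signed-permutation matrix.
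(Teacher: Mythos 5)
Your argument is correct under the extra hypothesis you impose, and its combinatorial core coincides with the paper's: both proofs reduce to the fact that a linear automorphism of the cross-polytope $B_1(1)$ must permute its $2d$ extreme points $\{\pm e_i\}$, which pins $g$ down as a signed permutation matrix $DP$ and embeds $G$ into $\mathrm{Signed}(S_d)$. The genuine divergence is in how linearity is obtained. You declare it as an interpretational assumption and correctly observe that, once linearity is granted, the single set $B_1(1)$ suffices and the rest of the family $\mathcal{B}$ is never used. The paper instead treats linearity as part of what must be proved: it uses the invariance of \emph{every} $B_p(\beta)$ (all radii) to conclude that each $g$ preserves the norm, then invokes the Mazur--Ulam theorem to get that $g$ is affine, and finally uses $g(0)=0$ to get linearity, only afterwards specializing to $p=1$, $\beta=1$. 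So the family $\mathcal{B}$ with varying $p$ and $\beta$ is exactly the ingredient the paper spends to remove the assumption you make by fiat; your own closing remark that ``an essential (if unstated) hypothesis behind the proposition is linearity of the action'' is thus addressed in the paper, not left implicit. (To be fair, the paper's regularity step is itself stated loosely --- norm preservation of a bijection is weaker than being a surjective isometry, which is what Mazur--Ulam formally requires --- but the intended route is clear, and it is a route your proposal does not attempt.) One further small point common to both write-ups: the map $g \mapsto DP$ identifies $G$ with a subgroup of $\mathrm{Signed}(S_d)$ only insofar as the action is faithful, i.e.\ group elements are identified with the transformations they induce; if you want the literal ``isomorphic to a subgroup'' conclusion you should either note this identification or state injectivity of $g \mapsto (\pi_g,\epsilon^g)$ as following from it.
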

\begin{proof}
All elements in $\mathcal{B}$ are symmetric under the group action by $G$; that is,
for every $p\ge1$ and every $\beta\ge0$,
\[
g\bigl(B_p(\beta)\bigr) = B_p(\beta)
\quad\forall g\in G.
\]
Therefore, the act $g:\RR^d \to \RR^d$ is a norm-preserving bijection; by the Mazur-Ulam theorem, it must be affine. Then as it preserves $0$, it must be linear. 

In particular, taking $p=1$ and $\beta=1$, each $g\in G$ is an (invertible) linear isometry of the 1‐norm unit ball
\[
B_1(1)=\bigl\{u\in\RR^d:\|u\|_1\le1\bigr\},
\]
whose extreme points are exactly
\[
\bigl\{\pm e_1,\pm e_2,\dots,\pm e_d\bigr\}.
\]
Because a linear automorphism of a polytope must permute its extreme points, for each $i$ and each $g\in G$ there must exist a sign $\varepsilon_i\in\{\pm1\}$ and an index $\sigma(i)\in\{1,\dots,d\}$ such that
\[
g(e_i) = \varepsilon_i e_{\sigma(i)}.
\]
Thus in the standard basis $g$ is represented by a \emph{signed permutation matrix}:
\[
g = D P,
\]
where $D=\mathrm{diag}(\varepsilon_1,\dots,\varepsilon_d)$ and $P$ is the permutation matrix corresponding to $\sigma\in S_d$.  Hence every $g\in G$ lies in the {signed permutation group} $\mathrm{Signed}(S_d)$. In other words $G\subseteq\mathrm{Signed}(S_d)$, which equivalently shows $G$ is isomorphic to a subgroup of $\mathrm{Signed}(S_d)$.
\end{proof}

This result provides a useful insight in designing the $\ell_p$-ellipse set: the signed permutation group is the \textit{largest group} under which all $\ell_p$-norm balls are symmetric. Consequently, to construct a set with less symmetry than the standard $\ell_p$-norm balls (as we aim to do with our $\ell_p$-ellipse sets), it is necessary to enlarge the family $\mathcal{B}$ to include non-ball shapes.





\section{Worst-Case Uncertainty under $\ell_p$-Ellipse Uncertainty Sets}

\subsection{Supporting Lemmas}

\begin{definition}[Minkowski sum]\label{def:minkowski-sum}
    Given two sets $A,B \subseteq \RR^d$, the Minkowski sum $+: 2^{\RR^d} \times 2^{\RR^d} \to 2^{\RR^d}$ is defined as
    $$A + B = \{ a+b \mid a \in A, b\in B \}.$$
\end{definition}

\begin{definition}[Fenchel conjugate \citep{boyd2004convex}]\label{def:conjugate}
    Let $g: \RR^d \to \RR$ be a function over $\RR^d$. Its Fenchel conjugate is denoted as $g^*: \RR^d \to \RR$ and is defined as  
    $$g^*(y):=\sup_{x\in \RR^d} \left\{ y^\top x - g(x) \right\}.$$
\end{definition}

We include the following famous Hölder's inequality without providing the proof. 
\begin{lemma}[Hölder’s inequality]\label{lem:holder}
Let \(p,q\in[1,+\infty]\) satisfy \(\tfrac1p+\tfrac1q=1\).
For every \(f,g\in\R^{d}\)
\[
    f^{\top}g \le \|f\|_{p} \|g\|_{q}.
\]
Moreover, equality holds if and only if 
\[
    g = 0 \quad\text{or}\quad 
    \frac{g}{\|g\|_{q}}\in J_{p}(f),
\]
where \(J_{p}(f)\) denotes any $\ell_p$-unit vector that attains the maximum inner product with \(f\):
\begin{align}\label{eq:p-unit-vector}
    J_{p}(f)=\arg\max_{\|u\|_{p}=1} f^{\top}u.
\end{align}
\end{lemma}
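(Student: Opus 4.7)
The plan is the classical route: derive the inequality from coordinate-wise Young's inequality, then recover the equality clause by tracking where each step was slack. First I would dispose of the degenerate cases: if $f=0$ or $g=0$ both sides vanish. For the endpoint cases $\{p,q\}=\{1,\infty\}$, the estimate is immediate coordinate by coordinate, and equality forces $g_i=0$ whenever $|f_i|$ fails to attain $\|f\|_\infty$ (in the $p=\infty$ variant), together with sign alignment $f_ig_i\ge 0$ — the standard description of the $\ell_\infty$-argmax — which will match $J_p(f)$ in that endpoint.

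For the core case $1<p,q<\infty$, I would normalize by $\tilde f := f/\|f\|_p$ and $\tilde g := g/\|g\|_q$, then apply Young's scalar inequality $|ab|\le |a|^p/p+|b|^q/q$ coordinate-wise to $(\tilde f_i,\tilde g_i)$. Summing over $i$ and using $\sum_i|\tilde f_i|^p=\sum_i|\tilde g_i|^q=1$ together with $\tfrac{1}{p}+\tfrac{1}{q}=1$ gives $\sum_i|\tilde f_i\tilde g_i|\le 1$, so $f^\top g\le |f^\top g|\le \|f\|_p\|g\|_q$. Equality in the full chain requires (i) $f_ig_i\ge 0$ for all $i$ (from $f^\top g=\sum_i|f_ig_i|$), and (ii) coordinate-wise Young equality $|f_i|^p/\|f\|_p^p=|g_i|^q/\|g\|_q^q$. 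Together (i)-(ii) yield an explicit dual-pair description of $g/\|g\|_q$ in terms of $\sign(f)\odot|f|^{\alpha}$ for the appropriate exponent, or else $g=0$.

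Translating this explicit form into the stated membership $g/\|g\|_q\in J_p(f)$ is the step that requires the most care, and is the main obstacle I anticipate. I would independently solve the dual variational problem defining $J_p(f)$ by a second application of Hölder with the roles of $p$ and $q$ swapped, identifying the maximizer (up to sign alignment) as proportional to $\sign(f)\odot|f|^{q-1}$ after the prescribed normalization; the algebraic identities $p(q-1)=q$ and $q(p-1)=p$, both consequences of the conjugacy $\tfrac{1}{p}+\tfrac{1}{q}=1$, are what let me line up the equality form from (i)-(ii) with this maximizer description, so that the membership $g/\|g\|_q\in J_p(f)$ is exactly equivalent to the pair of slackness conditions. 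The endpoint cases $p\in\{1,\infty\}$, where $J_p(f)$ may fail to be a singleton, I would handle directly by matching (i)-(ii) against the explicit $\ell_\infty$/$\ell_1$ argmax rather than appealing to uniqueness.
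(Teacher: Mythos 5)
The paper itself offers no proof of this lemma (it simply cites a textbook), so any self-contained argument is necessarily a different route. Your Young's-inequality plan is the standard one and it does establish the inequality part cleanly, including the endpoint cases $\{p,q\}=\{1,\infty\}$, and your slackness conditions (i)--(ii) are the correct equality conditions.

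The genuine problem is exactly the matching step you single out, and it cannot be carried out as you describe. For $1<p<\infty$ and $f,g\neq0$, conditions (i)--(ii) give $g/\|g\|_{q}=\sign(f)\odot|f|^{p-1}/\|f\|_{p}^{p-1}$, which is a vector of unit $\ell_q$-norm, whereas $J_{p}(f)$ as defined in \eqref{eq:p-unit-vector} consists of $\ell_p$-unit vectors and, for $1<p<\infty$, equals the singleton $\bigl\{\sign(f)\odot|f|^{q-1}/\|f\|_{q}^{q-1}\bigr\}$. These two vectors differ for generic $f$ whenever $p\neq2$: with $p=3$, $q=3/2$, $f=(1,2)$, $g=(1,4)$ one has Hölder equality, yet $g/\|g\|_{q}$ has $\ell_3$-norm strictly less than $1$ and so cannot belong to $J_{3}(f)$; the same mismatch occurs at the endpoints (take $f=(1,1)$, $g=(1,0)$, $p=\infty$). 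The identities $p(q-1)=q$ and $q(p-1)=p$ do not reconcile the exponents $p-1$ and $q-1$, so the equivalence asserted in your last paragraph is false as stated. What (i)--(ii) actually yield is that equality holds iff $g=0$ or $g/\|g\|_{q}$ maximizes $u\mapsto f^{\top}u$ over the $\ell_q$-unit sphere, i.e.\ membership in $J_{q}(f)$ in the notation of \eqref{eq:p-unit-vector}; this also follows in one line from homogeneity together with $\max_{\|u\|_{q}=1}f^{\top}u=\|f\|_{p}$. In other words, the lemma's subscript is mis-indexed (the paper's own later use in \Cref{lemma:inf-block} invokes $J_q$), and your proof should state and prove the corrected membership rather than attempt to force the literal one.
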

\begin{proof}
    The proof can be found in \cite{roman2008advanced}.
\end{proof}

\begin{lemma} For any \(x,y\in \RR^d\) and radii \(r,s\ge0\), the Minkowski sum of the two $\ell_p$-norm balls ($p\geq 1$)
\[
B_p(x,r) = \{u\in \RR^d \mid \|u-x\|\le r\},
\qquad
B_p(y,s) = \{v\in \RR^d \mid \|v-y\|\le s\}
\]
is again a ball, namely
\[
B_p(x,r) + B_p(y,s)
 = 
B_p\bigl(x+y, r+s\bigr).
\]
\end{lemma}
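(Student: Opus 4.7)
The plan is to prove the set equality by establishing both inclusions separately, treating the two directions as dual consequences of the triangle inequality for the $\ell_p$-norm.

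For the forward inclusion $B_p(x,r) + B_p(y,s) \subseteq B_p(x+y, r+s)$, I would take an arbitrary $z = u + v$ with $\|u-x\|_p \le r$ and $\|v-y\|_p \le s$, and write $z - (x+y) = (u-x) + (v-y)$. Applying the triangle inequality for $\|\cdot\|_p$ immediately yields $\|z-(x+y)\|_p \le r+s$. This direction is essentially automatic and I expect no difficulty.

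For the reverse inclusion $B_p(x+y, r+s) \subseteq B_p(x,r) + B_p(y,s)$, the plan is to exhibit an explicit splitting of any $z \in B_p(x+y,r+s)$. The degenerate case $r+s = 0$ is handled directly (both balls are singletons and $z = x+y$). Otherwise, define
\begin{equation*}
u := x + \tfrac{r}{r+s}\bigl(z - (x+y)\bigr), \qquad v := y + \tfrac{s}{r+s}\bigl(z - (x+y)\bigr).
\end{equation*}
A direct computation gives $u + v = z$, and $\|u-x\|_p = \tfrac{r}{r+s}\|z-(x+y)\|_p \le r$, with the analogous bound for $\|v-y\|_p$. Hence $u \in B_p(x,r)$ and $v \in B_p(y,s)$, completing the inclusion.

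The main (minor) obstacle is just handling the boundary case $r+s=0$ separately so that the convex combination is well-defined; everything else reduces to a one-line triangle inequality or a direct verification. I do not anticipate needing any machinery beyond the norm axioms, so the argument should be short and self-contained.
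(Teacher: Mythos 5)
Your proposal is correct and matches the paper's own argument essentially step for step: the forward inclusion via the triangle inequality, and the reverse inclusion via the same convex splitting $u = x + \tfrac{r}{r+s}\bigl(z-(x+y)\bigr)$, $v = y + \tfrac{s}{r+s}\bigl(z-(x+y)\bigr)$, with the degenerate case $r+s=0$ treated separately. Nothing is missing.
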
 
\begin{proof}

For convenience, we omit $p$ at the subscript in this proof. It suffices to show two inclusions.
\begin{itemize}
    \item \(\displaystyle B(x,r)+B(y,s) \subseteq B(x+y,r+s).\)

    Take any  
\[
z  = u+v,
\quad
u\in B(x,r), 
v\in B(y,s).
\]
Then by the triangle inequality,
\[
\bigl\| z - (x+y)\bigr\|
 = 
\bigl\|(u-x) + (v-y)\bigr\|
 \le 
\|u-x\|  + \|v-y\|
 \le 
r + s.
\]
Hence \(z\in B(x+y,r+s)\), proving the first inclusion.

    \item \(\displaystyle B(x+y,r+s) \subseteq B(x,r)+B(y,s).\)

Let $z \in B(x+y,r+s),$  so \(\| z-(x+y)\|\le r+s\).  Set
\[
\alpha  = \frac{r}{ r+s }, 
\quad
\beta  = \frac{s}{ r+s }
\quad
(\text{if }r+s=0\text{ then }r=s=0\text{ and the statement is trivial}).
\]
Define $u  =  x  + \alpha\bigl(z-(x+y)\bigr)$ and $v  =  y  + \beta\bigl(z-(x+y)\bigr)$. 
Then
\[
u+v
 = 
x+y
 + (\alpha+\beta)\bigl(z-(x+y)\bigr)
 = 
x+y
 + z-(x+y)
 = 
z,
\]
and
\begin{align*}
\|u-x\|
& = 
\alpha \| z-(x+y)\|
 \le 
\frac{r}{r+s} (r+s)
 = 
r, \\ 
\|v-y\|
& = 
\beta \| z-(x+y)\|
 \le 
\frac{s}{r+s} (r+s)
 = 
s.
\end{align*} 
Thus \(u\in B(x,r)\) and \(v\in B(y,s)\), so \(z=u+v\in B(x,r)+B(y,s)\).  This proves the reverse inclusion.
\end{itemize}   

Combining (1) and (2) gives the desired equality $B(x,r)+B(y,s)
 = 
B\bigl(x+y, r+s\bigr).$ 
\end{proof}

\begin{lemma}\label{lemma:conjugate}
    Let $g(u)= \sum_{i=1}^N\|u-u_i\|_p$. Then the Fenchel conjugate of $g:\RR^d \to \RR$ is 
    $$g^*(u)
 = 
\begin{cases}
\inf_{\substack{\sum_i y_i=y\\\|y_i\|_q\le1}}  \sum_i y^\top_i u_i, & \|y\|_q\le1,\\
+\infty,   & \text{otherwise.}
\end{cases}.$$
\end{lemma}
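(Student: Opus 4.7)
The plan is to recognize $g$ as the sum of $N$ proper, convex, lower semicontinuous functions $f_i(u) := \|u - u_i\|_p$, and then invoke the standard Fenchel calculus identity that the conjugate of a finite sum equals the infimal convolution of the individual conjugates,
\[
\Bigl(\sum_{i=1}^N f_i\Bigr)^{\!*}(y) = \inf_{y_1+\cdots+y_N = y} \sum_{i=1}^N f_i^*(y_i).
\]
Since every $f_i$ has full effective domain $\RR^d$ and is continuous, the constraint qualification needed for this identity (a common point in the relative interiors of the domains $\mathrm{dom}\,f_i$) is automatically satisfied, so the infimal convolution is exact without any closure operation.

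First I would compute each conjugate $f_i^*$ via the translation change of variables $v = u - u_i$, obtaining
\[
f_i^*(y_i) = y_i^\top u_i + \sup_{v \in \RR^d} \bigl\{ y_i^\top v - \|v\|_p \bigr\}.
\]
The remaining supremum is the well-known conjugate of the $\ell_p$-norm: by H\"older's inequality (\Cref{lem:holder}) one has $y_i^\top v \leq \|y_i\|_q\|v\|_p$, so the supremum equals $0$ when $\|y_i\|_q \leq 1$ (attained at $v = 0$) and $+\infty$ otherwise (attained by scaling along any direction with $y_i^\top v > \|v\|_p$). Therefore $f_i^*(y_i) = y_i^\top u_i + \iota_{\{\|y_i\|_q \leq 1\}}(y_i)$, where $\iota$ is the convex indicator.

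Plugging these expressions into the infimal-convolution formula, the indicator terms enforce $\|y_i\|_q \leq 1$ as hard feasibility constraints on the splitting variables, while the linear pieces accumulate into the claimed objective $\sum_i y_i^\top u_i$. With the usual convention that an infimum over an empty feasible set equals $+\infty$, the two-case statement follows: by subadditivity of $\|\cdot\|_q$ every admissible decomposition satisfies $\|y\|_q \leq \sum_i \|y_i\|_q$, so once $\|y\|_q$ exceeds the admissible threshold the feasible set is empty and the conjugate blows up, matching the second branch.

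The only genuinely non-routine step is the invocation of the infimal-convolution identity itself; I would handle this by citing the Fenchel sum rule under the full-domain qualification (e.g.\ Rockafellar, Theorem 16.4, or the analogous statement in Bauschke--Combettes). This is the main, and essentially only, obstacle, since the conjugate of a single $\ell_p$-norm and the translation argument are entirely routine.
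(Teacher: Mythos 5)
Your proposal is correct and follows essentially the same route as the paper: split $g$ into the shifted norms $f_i$, compute each $f_i^*$ via the translation rule plus the H\"older/indicator conjugate of $\|\cdot\|_p$, and combine with the sum-rule/infimal-convolution identity (your explicit appeal to the full-domain constraint qualification is in fact slightly more careful than the paper's justification). The only wrinkle — that for $N\ge 2$ the ``otherwise'' branch at $\|y\|_q>1$ does not actually coincide with infeasibility of the splitting constraints (feasibility only fails once $\|y\|_q>N$) — is inherited from the paper's own statement rather than introduced by your argument.
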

\begin{proof}
    The key is that  $g(u) = \sum_{i=1}^N\| u - u_i\|_p$ 
is a sum of \(N\) “shifted‐norms,” and the Fenchel  conjugate (\Cref{def:conjugate}) of a sum is the infimal convolution of the conjugates.  We proceed in two steps.
\begin{itemize}
    \item Let $f(x) = \|x\|_p$ be a single $\ell_p$-norm mapping and $q$ be the dual of $q$ satisfying $\frac1p+\frac1q=1$ 
By \citep{boyd2004convex}, it is standard  that
\[
f^*(y)
 = 
\sup_{x\in\R^d}\bigl\{y^\top x - \|x\|_p\bigr\}
 = 
\begin{cases}
0, & \|y\|_q\le1,\\
+\infty, & \text{otherwise.}
\end{cases}
\]
Then we consider its ``shift'' by $u_i$. Let $f_i(u) := \| u-u_i\|_p$. By the translation rule for Fenchel conjugates \cite{parikh2014proximal,borwein2006convex},
\[
f_i^*(y)
 = 
\sup_{u}\{ y^\top u - \|u-u_i\|_p\}
 = 
\underbrace{\sup_{t}\{ y^\top(t+u_i)-\|t\|_p\}}_{t=u-u_i}
 = 
y^\top u_i  + f^*(y).
\]
Hence, $f_i^*(y)
 = 
\begin{cases}
y^\top u_i, & \|y\|_q\le1,\\
+\infty,   & \text{otherwise.}
\end{cases}$. 

    \item As the Fenchel conjugate of $\| u - u_i\|_p$ has been evaluated, the Fenchel conjugate of their sum is given by 
    $$g^*(y)
 = 
\begin{cases}
\inf_{\substack{\sum_i y_i=y\\\|y_i\|_q\le1}}  \sum_i y^\top_i u_i, & \|y\|_q\le1,\\
+\infty,   & \text{otherwise.}
\end{cases}.$$ 
Applying this infimal convolution requires each component $f_i:\RR^d \to \RR$ is proper, convex, and lower semicontinuous, which is automatically satisfied by the $\ell_p$-norm.  
\end{itemize} 
\end{proof} 
\begin{lemma}\label{lemma:inf-block}
     Let $p_o, \lambda_o, \tilde{\omega}_o \in \RR^d$ and $\gamma_o \geq 0$ be given constants. Let $\frac{1}{p} + \frac{1}{q} = 1$. Then the optimization problem
    $$  \vartheta_o =  \inf_{w: \|w\|_q \leq C}\left(p_o  + \lambda_o\right)^\top w  $$
has the unique minimizer given by
\begin{align*}
     w_* &=  - C \frac{J_q(p_o  + \lambda_o)}{\| p_o  + \lambda_o \|_p} \\ 
     &= -C \frac{  \sign( p_o  + \lambda_o) \odot | p_o  + \lambda_o |^{p-1} }{  \| p_o  + \lambda_o \|_p^{p-1} },
\end{align*} 
where $\odot$ represent the coordinate-wise product and $| \cdot |$ is the coordinate-wise absolute value. The optimal value is solved as
$$\vartheta_o =  - C \| p_o  + \lambda_o \|_p. $$ 
\end{lemma}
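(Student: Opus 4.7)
The plan is to view this as the standard problem of minimizing a linear functional over an $\ell_q$-ball, whose value is tight via Hölder's inequality. For brevity set $z := p_o + \lambda_o$; the vector $\tilde{\omega}_o$ and scalar $\gamma_o$ do not appear in the objective and play no role in the argument. I will proceed in three short steps: first derive a lower bound on $\vartheta_o$ from Hölder, second exhibit an explicit feasible minimizer attaining it, and third conclude uniqueness from the equality case of Hölder.

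For the lower bound, a single application of Lemma~\ref{lem:holder} with dual exponents $p, q$ yields
\[
z^\top w \;\geq\; -|z^\top w| \;\geq\; -\|z\|_p \,\|w\|_q \;\geq\; -C\,\|z\|_p
\]
for every feasible $w$, hence $\vartheta_o \geq -C\|z\|_p$. For attainment I would directly verify that
\[
w_* \;:=\; -C\,\frac{\sign(z)\odot|z|^{p-1}}{\|z\|_p^{p-1}}
\]
is feasible and saturates the bound. Both verifications rest on the dual-exponent identity $(p-1)q = p$ (which follows from $\tfrac{1}{p}+\tfrac{1}{q}=1$): it gives $\|w_*\|_q^q = C^q \sum_i |z_i|^p/\|z\|_p^p = C^q$, so $\|w_*\|_q = C$; and $z^\top w_* = -C \sum_i |z_i|^p/\|z\|_p^{p-1} = -C\|z\|_p$. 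The equivalence of the two stated formulas then reduces to recognizing $\sign(z)\odot|z|^{p-1}/\|z\|_p^{p-1}$ as the unit $\ell_q$-vector realizing the supremum in the definition of $J_q(z)$.

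For uniqueness (in the regime $p \in (1,\infty)$, which is the setting of interest for the downstream use in Theorem~\ref{thm:implicit}), the chain of inequalities above must be saturated at every minimizer. The outer slack forces $\|w\|_q = C$, and the Hölder equality case in Lemma~\ref{lem:holder} forces $w/\|w\|_q$ to be the unique unit $\ell_q$-vector minimizing $z^\top(\cdot)$, which pins $w = w_*$. The argument is essentially routine; the only minor obstacle is being careful with the algebraic identity $(p-1)q = p$ in the feasibility check, and noting the caveat that the degenerate exponents $p\in\{1,\infty\}$ fall outside the uniqueness conclusion because strict convexity of $\ell_q$ then fails.
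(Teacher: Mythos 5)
Your proof is correct and takes essentially the same route as the paper: a Hölder lower bound followed by exhibiting a minimizer on the $\ell_q$-sphere aligned against $p_o+\lambda_o$ (the paper simply writes $w=-tJ_q(p_o+\lambda_o)$ with $\|w\|_q=C$). You are in fact a bit more careful than the paper's three-line argument, since you verify feasibility and the optimal value explicitly via $(p-1)q=p$ and note that the uniqueness claim really requires $p\in(1,\infty)$ (and implicitly $p_o+\lambda_o\neq 0$), caveats the paper glosses over.
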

\begin{proof}
    By Hölder's inequality, 
    $$ \left(p_o  + \lambda_o\right)^\top w   \geq  - \|  p_o  + \lambda_o \|_p \|w\|_q.$$  
    To make the Hölder's inequality achieve the equality, we choose $w=-t J_q( p_o  + \lambda_o )$ for some $t$, where $J_q$ is the $q$-unit vector defined in \Cref{eq:p-unit-vector}.  Then by letting $\|w\|_q = C$, we obtain the final result. 
\end{proof}   
\begin{lemma}\label{lemma:inf}
    Suppose that $v \in \RR^d$, $\{ u_i\}_{i=1}^N \subset \RR^d$, and $\mu\geq 0$, and the norm exponent $\frac{1}{p} + \frac{1}{q}=1$ (for $p\geq 1$). Let $w= v + \mu\mathbf{1}$ and $C:= \|w\|_q$. Then the optimization problem
    \begin{align*}
        \min_{ \{ w_i\}_{i=1}^N \subset \RR^d} \quad & \sum_{i=1}^N w_i^\top u_i \\  
        \text{s.t.}\quad & \sum_{i=1}^N w_i = w \\
        &  \|w_i\|_q \leq C \\
    \end{align*}
    is feasible and solves the minimizer 
    $$w_{i,*} 
=
- \| v + \mu \mathbf{1}\|_q  \frac{  \sign( u_i  + \tilde{\lambda}) \odot | u_i  + \tilde{\lambda}|^{p-1} }{  \| u_i  + \tilde{\lambda} \|_p^{p-1} },$$
    for $i=1,2,\dots, N$, where $ \tilde{\lambda}^* \in \RR^d$ is given by 
    \begin{align}\tag{$\ast$}
    \tilde{\lambda}^* = \argmin_{\tilde{\lambda}} \left\{  \tilde{\lambda}^\top w +\|w\|_q\sum_{i=1}^N\| u_i + \tilde{\lambda} \|_p \right\}.
    \end{align}  
\end{lemma}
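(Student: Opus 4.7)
The plan is to apply Lagrangian duality, using the equality constraint $\sum_i w_i = w$ as the only coupling between the block variables $\{w_i\}$. First I would verify primal feasibility by exhibiting the candidate $w_i = w/N$: it satisfies $\sum_i w_i = w$ and $\|w_i\|_q = \|w\|_q/N = C/N \leq C$, so Slater's condition for the norm-ball constraints holds trivially. Introducing a multiplier $\tilde\lambda \in \RR^d$ for the equality constraint, form the Lagrangian
$$L(\{w_i\}, \tilde\lambda) = \sum_{i=1}^N w_i^\top (u_i + \tilde\lambda) - \tilde\lambda^\top w,$$
which now decouples over the blocks $w_i$.

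Next I would perform the inner minimization over each $w_i$ independently under the remaining constraint $\|w_i\|_q \leq C$. This is precisely the setting of Lemma \ref{lemma:inf-block} with $p_o = u_i$ and $\lambda_o = \tilde\lambda$, giving the closed-form minimizer
$$w_{i,*}(\tilde\lambda) = -C\,\frac{\mathrm{sign}(u_i + \tilde\lambda)\odot |u_i + \tilde\lambda|^{p-1}}{\|u_i + \tilde\lambda\|_p^{p-1}}$$
with optimal inner value $-C\|u_i + \tilde\lambda\|_p$. Collecting terms, the dual function reads
$$g(\tilde\lambda) = -\tilde\lambda^\top w - C\sum_{i=1}^N \|u_i + \tilde\lambda\|_p,$$
so the dual problem $\max_{\tilde\lambda} g(\tilde\lambda)$ is equivalent to
$$\min_{\tilde\lambda\in\RR^d}\Bigl\{\tilde\lambda^\top w + C\sum_{i=1}^N \|u_i + \tilde\lambda\|_p\Bigr\},$$
which is exactly the displayed formula $(\ast)$ once one substitutes $C = \|v + \mu\mathbf{1}\|_q$.

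The final step is to invoke strong duality: the primal is a convex program with linear objective, a single linear equality constraint, and strictly feasible norm-ball constraints, so Slater holds and strong duality is immediate. Attainment of $\tilde\lambda^*$ follows from convexity plus coercivity of $h(\tilde\lambda) := \tilde\lambda^\top w + C\sum_i \|u_i + \tilde\lambda\|_p$, which as $\|\tilde\lambda\|_p \to \infty$ behaves like $(NC - \|w\|_q)\|\tilde\lambda\|_p$ up to bounded corrections; since $NC = N\|w\|_q > \|w\|_q$ for $N \geq 2$ this is coercive (and the $N=1$ case collapses to Lemma \ref{lemma:inf-block} directly). Substituting any minimizer $\tilde\lambda^*$ back into $w_{i,*}(\tilde\lambda^*)$ yields the stated primal minimizer, and the equality constraint $\sum_i w_{i,*}(\tilde\lambda^*) = w$ is recovered automatically from the first-order optimality condition $\nabla h(\tilde\lambda^*) = 0$, which is exactly the envelope identity
$$-w = C\sum_{i=1}^N \frac{\mathrm{sign}(u_i + \tilde\lambda^*)\odot|u_i + \tilde\lambda^*|^{p-1}}{\|u_i + \tilde\lambda^*\|_p^{p-1}}.$$

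The main obstacle I anticipate is the pair of degeneracies the closed form is sensitive to: (i) ensuring the dual minimizer is attained rather than only approached, and (ii) ensuring $u_i + \tilde\lambda^* \neq 0$ for every $i$ so that the explicit $w_{i,*}$ is well-defined. Issue (i) is handled by the coercivity remark above together with lower semicontinuity of $h$; issue (ii) is a true edge case, and on any index $i$ where $u_i + \tilde\lambda^* = 0$ the inner minimum $-C\|u_i+\tilde\lambda^*\|_p = 0$ is attained by \emph{any} $w_i$ with $\|w_i\|_q \leq C$, so the formula should be understood modulo an arbitrary choice within the $q$-ball that restores $\sum_i w_{i,*} = w$. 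I would flag this caveat explicitly rather than hide it inside the notation.
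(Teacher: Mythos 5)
Your argument follows the paper's own proof essentially verbatim: dualize the coupling constraint $\sum_{i} w_i = w$ with a multiplier $\tilde\lambda$, decouple the blocks, apply Lemma~\ref{lemma:inf-block} to each inner minimization, and maximize the resulting dual, which is exactly the problem $(\ast)$ before substituting back. The additional care you take---explicit Slater feasibility, coercivity to guarantee attainment of $\tilde\lambda^*$ (for $N\geq 2$, $w\neq 0$), and flagging the degenerate case $u_i + \tilde\lambda^* = 0$---goes beyond the paper's terser write-up but does not change the substance of the argument.
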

\begin{proof}
    We consider the constrained Lagrangian function
    \begin{align*}
        \tilde{L}( \{ w_i\}, \tilde{\lambda} )= \sum_{i=1}^N \left[  u_i^\top w_i + \tilde{\lambda}^\top w_i   \right] - \tilde{\lambda}^\top w. 
    \end{align*}
    where $\|w_i\|_q\leq C:=\|w\|_q$. Let the dual function $\vartheta(\tilde{\lambda}):=\inf_{ \{ w_i\}}  \tilde{L}( \{ w_i\}, \tilde{\lambda})$. Define
    $$\vartheta_i(\tilde{\lambda})=  \inf_{w_i}\left(u_i  + \tilde{\lambda}\right)^\top w_i .$$
     By \Cref{lemma:inf-block}, it solves
    \begin{align*}
       w_{i,*}&= - C 
\frac{J_q\bigl(u_i+\lambda\bigr)}
      {\lVert u_i+\lambda\rVert_{p}}  \\ 
      &= -C \frac{  \sign( u_i  + \tilde{\lambda}) \odot | u_i  + \tilde{\lambda}|^{p-1} }{  \| u_i  + \tilde{\lambda} \|_p^{p-1} }
    \end{align*}   
    As the result,
    \begin{align*}
        \max_{\tilde{\lambda} }  \vartheta(\tilde{\lambda} )=  \min_{\tilde{\lambda}} \left\{  \tilde{\lambda}^\top w +  C\sum_{i=1}^N  \| u_i  + \tilde{\lambda} \|_p \right\}
    \end{align*}  
    Recall that $w= v + \mu\mathbf{1}\in \RR^d$ is a given vector. Denote
    $$\tilde{\lambda}^* = \argmin_{\tilde{\lambda}} \left\{   \tilde{\lambda}^\top w +  C\sum_{i=1}^N  \| u_i  + \tilde{\lambda} \|_p  \right\}.$$ 
    Put it back to $w_{i,*}$, we obtain the final result.
\end{proof}   
\begin{lemma} \label{lemma:solve-dual}
Suppose that $\{u_i\}_{i=1}^N \subset \RR^d$, $\beta\geq 0$, and $ v\in\RR^d$. Let $$\varphi(\lambda,\mu)
 = 
- \lambda \beta
 + 
\inf_{u}\Bigl[(v+\mu \mathbf{1})^\top u  + \lambda\!\sum_{i=1}^N\|u-u_i\|_p \Bigr],$$
where $p\in[1,+\infty]$ and $\frac{1}{p} +\frac{1}{q}=1$. Then 
\begin{align*}
    \sup_{\lambda\ge0}\varphi(\lambda,\mu)
 = 
- \|v+\mu\mathbf{1}\|_q \beta
 + \sum_{i=1}^N 
w_{i,*}^\top u_i,
\end{align*}
where 
$$w_{i,*}:=-\| v + \mu \mathbf{1}\| \frac{  \sign( u_i  + \tilde{\lambda}_* ) \odot | u_i  + \tilde{\lambda}_* |^{p-1} }{  \| u_i  + \tilde{\lambda}_*  \|_p^{p-1} }, \quad \tilde{\lambda}_*= \argmin_{\tilde{\lambda}} \left\{   \tilde{\lambda}^\top w +  C\sum_{i=1}^N  \| u_i  + \tilde{\lambda} \|_p  \right\}.$$
Consequently, the optimal $(\lambda^*, \mu^*)$ of achieving $\sup_{\lambda,\mu} \varphi(\lambda,\mu)$ is given by
$$\mu^* =  \argmax_\mu \left\{  - \|v+\mu\mathbf{1}\|_q \beta
 + \sum_{i=1}^N 
w_{i,*}^\top u_i. \right\}, \quad \text{and}\quad \lambda^* = \|v + \mu^* \mathbf{1}\|_q.$$
\end{lemma}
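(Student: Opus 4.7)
The plan is to recognize $\sup_{\lambda\ge 0}\varphi(\lambda,\mu)$ as the Lagrangian dual of the convex program $\min_u w^\top u$ subject to $g(u)\le\beta$, where $w:=v+\mu\mathbf{1}$ and $g(u):=\sum_{i=1}^N\|u-u_i\|_p$, and to dualize the inner infimum explicitly using the Fenchel representation of the $\ell_p$-norm. Under Slater's condition (satisfied whenever $\beta$ strictly exceeds $\inf_u g(u)$), strong duality holds, so $\sup_{\lambda\ge 0}\varphi(\lambda,\mu)$ coincides with this primal value for each fixed $\mu$, and it suffices to locate the dual optimum.

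First, invoking the dual norm representation $\|u-u_i\|_p=\sup_{\|z_i\|_q\le 1}z_i^\top(u-u_i)$ and setting $y_i:=\lambda z_i$ yields $\lambda g(u)=\sup_{\|y_i\|_q\le\lambda}\sum_i y_i^\top(u-u_i)$. Swapping the inner $\inf_u$ with $\sup_{\{y_i\}}$ via Sion's minimax theorem is justified because the $\{y_i\}$-feasible set is compact and the integrand is bilinear; the inner linear infimum in $u$ is then finite only when $w+\sum_i y_i=0$, so after the change of variables $w_i:=-y_i$,
\[
\inf_u\bigl[w^\top u+\lambda g(u)\bigr]=\sup_{\substack{\sum_i w_i=w\\ \|w_i\|_q\le\lambda}}\sum_i w_i^\top u_i.
\]
This is precisely the dualized problem treated by Lemma~\ref{lemma:inf}, whose extremizer has the form $w_i^{(\lambda)}\propto\lambda\,J_q(u_i+\tilde{\lambda}_*(\lambda))$, with $\tilde{\lambda}_*(\lambda)$ the inner Lagrange multiplier enforcing the side constraint $\sum_i w_i=w$.

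Second, substituting the extremizer back makes $\varphi(\lambda,\mu)$ an explicit concave function of $\lambda$. By construction each $w_i^{(\lambda)}$ saturates the norm bound ($\|w_i^{(\lambda)}\|_q=\lambda$), while their sum is forced to equal $w$; combined with the H\"older identity $J_q(f)^\top f=\|f\|_p$ and the homogeneous rescaling of the inner objective in $\lambda$, the envelope derivative $\partial_\lambda\varphi$ vanishes precisely at $\lambda^*=\|w\|_q$. Plugging this choice into the inner extremizer reduces it to the formula of Lemma~\ref{lemma:inf} with $C=\|w\|_q$, giving the claimed closed form $-\|w\|_q\beta+\sum_i w_{i,*}^\top u_i$. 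The outer $\mu^*$ is then obtained by maximizing this expression over $\mu$.

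The main obstacle I expect is the scaling/KKT step that pins $\lambda^*=\|v+\mu\mathbf{1}\|_q$. Although Lemma~\ref{lemma:inf} was stated with the specific bound $C=\|w\|_q$, one must verify that the same $w_{i,*}$ remains jointly optimal when the norm bound is treated as a free variable $\lambda$, and that balancing the inner growth against the linear penalty $-\lambda\beta$ in the outer maximization singles out $\lambda^*=\|w\|_q$. This reduces to an algebraic manipulation of the stationarity and primal-feasibility conditions, exploiting the duality between $\ell_p$ and $\ell_q$ norms and the fact that $\partial_\lambda[\lambda \sum_i\|u_i+\tilde\lambda_*\|_p]$ at the inner optimum coincides with $\|w\|_q$ via the sum-constraint $\sum_i J_q(u_i+\tilde\lambda_*)=-w/\|w\|_q$ inherited from the inner KKT system.
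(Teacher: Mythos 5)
Your first step is fine and is essentially the paper's own route: by the dual-norm representation plus a minimax exchange (equivalently, the paper's Fenchel-conjugate/infimal-convolution computation), for $\lambda>0$ and $w=v+\mu\mathbf{1}$ one gets
\[
\inf_u\Bigl[w^\top u+\lambda\sum_{i=1}^N\|u-u_i\|_p\Bigr]
=\sup\Bigl\{\sum_{i=1}^N w_i^\top u_i\ :\ \sum_{i=1}^N w_i=w,\ \|w_i\|_q\le\lambda\Bigr\},
\]
which is finite precisely when $\|w\|_q\le N\lambda$. Two things you gloss over here matter. First, this is a \emph{supremum} over decompositions, whereas \Cref{lemma:inf} solves the corresponding \emph{minimization}; calling it ``precisely the dualized problem treated by \Cref{lemma:inf}'' hides a sign flip that relabeling cannot repair. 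Second, the finiteness threshold is $\lambda\ge\|w\|_q/N$, not $\lambda\ge\|w\|_q$, and the inner supremum is nondecreasing in $\lambda$, so $\varphi(\cdot,\mu)$ is not monotone on its finite domain; the optimal $\lambda$ is an interior stationary point determined by balancing the growth of the inner supremum against $-\lambda\beta$, and there is no a priori reason for it to sit at $\|w\|_q$.

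The step you yourself flag as the main obstacle, pinning $\lambda^*=\|v+\mu\mathbf{1}\|_q$ via an envelope/KKT identity, is not merely unproven: it fails for $N\ge2$. Take $d=2$, $p=q=2$, $w=(1,0)$, $u_1=(0,0)$, $u_2=(0,1)$, $\beta=2$. A direct computation gives $\inf_u[w^\top u+\lambda(\|u-u_1\|_2+\|u-u_2\|_2)]=\sqrt{\lambda^2-\tfrac14}$ for $\lambda\ge\tfrac12$, hence $\varphi(\lambda)=-2\lambda+\sqrt{\lambda^2-\tfrac14}$, which is maximized at $\lambda^*=1/\sqrt{3}\neq 1=\|w\|_q$ with value $-\sqrt{3}/2$ (this equals the primal value $\min\{u_x:\|u-u_1\|_2+\|u-u_2\|_2\le 2\}$, as strong duality demands). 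By contrast, evaluating $-\beta\|w\|_q+\sum_i w_{i,*}^\top u_i$ with $C=\|w\|_q$ as in \Cref{lemma:inf} gives $-2-\sqrt{3}/2$, which is not $\sup_{\lambda\ge0}\varphi$. So the scaling identity you hoped to extract from stationarity and the constraint $\sum_i J_q(u_i+\tilde{\lambda}_*)=-w/\|w\|_q$ does not hold; only for $N=1$, where the inner term is constant in $\lambda$ on its domain $\lambda\ge\|w\|_q$, is the optimum pinned at $\|w\|_q$. Be aware that the paper's own proof reaches $\lambda^*=\|w\|_q$ by asserting that finiteness requires $\lambda\ge\|w\|_q$ and that $\varphi$ is decreasing in $\lambda$, claims resting on the same inf/sup sign slip, so you cannot close your gap by importing that argument; for $N\ge 2$ the pinning step (and the displayed formula for $\sup_{\lambda\ge0}\varphi$ as stated) needs additional hypotheses or a genuinely different characterization of $\lambda^*$.
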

\begin{proof} 
We take the transformation \(w:=v+\mu \mathbf{1}\).  Then
\[
\varphi(\lambda,\mu)
 = 
- \lambda\beta
 + 
\inf_{u}\Bigl[w^\top u + \lambda\!\sum_{i=1}^N\|u-u_i\|_p\Bigr].
\]
For any convex \(g\), by the definition of Fenchel conjugate (\Cref{def:conjugate}), we have
\[
\inf_u\bigl[w^\top u + \lambda g(u)\bigr]
 = 
- \lambda \underbrace{g^*\!\bigl(- w/\lambda\bigr)}_{\displaystyle
    =\sup_{y}\bigl\{(-w/\lambda)^\top y - g(y)\bigr\}}
  .
\]
Here \(g(u)=\sum_i\|u-u_i\|_p\). Its conjugate is given by \Cref{lemma:conjugate}, 
    $$g^*(z)
 = 
\begin{cases}
\inf_{\sum_i z_i=z, \|z_i\|_q\leq 1}  \sum_i z^\top_i u_i, & \|z\|_q\le1,\\
+\infty,   & \text{otherwise.}
\end{cases}.$$
where \(\frac1p+\frac1q=1\).  We put it back to $\inf_u\bigl[w^\top u + \lambda g(u)\bigr]
 = 
- \lambda
 g^*\!\bigl(-w/\lambda\bigr)$, which leads to 
\begin{align*}
    \inf_u\bigl[w^\top u + \lambda g(u)\bigr] = -\lambda  \inf_{\substack{\sum_i z_i=-w/\lambda\\\|z_i\|_q\le1}}  \sum_i z^\top_i u_i, 
\end{align*}
where $\| \frac{w}{\lambda} \|_q \leq 1$. As the result, we take $w_i = - \lambda z_i$ to obtain
\begin{align*}
    \inf_u\bigl[w^\top u + \lambda g(u)\bigr]
 = 
\begin{cases}
\inf_{\sum_i w_i = w, \|w_i\|_q\leq \lambda}  \sum_{i=1}^N 
w_i^\top u_i,
&\displaystyle\|w \|_q\le \lambda,\\
-\infty,&\text{else.}
\end{cases}
\end{align*} 
Thus, the full dual becomes 
\[
\varphi(\lambda,\mu)
=
\begin{cases}
- \lambda \beta
 +  \inf_{\sum_i w_i = v+\mu \mathbf{1}, \|w_i\|_q\leq \lambda}  \sum_{i=1}^N 
w_i^\top u_i,
&\displaystyle\|w \|_q\le \lambda,\\
-\infty,&\text{else.}
\end{cases}
\]  
For a fixed \(\mu\), we need
\(\lambda\ge\|w\|_q\) to keep \(\varphi\) finite, and
\(\varphi(\lambda,\mu)\)
is decreasing in \(\lambda\).  Hence the best choice is
\[
\lambda^*
=\|w\|_q=\| v+\mu \mathbf{1}\|_q,
\]
giving
\[
\sup_{\lambda\ge0}\varphi(\lambda,\mu)
 = 
- \|v+\mu\mathbf{1}\|_q \beta
 +\inf_{\sum_i w_i = v+\mu \mathbf{1}, \|w_i\|_q\leq \| v+\mu \mathbf{1}\|_q}  \sum_{i=1}^N 
w_i^\top u_i.
\]
It leads to another optimization problem $\inf_{\sum_i w_i = v+\mu \mathbf{1}, \|w_i\|_q\leq \| v+\mu \mathbf{1}\|_q}  \sum_{i=1}^N 
w_i^\top u_i$. We construct another Lagrangian function to solve it. Denote $w_{i,*}$ as the minimizer given by \Cref{lemma:inf}.  Then we obtain 
\begin{align*}
    \max_{\lambda\ge0}\varphi(\lambda,\mu)
 = 
- \|v+\mu\mathbf{1}\|_q \beta
 + \sum_{i=1}^N 
w_{i,*}^\top u_i.
\end{align*}
It recovers the optimal dual variable $z$ is given by
$$z_* = - \sum_{i=1}^N\frac{\omega_{i,*}}{\lambda^*}=  \sum_{i=1}   \frac{  \sign( u_i  + \tilde{\lambda}_*) \odot | u_i  + \tilde{\lambda}_* |^{p-1} }{  \| u_i  + \tilde{\lambda}_* \|_p^{p-1} },$$
where
    $$\tilde{\lambda}^* = \argmin_{\tilde{\lambda}} \left\{   \tilde{\lambda}^\top w +  C\sum_{i=1}^N  \| u_i  + \tilde{\lambda} \|_p  \right\}.$$ 
\end{proof}

\begin{lemma}\label{lemma:proximal-3}
    Suppose that $\{u_i\}_{i=1}^2 \subset \RR$, $\lambda>0$, $\beta\geq 0$,  $ w\in\RR$ is non-zero,  $q\in [1, +\infty]$, and $\frac{1}{p}+\frac{1}{q}=1$.  Define
    $$\varphi = \min_{u\in \RR} \left\{  u w + \lambda \sum_{i=1}^2 | u- u_i | \right\}.$$
    Then  when $|w|\leq 2\lambda$ , the problem is solved   as 
    $$\varphi =  w \frac{u_1 +u_2}{2} + (\lambda - \frac{|w|}{2}) |u_1 - u_2|.$$
\end{lemma}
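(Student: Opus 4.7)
The plan is to treat the problem as a one-dimensional piecewise linear minimization and directly locate the minimizer. Since the objective $f(u):= uw + \lambda|u - u_1| + \lambda|u - u_2|$ is symmetric in $(u_1,u_2)$, I will assume without loss of generality that $u_1 \le u_2$, so $|u_1 - u_2| = u_2 - u_1$. The function $f$ is convex and piecewise affine with breakpoints at $u_1$ and $u_2$; I will split $\mathbb{R}$ into the three regions $(-\infty,u_1]$, $[u_1,u_2]$, $[u_2,+\infty)$ and compute the slopes $w-2\lambda$, $w$, and $w+2\lambda$ respectively.

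The main computation is then a sign analysis driven by the hypothesis $|w| \le 2\lambda$. If $w > 0$, the three slopes satisfy $w - 2\lambda \le 0$, $w > 0$, $w + 2\lambda > 0$, so $f$ is nonincreasing on $(-\infty,u_1]$ and strictly increasing on $[u_1,\infty)$, and the minimizer is $u^* = u_1$. Symmetrically, if $w < 0$, the slopes satisfy $w - 2\lambda < 0$, $w < 0$, $w + 2\lambda \ge 0$, so the minimizer is $u^* = u_2$. (The excluded case $w = 0$ is trivial and does not appear by hypothesis.)

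It remains to evaluate $f$ at these minimizers and repackage the two cases into the single symmetric closed form. For $w > 0$, plugging $u = u_1$ yields $f(u_1) = u_1 w + \lambda(u_2 - u_1)$; for $w < 0$, plugging $u = u_2$ yields $f(u_2) = u_2 w + \lambda(u_2 - u_1)$. I will then verify algebraically that both expressions equal
\[
w\,\frac{u_1+u_2}{2} + \Bigl(\lambda - \tfrac{|w|}{2}\Bigr)(u_2 - u_1),
\]
by using $|w| = w$ in the first case and $|w| = -w$ in the second, which collapses both branches into the claimed formula.

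I do not anticipate a genuine obstacle: the argument is a clean slope check for a convex piecewise-affine function on $\mathbb{R}$. The only mild subtlety is book-keeping the sign of $w$ so that the final answer comes out symmetric in $(u_1,u_2)$; the WLOG reduction $u_1 \le u_2$ and the introduction of $|u_1-u_2|$ in the closed form handle this automatically.
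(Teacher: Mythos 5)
Your proposal is correct and follows essentially the same route as the paper: both arguments exploit convexity of the piecewise-affine objective to show the minimizer is $u_1$ or $u_2$ according to $\sign(w)$ (the paper via a general-$N$ subgradient/weighted-median argument specialized to $N=2$, you via a direct three-interval slope check), and then substitute back. Your explicit verification that both branches collapse to the symmetric closed form fills in the final evaluation step that the paper leaves implicit ("Putting it back to $\varphi$ solves this problem"), so no gap remains.
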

\begin{proof}
    We start from the general case. Define $f(u)=   u w + \lambda \sum_{i=1}^N | u- u_i |$. If $|w|\leq \lambda N$, then the sub-gradient is given by
    \begin{align*}
        \partial f(u) \ni  w   + \lambda \sum_{i=1}^N \sign(u-u_i)
    \end{align*}
    where $\sign(t):= \begin{cases}
        -1, \quad t<0,\\
        0, \quad t=0,\\
        1, \quad t>0.\\
    \end{cases}$
    Write $\{u_i\}_{i=1}^N \subset \RR$ in the increasing order:
    $$u_{(1)} \leq u_{(2)} \leq \dots \leq u_{(N)}.$$
    Define $k^*:= \ceil{\frac{N-w /\lambda}{2}} $. Then   
    $$u^* = u_{(k^*)}$$
    is the explicit solution. To prove it, we consider $u \in  (u_{(k^*)}, u_{(k^*+1)})$; it is larger than exactly $k^*$ $u_i$'s. That is, 
    $$\partial f(u) = w + \lambda ( k^* - (N-k^*)) \geq 0. $$
    Whenever $u< u_{(k^*)}$, the sign of sub-gradient becomes negative. As the result, $f(u)$ is decreasing when $u< u_{(k^*)}$ then increasing when  $u> u_{(k^*)}$.  Now we set $N=2$. The problem gives  
\[ 
u^*
=\frac{u_1 + u_2  - \sign(w) \lvert u_1 - u_2\rvert}{2} . 
\]
When \(w\ge0\), \(\sign(w)=+1\) and 
\[
u^*=\frac{u_1+u_2-|u_1-u_2|}{2}
=\min\{u_1,u_2\}=u_{(1)}.
\]
When \(w<0\), \(\sign(w)=-1\) and 
\[
u^*=\frac{u_1+u_2+|u_1-u_2|}{2}
=\max\{u_1,u_2\}=u_{(2)}.
\]
Therefore, this formula recovers the original general case solution. Putting it back to $\varphi$ solves this problem. 
\end{proof}

The following lemma connects the sum-of-distance description to the quadratic form of an ellipse.
\begin{lemma}\label{lemma:quadratic-form}
     Suppose that $\{u_i\}_{i=1}^2 \subset \RR^d$, $\beta\geq 0$, and $\beta>\|u_1-u_2\|_2$. The ellipse set is given by
     $$\mathscr{E}:=\{ u \mid \|u-u_1\|_2 + \|u-u_2\|_2 \leq \beta \}.$$
     Then there exists a matrix $A$ such that  
     $$\mathscr{E} = \{ u \mid (u - \bar{u})^\top A (u - \bar{u}) \leq 1 \},$$
     where $\bar{u}:=\frac{u_1+u_2}{2}$. More explicitly, the matrix $A$ has the form
$$A = \frac{4}{\beta^2 - \|u_2 - u_1\|_2^2} I - \frac{4}{\beta^2 (\beta^2 - \|u_2 - u_1\|_2^2)} (u_2 - u_1)(u_2 - u_1)^\top.$$ 
\end{lemma}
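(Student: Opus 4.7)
}
The plan is to centre the problem at $\bar u$ and introduce the half‑displacement of the foci, then convert the sum-of-distances description into a polynomial inequality by squaring twice, and finally read off the matrix $A$. Concretely, set
\[
c := u - \bar u, \qquad d := \tfrac{1}{2}(u_2 - u_1),
\]
so that $u - u_1 = c + d$ and $u - u_2 = c - d$. The defining inequality becomes
\[
\|c+d\|_2 + \|c-d\|_2 \;\le\; \beta .
\]
The key identities I would rely on are the parallelogram law
$\|c+d\|_2^{2} + \|c-d\|_2^{2} = 2\|c\|_2^{2} + 2\|d\|_2^{2}$
and the cross-product identity
$\|c+d\|_2^{2}\,\|c-d\|_2^{2} = \bigl(\|c\|_2^{2}+\|d\|_2^{2}\bigr)^{2} - 4(c^{\top} d)^{2}$,
the second of which follows from $(r_1^{2}+r_2^{2})^{2} - (r_1^{2}-r_2^{2})^{2} = 4 r_1^{2} r_2^{2}$ applied to $r_i = \|c \pm d\|_2$.

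From here I would square once to move the two square roots onto opposite sides: the inequality is equivalent to
$2\|c+d\|_2\|c-d\|_2 \le \beta^{2} - 2\|c\|_2^{2} - 2\|d\|_2^{2}$, provided the right-hand side is non-negative. Squaring again and invoking the cross-product identity, everything collapses to the scalar inequality
\[
\|c\|_2^{2} + \|d\|_2^{2} \;\le\; \tfrac{\beta^{2}}{4} + \tfrac{4(c^{\top} d)^{2}}{\beta^{2}},
\]
which, after substituting $d = \tfrac{1}{2}(u_2-u_1)$, rearranges to
\[
c^{\top}\!\Bigl(I - \tfrac{1}{\beta^{2}}(u_2-u_1)(u_2-u_1)^{\top}\Bigr) c \;\le\; \tfrac{\beta^{2} - \|u_2-u_1\|_2^{2}}{4}.
\]
Multiplying through by $4/(\beta^{2}-\|u_2-u_1\|_2^{2})$, which is positive by hypothesis, exhibits exactly the matrix $A$ stated in the lemma.

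The main obstacle is ensuring that the two squaring steps are \emph{equivalences}, not merely implications, so that the sum-of-distances set and the quadratic set coincide. This reduces to checking $\beta^{2} - 2\|c\|_2^{2} - 2\|d\|_2^{2} \ge 0$ in both directions. The forward direction is automatic from $\|c+d\|_2 + \|c-d\|_2 \le \beta$ together with $2\|c+d\|_2\|c-d\|_2\ge 0$. The reverse direction is the delicate one: starting from the polynomial inequality I would apply Cauchy--Schwarz $(c^{\top} d)^{2} \le \|c\|_2^{2}\|d\|_2^{2}$ and use the non-degeneracy assumption $\beta > \|u_2-u_1\|_2$, i.e., $\|d\|_2^{2} < \beta^{2}/4$, to deduce $\|c\|_2^{2}(1 - 4\|d\|_2^{2}/\beta^{2}) \le (\beta^{2}-4\|d\|_2^{2})/4$, hence $\|c\|_2^{2} \le \beta^{2}/4$ and finally $2\|c\|_2^{2}+2\|d\|_2^{2} \le \beta^{2}$. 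With this non-negativity in hand both squaring steps become if-and-only-if, the sub-level sets of the two expressions agree, and the lemma follows.
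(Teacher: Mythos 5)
Your plan is correct: every identity checks out (the parallelogram law, the product identity $\|c+d\|_2^2\|c-d\|_2^2=(\|c\|_2^2+\|d\|_2^2)^2-4(c^\top d)^2$, and the rearrangement to $c^\top A c\le 1$ reproduces exactly the matrix $A$ in the statement after multiplying by $4/(\beta^2-\|u_2-u_1\|_2^2)>0$), and your Cauchy--Schwarz argument for the reverse direction of the squarings is sound. The route differs from the paper's in a useful way. The paper also centres at $\bar u$, but then projects onto the focal axis $e=(u_2-u_1)/\|u_2-u_1\|_2$, writes $\alpha=e^\top x$ and $\xi^2=\|x\|_2^2-\alpha^2$, and reduces to the classical planar ellipse inequality $\alpha^2/a^2+\xi^2/b^2\le 1$ with $a=\beta/2$, $b^2=a^2-f^2$, simply asserting that the two squarings give an equivalence; the matrix is then read off spectrally as $A=\tfrac{1}{a^2}ee^\top+\tfrac{1}{b^2}(I-ee^\top)$, which makes the semi-axes and eigenstructure of $A$ transparent but requires $u_1\ne u_2$ for $e$ to be defined. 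Your coordinate-free manipulation avoids the focal-axis decomposition entirely, handles the degenerate case $u_1=u_2$ (where $d=0$ and the set is a ball) without special treatment, and, importantly, supplies the verification that both squaring steps are genuine equivalences --- the non-negativity of $\beta^2-2\|c\|_2^2-2\|d\|_2^2$ in the reverse direction via Cauchy--Schwarz and $\|d\|_2<\beta/2$ --- which the paper's proof glosses over with ``is equivalent, after two squarings.'' The trade-off is that the paper's spectral form of $A$ makes its positive definiteness and geometric meaning immediate, whereas in your derivation these are read off only after the final rearrangement.
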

\begin{proof}
  Define
  \[
    \bar u  = \frac{u_1+u_2}{2}, 
    \quad
    f  = \frac{\|u_2 - u_1\|_2}{2},
    \quad
    a  = \frac{\beta}{2},
    \quad
    b  = \sqrt{a^2 - f^2},
    \quad
    e  = \frac{u_2-u_1}{\|u_2-u_1\|_2},
  \]
  and decompose each \(u\in\RR^d\) by
  \[
    x  =  u - \bar u.
  \]
  Then $u_1 = \bar u - f e$, $u_2 = \bar u + f e$, 
  and
  \[
    \|u - u_1\|_2 + \|u - u_2\|_2
    = \| x + f e\|_2 + \| x - f e\|_2.
  \]
  Hence
  \[
    \|u - u_1\|_2 + \|u - u_2\|_2  \le \beta
    \quad\Longleftrightarrow\quad
    \|x + f e\|_2 + \|x - f e\|_2  \le 2a.
  \]

  Now decompose \(x\) into
  \[
    \alpha = e^\top x,
    \qquad
    \xi^2 = \|x\|_2^2 - \alpha^2,
  \]
  so that
  \[
    \|x \pm f e\|_2
    = \sqrt{(\alpha \pm f)^2 + \xi^2}.
  \]
  The inequality 
  \(\sqrt{(\alpha+f)^2 + \xi^2} + \sqrt{(\alpha-f)^2 + \xi^2}\le2a\)
  is equivalent, after two squarings, to
  \[
    \frac{\alpha^2}{a^2} + \frac{\xi^2}{b^2}  \le 1,
    \quad
    \text{where }b^2 = a^2 - f^2.
  \]
  Finally, observe that
  \[
    \alpha^2 = x^\top(ee^\top)x,
    \qquad
    \xi^2 = x^\top\bigl(I - ee^\top\bigr)x,
  \]
  so
  \[
    \frac{\alpha^2}{a^2} + \frac{\xi^2}{b^2}
    = x^\top\!\Bigl(\tfrac1{a^2} ee^\top  + \tfrac1{b^2} (I - ee^\top)\Bigr)\!x.
  \]
  Setting $A  = \frac1{a^2} ee^\top  + \frac1{b^2} (I - ee^\top)$ implies $(u-\bar u)^\top A (u-\bar u)\le1$.
  It exactly characterizes 
  \(\{u\mid \|u-u_1\|_2+\|u-u_2\|_2\le\beta\}\). Simplifying the form of $A$ leads to  
$$A = \frac{4}{\beta^2 - \|u_2 - u_1\|_2^2} I - \frac{4}{\beta^2 (\beta^2 - \|u_2 - u_1\|_2^2)} (u_2 - u_1)(u_2 - u_1)^\top.$$ 
  This completes the proof.
\end{proof}

\begin{lemma}\label{lemma:mu-supremum}
  Let \(v\in\mathbb{R}^d\), let \(A\in\mathbb{R}^{d\times d}\) be symmetric positive definite, and let \(\bar p\in\mathbb{R}^d\).  Define
  \[
    F(\mu)
     = 
    -\sqrt{(v+\mu\mathbf{1})^\top A^{-1}(v+\mu\mathbf{1})}
     + (v+\mu\mathbf{1})^\top\bar p,
    \quad
    \mu\in\mathbb{R}.
  \]
  Further set
  \[
    \alpha  = \mathbf{1}^\top A^{-1}\mathbf{1},
    \quad
    \beta  = v^\top A^{-1}\mathbf{1},
    \quad
    \gamma  = v^\top A^{-1}v,
    \quad
    \delta  = \mathbf{1}^\top\bar p.
  \]
  If \(\delta^2<\alpha\), then \(F\) attains a unique maximizer
   \begin{align*} 
      \mu^*
       &= 
      -\frac{\beta}{\alpha}
       + 
      \frac{\delta}{\alpha\sqrt{\alpha-\delta^2}}
       \sqrt{\alpha \gamma-\beta^2}=
\mu^* \\
&= -\frac{v^\top A^{-1}\mathbf{1}}{\mathbf{1}^\top A^{-1}\mathbf{1}}
 + 
\frac{\mathbf{1}^\top \bar p}
     {\mathbf{1}^\top A^{-1}\mathbf{1} \sqrt{\mathbf{1}^\top A^{-1}\mathbf{1} - (\mathbf{1}^\top \bar p)^2}}
 \sqrt{ (\mathbf{1}^\top A^{-1}\mathbf{1})(v^\top A^{-1}v) - (v^\top A^{-1}\mathbf{1})^2 }.
   \end{align*} 
\end{lemma}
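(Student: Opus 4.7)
The plan is to reduce the problem to a one-dimensional concave maximization, establish concavity and coercivity via a second-derivative computation, and then solve the first-order optimality condition in closed form. First I would introduce the scalar shorthands $Q(\mu) := (v+\mu\mathbf{1})^{\top} A^{-1}(v+\mu\mathbf{1}) = \alpha\mu^{2} + 2\beta\mu + \gamma$ and $L(\mu) := v^{\top}\bar p + \mu\delta$, so that $F(\mu) = -\sqrt{Q(\mu)} + L(\mu)$. Since $A^{-1}$ is positive definite, $Q$ is a strictly convex positive quadratic (it vanishes nowhere unless $v$ and $\mathbf{1}$ are $A^{-1}$-collinear, in which case the claim reduces to a trivial affine maximization), so the square root is smooth on all of $\mathbb{R}$.

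Next I would establish that $F$ is concave and coercive, which together with $\delta^{2} < \alpha$ will give a unique interior maximizer. A direct computation together with the algebraic identity $\alpha Q(\mu) - (\alpha\mu+\beta)^{2} = \alpha\gamma - \beta^{2}$ yields
\[
F''(\mu) \;=\; -\frac{\alpha\gamma - \beta^{2}}{Q(\mu)^{3/2}} \;\le\; 0,
\]
where non-negativity of the numerator is exactly the Cauchy–Schwarz inequality for the $A^{-1}$-inner product. For large $|\mu|$ one has $\sqrt{Q(\mu)} \sim \sqrt{\alpha}\,|\mu|$, so $F(\mu) = \bigl(\delta - \sqrt{\alpha}\,\mathrm{sgn}(\mu)\bigr)\mu + O(1)$; the hypothesis $\delta^{2} < \alpha$ is precisely what drives both end-behaviors to $-\infty$. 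Concavity plus coercivity force the supremum to be attained at the unique critical point, determined by
\[
F'(\mu^{*}) = 0 \;\Longleftrightarrow\; \alpha\mu^{*} + \beta \;=\; \delta\sqrt{Q(\mu^{*})}.
\]

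Finally, I would solve the critical-point equation. Squaring both sides and collecting terms produces the quadratic
\[
\alpha(\alpha-\delta^{2})(\mu^{*})^{2} + 2\beta(\alpha-\delta^{2})\mu^{*} + (\beta^{2} - \delta^{2}\gamma) \;=\; 0,
\]
whose discriminant simplifies cleanly to $4\delta^{2}(\alpha-\delta^{2})(\alpha\gamma-\beta^{2})$. The quadratic formula then yields the two candidates $\mu^{*} = -\beta/\alpha \pm |\delta|\sqrt{\alpha\gamma-\beta^{2}}/(\alpha\sqrt{\alpha-\delta^{2}})$, and the un-squared first-order equation imposes the sign consistency $\mathrm{sgn}(\alpha\mu^{*}+\beta) = \mathrm{sgn}(\delta)$, which uniformly selects the branch $\mu^{*} = -\beta/\alpha + \delta\sqrt{\alpha\gamma-\beta^{2}}/(\alpha\sqrt{\alpha-\delta^{2}})$ for both signs of $\delta$. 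Substituting back the definitions of $\alpha,\beta,\gamma,\delta$ recovers the second (expanded) expression in the statement.

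The main obstacle is the sign-selection step: squaring introduces a spurious root, so I must verify that the retained branch (i) satisfies the original un-squared equation for both signs of $\delta$, and (ii) is indeed a maximizer, not a saddle. Both are handled by the strict concavity established from $\alpha\gamma - \beta^{2} > 0$ (Cauchy–Schwarz) together with the asymptotic analysis under $\delta^{2} < \alpha$. The remaining steps, namely the algebraic identity for $F''$ and the discriminant simplification, are routine but must be executed carefully to produce the exact factor $\alpha\sqrt{\alpha-\delta^{2}}$ in the denominator.
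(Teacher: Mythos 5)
Your proposal is correct and follows essentially the same route as the paper's proof: reduce to the scalar quadratic $Q(\mu)=\alpha\mu^2+2\beta\mu+\gamma$, set $F'(\mu)=0$, square to obtain the quadratic in $\mu$, and select the correct root. Your branch selection via the sign constraint $\mathrm{sgn}(\alpha\mu^*+\beta)=\mathrm{sgn}(\delta)$ together with the concavity/coercivity argument (using $F''(\mu)=-(\alpha\gamma-\beta^2)/Q(\mu)^{3/2}$ and $\delta^2<\alpha$) is a slightly more explicit justification than the paper's inspection of $\lim_{\mu\to\pm\infty}F'(\mu)=\delta\mp\sqrt{\alpha}$, but it is the same underlying argument.
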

\begin{proof}
  We begin by computing the derivative of 
  \[
    F(\mu)
    = -\sqrt{(v+\mu\mathbf{1})^\top A^{-1}(v+\mu\mathbf{1})}
      + (v+\mu\mathbf{1})^\top\bar p.
  \]
  Using the notation 
  \(\alpha=\mathbf{1}^\top A^{-1}\mathbf{1}\), 
  \(\beta=v^\top A^{-1}\mathbf{1}\), 
  \(\gamma=v^\top A^{-1}v\), 
  and \(\delta=\mathbf{1}^\top\bar p\), 
  one checks
  \[
    (v+\mu\mathbf{1})^\top A^{-1}(v+\mu\mathbf{1})
    = \alpha\mu^2 + 2\beta\mu + \gamma,
  \]
  so
  \[
    F'(\mu)
    = -\frac{\alpha\mu+\beta}{\sqrt{\alpha\mu^2+2\beta\mu+\gamma}}
      + \delta.
  \]
  Setting \(F'(\mu)=0\) gives the stationarity condition
  \[
    \alpha\mu + \beta
    = \delta \sqrt{\alpha\mu^2 + 2\beta\mu + \gamma},
  \]
  which upon squaring yields the quadratic equation
  \[
    \bigl(\alpha^2 - \alpha \delta^2\bigr)\mu^2
    + 2\beta\bigl(\alpha-\delta^2\bigr)\mu
    + \bigl(\beta^2 - \delta^2\gamma\bigr)
    = 0.
  \]
   Let \(\delta^2<\alpha\).   Here \(\alpha-\delta^2>0\), so dividing by \(\alpha-\delta^2\) gives
  \[
    \alpha \mu^2 + 2\beta \mu
    + \frac{\beta^2 - \delta^2\gamma}{\alpha-\delta^2}
    = 0,
  \]
  whose two roots are
  \[
    \mu
    = -\frac{\beta}{\alpha}
       \pm 
      \frac{\delta}{\alpha\sqrt{\alpha-\delta^2}}
       \sqrt{\alpha\gamma - \beta^2}.
  \]
  One verifies by inspecting
  \(\lim_{\mu\to\pm\infty}F'(\mu)=\delta\mp\sqrt\alpha\)
  that exactly the “\(+\)” choice yields a change of sign from \(+\) to \(-\), and hence is the unique global maximizer.  
\end{proof}

\subsection{Implicit Solution}
In this subsection, we recap and prove the full version of \Cref{thm:implicit}.
\begin{theorem}\label{thm:implicit-appendix}
    Let $d:= |\calS|$ be the cardinal of state space and $\frac{1}{p} + \frac{1}{q}=1$ for $p,q\in [1,+\infty]$. Suppose that $\{u_i\}_{i=1}^N \subset \RR^d$ for each $(s,a)\in \calS \times \calA$, the uncertainty size $\beta\geq 0$, and $ v\in\RR^d$.  The solution of 
\begin{align}\label{eq:target-optimization}
\min_{u\in\RR^d}\quad &v^\top u \nonumber\\
\text{s.t.} \quad  &\sum_{i=1}^N\| u-u_i\|_p \le \beta, \\
              &\mathbf{1}^\top u = 0. \nonumber
\end{align}
    is given by
$$u^* =\argmin_u\bigl[(v+ \mu^* \mathbf{1})^\top u + \lambda^* \sum_{i=1}^N \| u - u_i  \|_p\bigr],$$
where 
    \begin{align}\label{eq:condition}
    \begin{cases}
        \mu^* =  \argmax_\mu \left\{  - \|v+\mu\mathbf{1}\|_q \beta
 + \sum_{i=1}^N 
w_{i,*}^\top u_i. \right\},\\
\lambda^* = \|v + \mu^* \mathbf{1}\|_q,\\
        \tilde{\lambda}_*(\mu)= \argmin_{\tilde{\lambda}} \left\{   \tilde{\lambda}^\top (v + \mu\mathbf{1}) +  \|v + \mu\mathbf{1}\|_q\sum_{i=1}^N  \| u_i  + \tilde{\lambda} \|_p  \right\},\\
            w_{i,*}(\mu)=-\| v + \mu \mathbf{1}\|_q \frac{  \sign( u_i  + \tilde{\lambda}_* ) \odot | u_i  + \tilde{\lambda}_* |^{p-1} }{  \| u_i  + \tilde{\lambda}_*  \|_p^{p-1} }, 
    \end{cases}.
    \end{align}

\end{theorem}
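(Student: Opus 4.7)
The plan is to treat the primal problem \eqref{eq:target-optimization} by convex Lagrangian duality: attach a scalar multiplier $\lambda\ge 0$ to the inequality constraint $\sum_i\|u-u_i\|_p \le \beta$ and a scalar $\mu\in\RR$ to the equality $\mathbf{1}^\top u=0$. The Lagrangian becomes
\begin{equation*}
L(u,\lambda,\mu) \;=\; (v+\mu\mathbf{1})^\top u + \lambda\!\sum_{i=1}^N \|u-u_i\|_p - \lambda\beta,
\end{equation*}
and the dual function $\varphi(\lambda,\mu)=\inf_u L(u,\lambda,\mu)$ is exactly the object analyzed in \Cref{lemma:solve-dual}. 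Under \Cref{ass:111}, for admissible $\beta$ there exist $u$ with $\sum_i\|u-u_i\|_p<\beta$ and $\mathbf{1}^\top u=0$ (e.g., a convex combination of the foci after centering), so Slater's condition holds and strong duality applies; moreover, the primal optimum is attained because the feasible set is compact and the objective continuous.

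\textbf{Dual computation.} First I would write $\sup_{\lambda\ge 0,\mu}\varphi(\lambda,\mu)$ and apply \Cref{lemma:solve-dual} directly: for each fixed $\mu$, the lemma yields $\lambda^*(\mu)=\|v+\mu\mathbf{1}\|_q$ and identifies the inner infimum as $\sum_i w_{i,*}(\mu)^\top u_i$, with $w_{i,*}$ given in terms of the auxiliary minimizer $\tilde\lambda_*(\mu)$. Substituting back reduces the dual to the scalar maximization in the first line of \eqref{eq:condition}. Existence of a maximizer $\mu^*$ follows from continuity and the coercivity of the dual (the inequality constraint being strictly feasible). This pins down the dual variables $(\lambda^*,\mu^*)$ stated in the theorem.

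\textbf{Primal recovery and the main obstacle.} By strong duality there exists a primal optimum $u^*$ forming a saddle point with $(\lambda^*,\mu^*)$. KKT stationarity in $u$ reads
\begin{equation*}
0 \in \partial_u\!\Bigl[(v+\mu^*\mathbf{1})^\top u + \lambda^*\!\sum_i\|u-u_i\|_p\Bigr]\Big|_{u=u^*},
\end{equation*}
which is exactly the assertion $u^*\in\argmin_u\bigl[(v+\mu^*\mathbf{1})^\top u + \lambda^*\sum_i\|u-u_i\|_p\bigr]$. The delicate point — the one I expect to be the main obstacle — is that this inner minimizer is in general not unique: the objective is convex but not strictly convex when $p\in\{1,\infty\}$ or when $v+\mu^*\mathbf{1}$ aligns with a flat face of the norm ball, so the argmin is a set and one must argue that there is a selection in it that satisfies both primal constraints. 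To resolve this, I would combine complementary slackness ($\lambda^*>0$ forces $\sum_i\|u^*-u_i\|_p=\beta$) with primal feasibility $\mathbf{1}^\top u^*=0$, and show that the KKT system obtained by differentiating the dual with respect to $\mu$ and $\lambda$ enforces precisely these two equations on any selection from the argmin. The formulas in \eqref{eq:condition} for $\tilde\lambda_*$ and $w_{i,*}$ supply the sub-gradient certificates needed to close this argument. A short check that $\mu\mapsto\varphi(\lambda^*(\mu),\mu)$ is concave with an interior maximizer (so $\partial_\mu$ of the dual vanishes) yields the mean-zero condition, completing the characterization.
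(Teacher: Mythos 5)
Your proposal follows essentially the same route as the paper: the same Lagrangian with multipliers $\lambda\ge 0$ and $\mu$, strong duality via Slater's condition, reduction of the dual via \Cref{lemma:solve-dual} to the scalar maximization over $\mu$ with $\lambda^*=\|v+\mu^*\mathbf{1}\|_q$, and recovery of $u^*$ as a minimizer of $(v+\mu^*\mathbf{1})^\top u+\lambda^*\sum_i\|u-u_i\|_p$. Your extra care about non-uniqueness of the inner argmin (selecting the element satisfying complementary slackness and $\mathbf{1}^\top u^*=0$) is a sound refinement of a point the paper's proof passes over silently, but it does not change the argument's structure.
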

\begin{remark}[The procedure of solving $u^*$]
    To obtain $u^*$, it suffices to solve $\mu^*$ and $\lambda^*$ as $v$ and all $u_i$'s have been given. The first step is to solve
    \begin{align*}
        \begin{cases}
        \tilde{\lambda}_*(\mu)= \argmin_{\tilde{\lambda}} \left\{   \tilde{\lambda}^\top (v + \mu\mathbf{1}) +  \|v + \mu\mathbf{1}\|_q\sum_{i=1}^N  \| u_i  + \tilde{\lambda} \|_p  \right\}.\\
            w_{i,*}(\mu)=-\| v + \mu \mathbf{1}\|_q \frac{  \sign( u_i  + \tilde{\lambda}_* ) \odot | u_i  + \tilde{\lambda}_* |^{p-1} }{  \| u_i  + \tilde{\lambda}_*  \|_p^{p-1} }, 
        \end{cases}
    \end{align*}
    for $i=1,2,\dots,N$. Both variables depend on the variable $\mu$ and other values are known.  The next step is to solve 
    \begin{align*}
    \begin{cases}
        \mu^* =  \argmax_\mu \left\{  - \|v+\mu\mathbf{1}\|_q \beta
 + \sum_{i=1}^N 
w_{i,*}^\top u_i. \right\},\\
\lambda^* = \|v + \mu^* \mathbf{1}\|_q.
    \end{cases}
    \end{align*}
    
    Once $(\mu^*,\lambda^*)$ is solved, the primal variable $\mu^*$ is obtained immediately. 
\end{remark}
\begin{proof}
Our goal is to solve the following constrained optimization problem: 
\[
\begin{aligned}
\min_{u\in\R^d}\quad &v^\top u\\
\text{s.t.} \quad  &\sum_{i=1}^N\| u-u_i\|_p \le \beta,\\
              &\mathbf{1}^\top u = 0.
\end{aligned}
\] 
As it is a constrained optimization problem, the standard approach of solving this problem is using Lagrangian multipliers. we introduce the Lagrangian multipliers $\lambda\geq 0 $ for the inequality and $\mu \in \RR$ for the equality. The Lagrangian function is 
\[
L(u,\lambda,\mu)
 = 
v^\top u + \lambda\bigl(\sum_{i=1}^N\| u-u_i\|_p-\beta\bigr) + \mu (\mathbf{1}^\top u),
\]
with \(\lambda\ge0\), \(\mu\in\R\).  Because this optimization problem is a standard convex optimization problem with satisfying the Slater's condition, we have the strong duality
$$\underbrace{\inf_u \sup_{\lambda,\mu}
 L(u,\lambda,\mu) }_{\text{original opt. prob.}}= \sup_{\lambda,\mu} \inf_u L(u,\lambda,\mu) .$$
 Then we turn the original optimization problem into solving its dual-form problem. We let the dual function be $\varphi(\lambda, \mu):= \inf_u L(u,\lambda,\mu)$. Then 
 \begin{align*}
     \varphi(\lambda, \mu) = 
- \lambda \beta
 + 
\inf_{u}\Bigl[(v+\mu \mathbf{1})^\top u  + \lambda \sum_{i=1}^N\| u-u_i\|_p\Bigr].
 \end{align*} 
 The above formulation plays the crucial role in our proof. For the implicit solution, we will follow \Cref{lemma:solve-dual} to complete the remaining calculation. For the explicit solution, this dual form can be significantly simplified in some cases. 
 
 By \Cref{lemma:solve-dual}, the dual form can be simplified as  
 \begin{align*}
    \max_{\lambda,\mu} \varphi(\lambda, \mu) =   - \|v+\mu^* \mathbf{1}\|_q \beta
 + \sum_{i=1}^N 
w_{i,*}^\top u_i, .
 \end{align*} 
where 
$$w_{i,*}:=-\| v + \mu \mathbf{1}\| \frac{  \sign( u_i  + \tilde{\lambda}_* ) \odot | u_i  + \tilde{\lambda}_* |^{p-1} }{  \| u_i  + \tilde{\lambda}_*  \|_p^{p-1} }, \quad \tilde{\lambda}_*= \argmin_{\tilde{\lambda}} \left\{   \tilde{\lambda}^\top w +  C\sum_{i=1}^N  \| u_i  + \tilde{\lambda} \|_p  \right\}.$$ 
and
$$\mu^* =  \argmax_\mu \left\{  - \|v+\mu\mathbf{1}\|_q \beta
 + \sum_{i=1}^N 
w_{i,*}^\top u_i. \right\}, \quad \lambda^* = \|v + \mu^* \mathbf{1}\|_q.$$

The optimal primary variable is given as 
$$u^* =\argmin_u\bigl[(v+ \mu^* \mathbf{1})^\top u + \lambda^* \sum_{i=1}^N \| u - u_i  \|_p\bigr].$$
\end{proof}

\subsection{Explicit Solution} 
In this subsection, we recap and prove the full version of \Cref{thm:explicit}. 
\begin{theorem}\label{thm:explicit-appendix} 
   Let $d:= |\calS|$ be the cardinal of state space. Suppose that $\{u_i\}_{i=1}^N \subset \RR^d$ for each $(s,a)\in \calS \times \calA$, the uncertainty size $\beta\geq 0$, and $ v\in\RR^d$.  The optimization problem defined by \Cref{eq:target-optimization-problem} is explicitly solved in the following cases:  
    \begin{enumerate}[(a)]
        \item Let $N=1$.  then 
        $$ \mu^* = u_1 + \beta   \frac{ \sign(v + \mu^*\mathbf{1} )  \odot | v + \mu^*\mathbf{1}|^{q-1} }{  \| v + \mu^*\mathbf{1} \|_q^{q-1}  }.$$
        \item Let $p=1$ and $N=2$.  Define $\bar{u}= \frac{u_1 + u_2}{2}$, 
        $$\mu^* = - \frac{v_{\max} +v_{\min}}{2} \quad \text{and} \quad \lambda^* = \frac{1}{2} \| v+ \mathbf{1}\mu^* \|_\infty= \frac{v_{\max} -v_{\min}}{4}.$$
        Suppose that $\beta > \|u_2-u_1\|_1$. Then the explicit solution to the optimization problem \Cref{eq:target-optimization-problem} is given as 
        $$u^*= \bar{u}- \frac{\beta - \|u_1 - u_2\|_1}{2} \left[ \sign( v + \mu^* \mathbf{1})\odot \mathbb{I}_{\{ | v + \mu^* \mathbf{1} |= 2 \lambda^* \}}\right].$$ 
        \item Let $p=2$ and $N=2$.  Define $\bar{u}= \frac{u_1 + u_2}{2}$, 
        \begin{align}\label{eq:def1}
            &\Omega:=\Omega(u_1,u_2,\beta)= \left[ I - \frac{1}{\beta^2} (u_2 - u_1)(u_2 - u_1)^\top\right],  \\
&\underline{\lambda}^* = \sqrt{ (v+\mu^* \mathbf{1})^\top \Omega^{-1}  (v+\mu^* \mathbf{1}) }, \text{ and } {\mu}^* = -\frac{v^\top \Omega^{-1}\mathbf{1}}{\mathbf{1}^\top \Omega^{-1}\mathbf{1}}.
        \end{align}
Suppose that $\beta > \|u_2-u_1\|_2$. Then the explicit solution to the optimization problem \Cref{eq:target-optimization-problem} is given as 
$$u^*= \bar{u}- \frac{\sqrt{\beta^2 - \|u_2 - u_1\|_2^2}}{2}  \frac{1}{\underline{\lambda}^*}\Omega^{-1} (v + \mu^* \mathbf{1}).$$
    \end{enumerate} 
\end{theorem}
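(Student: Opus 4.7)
The overall plan is to invoke \Cref{thm:implicit-appendix}, which reduces the primal problem to determining the dual optimizers $(\mu^*, \lambda^*)$ and then solving the unconstrained inner minimization
\[
u^* = \argmin_{u}\Bigl[(v+\mu^*\mathbf{1})^\top u + \lambda^*\sum_{n=1}^N\|u-u_n\|_p\Bigr].
\]
In each of the three cases I will show that this combined inner--outer problem admits a closed form, so the implicit prescription can be evaluated in finitely many algebraic steps. For case (a) ($N=1$), substituting $w:=u-u_1$ renders the inner problem positively homogeneous, and the Hölder duality of \Cref{lem:holder} (or equivalently \Cref{lemma:inf-block}) locates the minimizer on the ray $-J_q(v+\mu^*\mathbf{1})$, scaled to length $\beta$ by the tight primal constraint. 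Imposing $\mathbf{1}^\top u^*=0$ then pins $\mu^*$ as the minimizer of $\|v+\mu\mathbf{1}\|_q$ (using $\mathbf{1}^\top u_1=0$, a natural regularity on the foci). Rewriting $J_q$ in the coordinate-wise $\sign(\cdot)\odot|\cdot|^{q-1}$ normalization yields the stated closed form.

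For case (b) ($p=1$, $N=2$), the inner objective
\[
\sum_j\Bigl[(v_j+\mu^*)u_j + \lambda^*\bigl(|u_j-u_{1,j}| + |u_j-u_{2,j}|\bigr)\Bigr]
\]
is separable in $j$, so I apply \Cref{lemma:proximal-3} coordinate-wise. When $|v_j+\mu^*|<2\lambda^*$ the minimizer lies strictly inside the segment $[u_{1,j},u_{2,j}]$; when $|v_j+\mu^*|=2\lambda^*$ the 1D sub-problem becomes degenerate along the segment, so its minimum can be shifted to either endpoint as selected by $\sign(v_j+\mu^*)$. The claimed values $\mu^*=-(\max_j v_j+\min_j v_j)/2$ and $\lambda^*=(\max_j v_j-\min_j v_j)/4$ are precisely those making the outer dual of \Cref{thm:implicit-appendix} stationary while maintaining $|v_j+\mu^*|\le 2\lambda^*$ for every $j$. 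The excess primal budget $\beta-\|u_1-u_2\|_1$ is then distributed across the active coordinates $\{j:|v_j+\mu^*|=2\lambda^*\}$ in the $\sign(v_j+\mu^*)$ direction, producing the indicator formula in the claim.

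For case (c) ($p=2$, $N=2$), I first apply \Cref{lemma:quadratic-form} to rewrite the elliptic constraint as the quadratic form $(u-\bar u)^\top A(u-\bar u)\le 1$ with an explicit rank-one-perturbation $A$. Introducing multipliers $(\underline\lambda,\mu)$, stationarity gives $u-\bar u = -\tfrac{1}{2\underline\lambda}A^{-1}(v+\mu\mathbf{1})$; saturating the quadratic constraint pins $\underline\lambda^*=\tfrac12\sqrt{(v+\mu^*\mathbf{1})^\top A^{-1}(v+\mu^*\mathbf{1})}$, while the hyperplane constraint $\mathbf{1}^\top u^*=0$ combined with $\mathbf{1}^\top\bar u=0$ produces the $\mu^*$ formula via \Cref{lemma:mu-supremum}. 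Applying Sherman--Morrison to $A$ reveals $A^{-1}$ as a scalar multiple of the stated $\Omega^{-1}$, and tracking the scalar through the constraint produces the $\sqrt{\beta^2-\|u_2-u_1\|_2^2}/2$ prefactor.

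The main technical obstacle is in case (b): I must verify that the coordinate-wise recipe from \Cref{lemma:proximal-3} is simultaneously consistent with the budget equality $\sum_n\|u^*-u_n\|_1=\beta$, the hyperplane constraint $\mathbf{1}^\top u^*=0$, and the dual stationarity of $(\mu^*,\lambda^*)$. In particular, the active set $\{j:|v_j+\mu^*|=2\lambda^*\}$ must be rich enough to absorb the excess $\beta-\|u_1-u_2\|_1$ while respecting the sign balance required by the hyperplane, which may force additional ties in the extrema of $v$ and hence a subtler selection rule when those extrema are degenerate. Case (c) presents a milder but nontrivial obstacle: I must confirm that the assumption $\beta>\|u_2-u_1\|_2$ makes $A$ (equivalently $\Omega$) strictly positive definite so $A^{-1}$ is well-defined, and that the hypothesis $\delta^2<\alpha$ of \Cref{lemma:mu-supremum} holds along the optimal trajectory, securing uniqueness of $\mu^*$.
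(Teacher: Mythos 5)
Your proposal follows essentially the same route as the paper's proof: case (a) by translating $u'=u-u_1$ and invoking the known $\ell_p$-ball solution, case (b) by coordinate-wise separability via \Cref{lemma:proximal-3} together with the dual stationarity of $(\mu^*,\lambda^*)$ and a KKT/active-set argument at the extremal coordinates of $v$, and case (c) via \Cref{lemma:quadratic-form}, constraint saturation, and \Cref{lemma:mu-supremum} with $\mathbf{1}^\top\bar u=0$, then rescaling $A^{-1}$ into $\Omega^{-1}$. The subtleties you flag (tie-handling in the active set for (b), positive definiteness of $A$ and the $\delta^2<\alpha$ condition for (c)) are genuine but are also left implicit in the paper, so your plan is consistent with its argument.
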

\begin{proof}
    We follow the standard routine used in proving \Cref{thm:implicit-appendix}. 
    \begin{enumerate}[(a)]
        \item  When $N=1$ the objective optimization problem is given by 
        \begin{align*}
            \min_{u\in\R^d}\quad &v^\top u\\
\text{s.t.} \quad  &  \|u-u_1\|_p  \leq 1,\\
              &\mathbf{1}^\top u = 0.
        \end{align*}
        We take the transformation $u' = u-u_1$. It still satisfies $\mathbf{1}^\top u' =0$. Then the problem become
        \begin{align*}
            \min_{u'\in\R^d}\quad &v^\top u'\\
\text{s.t.} \quad  &  \|u'\|_p  \leq 1,\\
              &\mathbf{1}^\top u' = 0.
        \end{align*}
        This transformation has turned this problem into the standard $\ell_p$-norm structure, which has been explicitly solved in \citep{kumar2023efficient,kumar2023policy}. The optimal $u'$ is given for arbitrary $p\geq 1 $ as
        $$u'_*= \beta   \frac{ \sign(v + \mu^*\mathbf{1} )  | v + \mu^*\mathbf{1}|^{q-1} }{  \| v + \mu^*\mathbf{1} \|_q^{q-1}  },$$
        where $\mu^* = \argmin_{\mu \in \RR} \| v + \mu \mathbf{1}\|_q$. 
        As the result,
        $$u^* = u'_* + u_1 = u_1 + \beta   \frac{ \sign(v + \mu^*\mathbf{1} )  | v + \mu^*\mathbf{1}|^{q-1} }{  \| v + \mu^*\mathbf{1} \|_q^{q-1}  }.$$
  
        \item When $p=1$, we define the order 
        $$u_{(1),j}\leq u_{(2),j}\leq \dots \leq u_{(N),j},$$

         We follow the same routine as \Cref{thm:implicit-appendix} and derive the dual function $\varphi(\lambda, \mu)$:
 \begin{align*}
     \varphi(\lambda, \mu) &= 
- \lambda \beta   
 + 
\inf_{u}\Bigl[(v+\mu \mathbf{1})^\top u  + \lambda  \sum_{i=1}^N  \| u - u_i\|_1\Bigr]   \\
&\overset{(i)}{=} - \lambda \beta   
 + 
\inf_{u}\Bigl[ \sum_{j=1}^d (v_j+\mu) u_j   + \lambda  \sum_{i=1}^N  \sum_{j=1}^d  |u_j - u_{ij}|\Bigr]   \\
&= - \lambda \beta   
 + 
 \sum_{j=1}^d  \inf_{u_j}\Bigl[(v_j+\mu) u_j   + \lambda  \sum_{i=1}^N     |u_j - u_{ij}|\Bigr]  .
 \end{align*}
 where $(i)$ decomposes the $\ell_1$-norm by coordinates. When $N=2$, by \Cref{lemma:proximal-3}, the dual function is solved as  
 \begin{align*}
     \varphi(\lambda, \mu) &=  - \lambda \beta   
 + 
 \sum_{j=1}^d  \inf_{u_j}\Bigl[(v_j+\mu) u_j   + \lambda  \sum_{i=1}^N     |u_j - u_{ij}|\Bigr]  \\ 
 &=  - \lambda \beta   
 + 
 \sum_{j=1}^d   \Bigl[  (v_j+ \mu) \frac{u_{1j} +u_{2j}}{2} + (\lambda - \frac{|v_j+ \mu|}{2}) |u_{1j} - u_{2j}|\Bigr] \\ 
 &=  - \lambda \beta   
 +  (v+\mathbf{1}\mu)^\top \bar{u} + \left[ 2\lambda \mathbf{1} - (v+\mathbf{1}\mu) \right]^\top |\frac{u_1 - u_2}{2} |. 
 \end{align*}
 As the smaller $\lambda$ is, the larger $ \varphi(\lambda, \mu)$ is. It achieves the supremum at $\lambda^* = \max_{j}\frac{v_j + \mu}{2} = \frac{1}{2} \| v+ \mathbf{1}\mu\|_\infty$.  Then we solve
 \begin{align*}
    &\sup_\lambda \varphi(\lambda, \mu) \\
    = &  - \frac{1}{2} \| v+ \mathbf{1}\mu\|_\infty\beta   
 +  (v+\mathbf{1}\mu)^\top \bar{u} + \| v + \mathbf{1} \mu \|_\infty \| \frac{u_1 - u_2}{2} \|_1 -  (v+\mathbf{1}\mu)^\top |\frac{u_1 - u_2}{2} | . 
 \end{align*}
 Then we have 
 $$\sup_{\mu}   \sup_\lambda \varphi(\mu , \lambda ) = - \frac{\beta - \| u_{2} - u_{1}\|_1}{2} \inf_\mu \left[ \| v + \mu \mathbf{1} \|_\infty + \frac{(v+\mu \mathbf{1})^\top |u_1 - u_2 | }{\beta - \| u_{2} - u_{1}\|_1}  \right]+ v^\top \bar{u}.$$
 As $\beta - \| u_{2} - u_{1}\|_1 > 0$, it solves
 $$\mu^* = - \frac{v_{\max} +v_{\min}}{2} \quad \text{and} \quad \lambda^* = \frac{1}{2} \| v+ \mathbf{1}\mu^* \|_\infty= \frac{v_{\max} -v_{\min}}{4}.$$
 
 Now we consider the KKT condition of the original Lagrangian function. We solve
 $$\partial_{u_j} L(u, \lambda, \mu) = \lambda \sign(u_j - u_{1j}) + \lambda \sign(u_j - u_{2j}) + v_j + \mu \ni 0.$$
 For inactive coordinate, the optimal value is attained for arbitrary $u_j \in (u_{(1)j}, u_{(2)j})$; in these cases, we simply take $u_j=\frac{u_{1j}+u_{1j}}{2}$. There are exactly two active coordinates matching the corner-case condition $|v_j + \mu^*| = 2\lambda^*$: $j^* = \argmax v_j$ and $j_*= \argmin v_j$. In these cases we take $u_j$ as $\frac{u_{1j}+u_{1j}}{2}$ subtracting a drift.  It finally solves 
 $$u^*= \bar{u}- \frac{\beta - \|u_1 - u_2\|_1}{2} \left[ \sign( v - \frac{v_{\max} + v_{\min}}{2} )\odot \mathbb{I}_{\{ | v - \frac{v_{\max} + v_{\min}}{2} |= \frac{v_{\max} - v_{\min}}{2} \}}\right].$$ 
 The magnitude coefficient is used to ensure that $u^*$ belongs to $\| u^*\|_1 \leq \beta$.
        \item By \Cref{lemma:quadratic-form}, there exists a semi-positive definite matrix $A$ such that
        $$\{u \mid \|u-u_1\|_2 + \|u-u_2\|_2 \leq \beta \} = \{ u \mid  (u - \bar{u})^\top A (u - \bar{u}) \leq1 \},$$
        where $\bar{u}:= \frac{u_1+u_2}{2}$. As the result, the objective optimization problem can be simplified as
\[
\begin{aligned}
\min_{u\in\R^d}\quad &v^\top u\\
\text{s.t.} \quad  & (u - \bar{u})^\top A (u - \bar{u})  \leq 1,\\
              &\mathbf{1}^\top u = 0.
\end{aligned}
\] 
We follow the same routine as \Cref{thm:implicit-appendix} and derive the dual function $\varphi(\lambda, \mu)$:
 \begin{align*}
     \varphi(\lambda, \mu) &= 
- \lambda  
 + 
\inf_{u}\Bigl[(v+\mu \mathbf{1})^\top u  + \lambda  (u - \bar{u})^\top A (u - \bar{u}) \Bigr] \\ 
&=-\lambda +  (v+\mu \mathbf{1})^\top \bar{u} - \frac{1}{4\lambda}  (v+\mu \mathbf{1})^\top A^{-1}  (v+\mu \mathbf{1}),
 \end{align*}
 where the infimum  is attained at $u^*= \bar{u}-\frac{1}{2\lambda}A^{-1} (v + \mu \mathbf{1})$. Then 
 \begin{align*}
     \sup_{\lambda,\mu } \varphi(\lambda, \mu) &=  \sup_{\lambda,\mu }  \left\{ -\lambda +  (v+\mu \mathbf{1})^\top \bar{u} - \frac{1}{4\lambda}  (v+\mu \mathbf{1})^\top A^{-1}  (v+\mu \mathbf{1}) \right\} \\ 
     &\overset{(i)}{=}\sup_{\mu }  \left\{ - \sqrt{ (v+\mu \mathbf{1})^\top A^{-1}  (v+\mu \mathbf{1})}+  (v+\mu \mathbf{1})^\top \bar{u}    \right\} 
 \end{align*} 
 where (i) applies the optimal choice $\lambda^* (\mu) = \frac{1}{2} \sqrt{ (v+\mu \mathbf{1})^\top A^{-1}  (v+\mu \mathbf{1}) }$ and $\mu^*$ is given by \Cref{lemma:mu-supremum} to solve this maximization problem. As the result,
 $$u^*= \bar{u}-\frac{1}{2\lambda^*}A^{-1} (v + \mu^* \mathbf{1})$$
 where $\lambda^*  = \frac{1}{2} \sqrt{ (v+\mu^* \mathbf{1})^\top A^{-1}  (v+\mu^* \mathbf{1}) }$ and  
   \begin{align*} 
      \mu^*
       & = -\frac{v^\top A^{-1}\mathbf{1}}{\mathbf{1}^\top A^{-1}\mathbf{1}}
 + 
\frac{\mathbf{1}^\top \bar p}
     {\mathbf{1}^\top A^{-1}\mathbf{1} \sqrt{\mathbf{1}^\top A^{-1}\mathbf{1} - (\mathbf{1}^\top \bar p)^2}}
 \sqrt{ (\mathbf{1}^\top A^{-1}\mathbf{1})(v^\top A^{-1}v) - (v^\top A^{-1}\mathbf{1})^2 }.
   \end{align*} 
   Here, the matrix $A$ is determined by $\{u_1, u_2\}_{i=1}^2$ and the vector $v$ in \Cref{lemma:quadratic-form}.  We recall that $\mathbf{1}^\top \bar{u} = 0$. Therefore, $\mu^*$ is further simplified as 
   $$\mu^* = -\frac{v^\top A^{-1}\mathbf{1}}{\mathbf{1}^\top A^{-1}\mathbf{1}}.$$
   It completes the proof in the part (c).
    \end{enumerate}
\end{proof}



\section{Experiment Details}\label{appendix-experiment}
In this section, we include the omitted details of \Cref{sec:experiments}. 
All source codes and hyper-parameter settings are available in the supplementary materials. 

\subsection{Hardware and System Environment}
We conducted our experiments on a laptop running Windows 11 Home. The device is equipped with  32GB of RAM, 1TB SSD, an AMD Ryzen 9 7940HS processor and a NVIDIA GeForce RTX 4070 Laptop GPU.  Our implementation was tested using Python version 3.10.10. Additional dependencies are listed in the supplementary \texttt{requirements.txt} file.  

The actual hardware requirement for running our implementation is significantly lower than the specification listed above. Most experiments can be reproduced on consumer-grade machines with 8--16GB RAM and any CUDA-compatible NVIDIA GPU.
 
\subsection{Task Descriptions}\label{appendix:task-descriptions} 

\paragraph{Multi-Asset Portfolio Rebalancing} 
This task captures the setting where an institutional investor reallocates capital across multiple assets over discrete time intervals to maintain a desired risk-return profile. It reflects strategic portfolio management, such as end-of-day rebalancing or tactical asset allocation. The task emphasizes robustness to market impact under varying capital scales, which is critical for large-volume institutional strategies. In our experiment, we select five representative ETFs, SPY, TLT, GLD, EFA, and VNQ, and use historical data from May 9, 2021 to May 9, 2022 for training. Evaluation is conducted on the out-of-sample period from June 9 to December 9, 2022.

\paragraph{Single-Asset Intra-Day Trading} 
This task models the decision-making process of an agent that repeatedly buys or sells a single asset over short time intervals, such as minutes or seconds. It reflects the setting of mid-frequency or algorithmic trading, where even small trades can shift market prices. The goal is to maximize the risk-adjusted cumulative return while accounting for execution slippage, making it a standard benchmark for evaluating RL-based trading strategies. We follow the exactly same training and evaluation period as the multi-asset portfolio rebalancing task. In our experiments, we use minute-level historical data for the assets including META, MSFT, and SPY. The observation includes stock price, trading volume, implied volatility, and current portfolio return. During evaluate, we reconstruct the LOB dynamics using  the tick-level trade data; the execution price is then simulated as illustrated in \Cref{tab:amzn_execution_example}. Same as the multi-asset portfolio rebalancing experiment, the training data covers the period from May 9th, 2021 to May 9th, 2022, and the evaluation period is from June 9 to December 9, 2022. All data are accessed via the Polygon.io Stock Market API.

\subsection{Reinforcement Learning Framework}\label{appendix:rl-environment}
We implemented a Gym-like RL trading environment \citep{brockman2016openai,towers2024gymnasium} using the historical data including the stock price, trading volume, the  implied volatility, and the current portfolio return. Instead of using a single timestep data, we set $\texttt{lookback\_period}=30$ to account for $30$ previous timestep; as the result, the observed state contains $\texttt{lookback\_period} \times 4 = 120$ dimension. Given the state described above, we assign the transition probability from the current state $s_t$ to the next state $s_{t+1}$ as the probability $1$, independent with the action. 
The action is implemented as a single float value that determines the position size relative to the maximum possible position size based on the available capital. The reward is  a Sharpe-like ratio that consists of two components: 
$$\texttt{reward} = \frac{\texttt{current\_return}}{\texttt{current\_volatility}+\varepsilon} - \delta \times \frac{|\texttt{current\_position}-\texttt{previous\_position}|}{\texttt{max\_shares}}.$$
where $\varepsilon > 0$ is a small constant to ensure numerical stability, and $\delta$ is a tunable coefficient controlling the strength of the transaction cost penalty. The first term encourages higher risk-adjusted return, while the second term discourages frequent or drastic position shifts, which aligns with practical trading considerations under market impact or slippage. We also include $0.1\%$ transaction cost throughout the experiment.

\paragraph{Uncertainty Restriction to Execution Prices} Instead of perturbing the whole transition probability, we restrict the perturbation on the execution price. The formulation of restriction is given as follows: we denote the state $s_t$ as two components $(p_t, f_t)$, and we hope perturb the transition on $p_t$ only. We re-write the transition probability using the conditional probability
\begin{align*}
    \PP(s_{t+1}\mid s_{t}, a_{t}) &= \PP(p_{t+1},f_{t+1}\mid s_{t}, a_{t}) \\ 
    &= \PP(p_{t+1} \mid s_{t}, a_{t})\PP( f_{t+1}\mid  p_{t+1},s_{t}, a_{t}).
\end{align*}
Then we set the uncertainty $u$ as the perturbation on the kernel $\PP(p_{t+1} \mid s_{t}, a_{t})$ instead of the whole transition kernel $\PP(s_{t+1}\mid s_{t}, a_{t})$.


\begin{table}[t]
\centering
\caption{Example of ask-side execution from an AMZN order book snapshot (21 June 2012).
This table shows how our simulator determines the execution price for a market buy order. Instead of using the last trade price, the simulator consumes liquidity by filling shares at each ask level in order, starting from the best price. It calculates the VWAP based on the prices and quantities filled, and uses this VWAP as the final executed price.}
\label{tab:amzn_execution_example}
\begin{tabular}{lccccc}
\toprule
\textbf{Level} & \textbf{Price (USD)} & \textbf{Depth (sh)} & \textbf{Exec.~100 sh} & \textbf{Exec.~1 000 sh} \\
\midrule
Level 1 & 223.95 & 100 & 100 & 100 \\
Level 2 & 223.99 & 100 & 0 & 100 \\
Level 3 & 224.00 & 220 & 0 & 220 \\
Level 4 & 224.25 & 100 & 0 & 100 \\
Level 5 & 224.40 & 547 & 0 & 480 \\
\midrule
\textbf{VWAP paid} & --- & --- & \textbf{223.95} & \textbf{224.21} \\
\bottomrule
\end{tabular}
\end{table}



\paragraph{Momentum Strategy}   
 The momentum strategy is a classical intra-day minute-level algorithmic trading method that exploits intraday time-series momentum to generate trading signals. We follow the classical implementation presented by \citep{zarattini2024beat}. The core idea of the momentum strategy is based on the empirical observation where assets that have performed well in the recent past are more likely to continue performing well in the near future, while poorly performing assets are likely to continue underperforming.

 
 \subsection{Parameter Details}
Detailed descriptions of all parameters are provided in the separate supplementary materials.

\subsection{Omitted Visualization}
In \Cref{fig:robustness-and-return-curves}, we have visualized the return curve of the META stock. In this subsection, we include the visualization of the other two assets.

\begin{figure}[h]
    \centering
\includegraphics[width=0.9\linewidth]{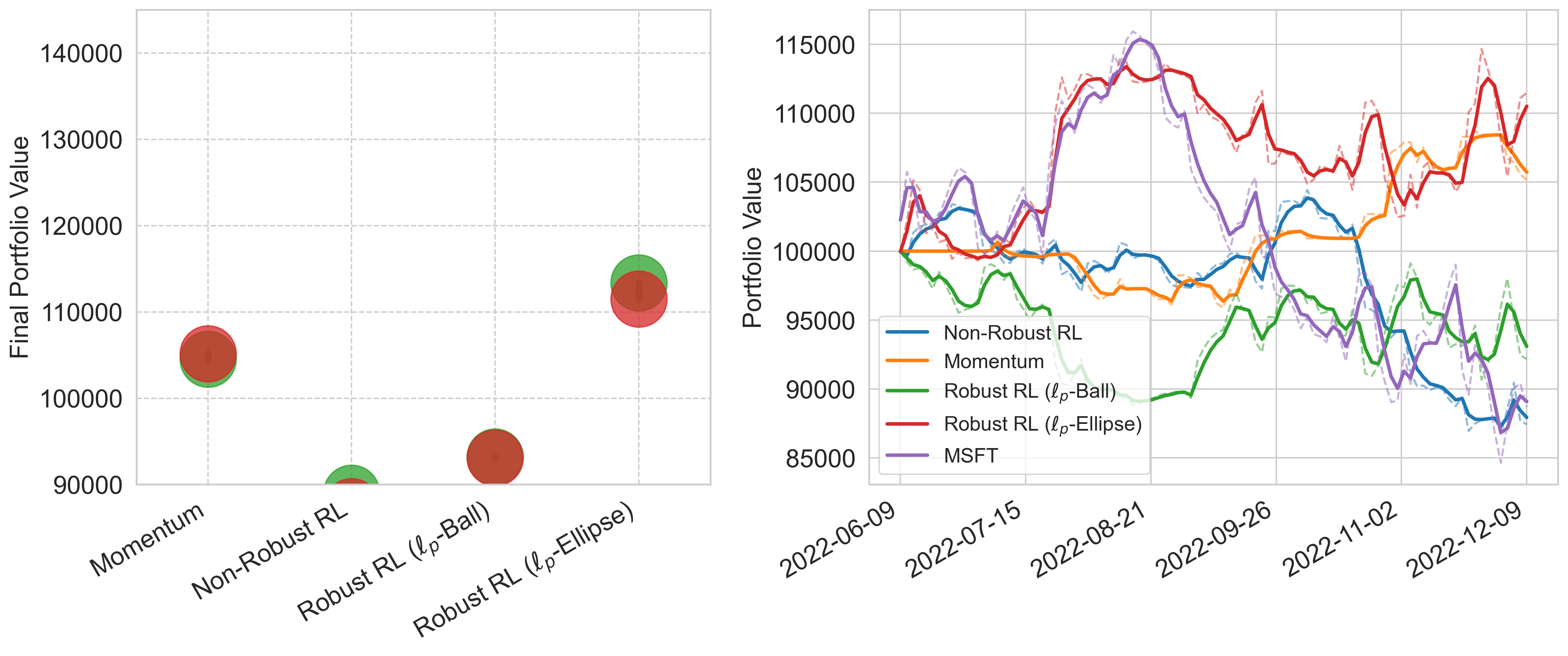}\\
\includegraphics[width=0.9\linewidth]{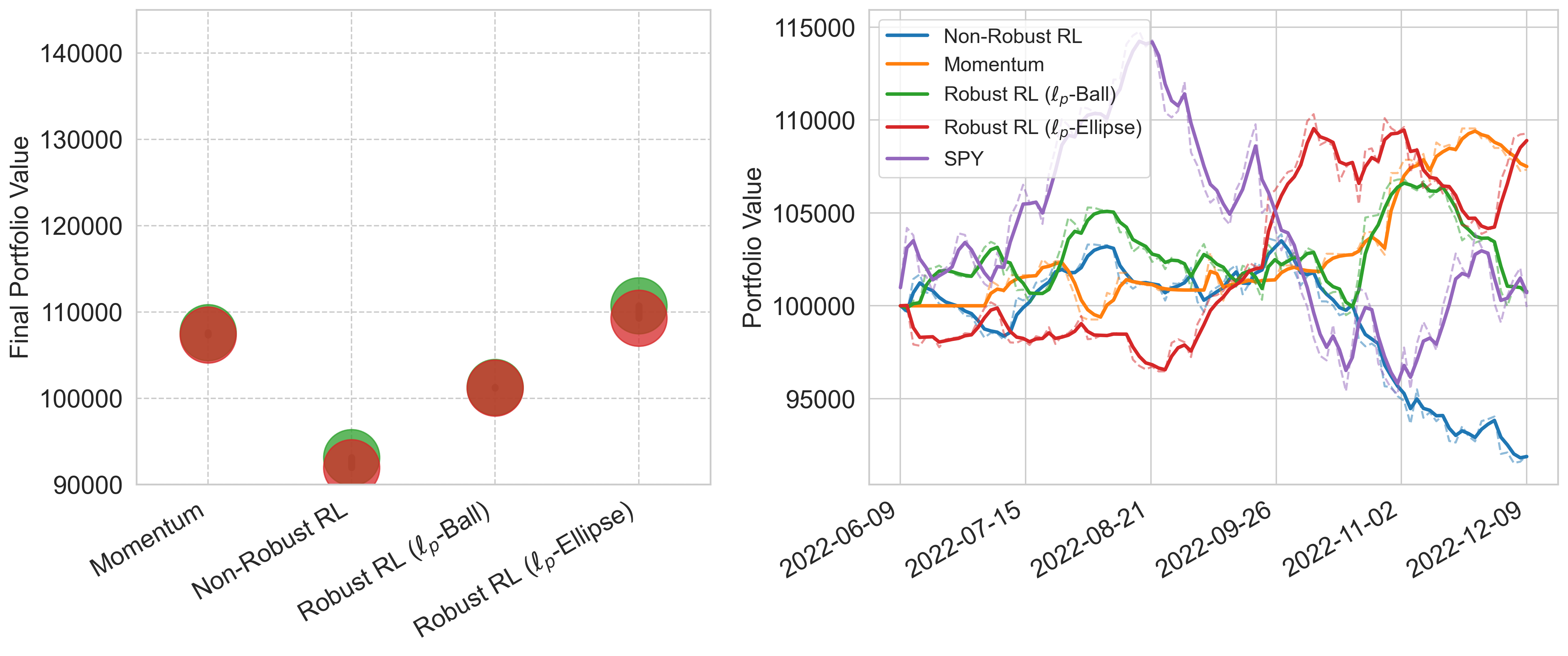}
    \caption{Performance comparison of trading strategies on the MSFT stock and SPY ETF. }
    \label{fig:other_visualization}
\end{figure}

\subsection{Other Implementation Details}\label{appendix:other-implementation-details} 
In this subsection, we discuss the practical implementation considerations and provide the details for reproducing our experiments. The source codes are also provided in the supplementary material. 

\paragraph{RL Algorithm and Value Function Approximations} We use the standard Robust Actor-Critic algorithm \cite{zhou2023natural} to train the RL agent with replacing its IPM uncertainty set or doubly-sampling uncertainty set with the $\ell_p$-ellipse uncertainty set and the $\ell_p$-norm uncertainty set. We adopt an Actor-Critic architecture with a shared feature extractor and separate heads for the actor and critic. The shared backbone consists of three fully connected layers with ReLU activations, mapping the (flattened) input state to a high-level representation. The actor head outputs the vector with the same dimension as the action through a two-layer MLP followed by a sigmoid/tanh activation, depending on the underlying task. The critic head predicts the state value using a similar two-layer MLP. All linear layers are orthogonally initialized to promote stable training. When updating the Actor network, we apply the classical PPO algorithm \citep{schulman2017proximal}.

\paragraph{Discretization of Execution Prices}  To apply the robust RL framework, we apply the discretization to the execution price. We introduce two additional hyper-parameter to control the discretization level: $2N+1$ represents the number of total discretization in the execution prices and $\delta$ represents the strength of each level. Given the execution price $p$, we have $2N+1$ potential execution prices $[p-N\delta,\dots, p-\delta, p, p+\delta, \dots, p+N\delta ]$ in total. This discretization approach allows us to directly apply the closed-form solution to solve the optimal $u^*$.  

\section{Limitations} \label{sec:limitations-final}
Despite the promising results, this work has several limitations. First, the theoretical analysis largely relies on the assumption of finite state and action spaces for tractability, which is primarily due to the limited development of deep learning theory. Second, the market impact simulation during the evaluation stage, while based on trade-level data, remains an approximation and may lack accuracy for extreme volumes. In fact, when the volume is sufficiently high, it often triggers momentum-based strategies deployed by other institutions in the market, resulting in higher market impact than reflected VWAP. Third, although our experiments demonstrate the robustness of the $\ell_p$-ellipse uncertainty set under increasing volumes, we do not test its robustness under increasing trading frequency. Designing experiments in this setting is more challenging, as the behavior of RL agents varies significantly across different time scales.

\section{Additional Supplementary Materials} 
In this additional supplementary material\footnote{This section was originally submitted as a separate supplementary material. We include it here for the reader’s convenience.}, we provide several components omitted in our previous technical appendix. It includes: (1) The theoretical analysis of the robust TD-learning. (2) The further breakdown of the parameter setting and the grid-search strategy to optimize the optimal parameter. (3) The other implementation details that are related to our robust reinforcement learning setting. 

\subsection{Convergence of Robust TD-Learning}

\begin{theorem*}[Convergence of Robust TD-Learning]If the uncertainty set is defined as the $\ell_p$-ellipse uncertainty set, then the worst-case transition probability $P_+(\cdot|s,a)$ can be represented as 
    $$P_+(\cdot|s,a) = P_0(\cdot|s,a) + u^*$$
    where $\beta$ is the radius of the uncertainty set and $u^*$ is solved in \Cref{thm:implicit} and \Cref{thm:explicit}. 
\end{theorem*}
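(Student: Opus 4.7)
The plan is to split the proof into two pieces, matching the two logical contents of the statement: (i) the structural representation of the worst‑case transition $P_+(\cdot\mid s,a)$, and (ii) the convergence of the TD iterates produced by \Cref{alg:generate-1} to the robust value function $V^\pi$. The first piece is essentially a definition‑chasing argument, while the second is a stochastic approximation argument built on the contraction of the robust Bellman operator.

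For the structural part, I would start from the definition of the $(s,a)$‑uncertainty set $\calU_{s,a}$ in \Cref{def:elliptic-uncertainty-set} and the robust transition model $\calP_\calU=\{\PP_0+u : u\in\calU\}$ from \Cref{eq:uncertainty-model}. For any fixed value function $v$, the inner minimization appearing in the robust Bellman operator $\mathscr{T}_\calU^\pi$ reads $\min_{u\in\calU_{s,a}} u^\top v$, which is precisely the optimization problem \Cref{eq:target-optimization-problem}. Hence the minimizer $u^\ast$ characterized implicitly by \Cref{thm:implicit-appendix} (and explicitly by \Cref{thm:explicit-appendix} in the tractable cases) yields $P_+(\cdot\mid s,a)=\PP_0(\cdot\mid s,a)+u^\ast$. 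Validity as a probability distribution follows because the zero‑sum constraint $u^{*\top}\mathbf{1}=0$ preserves total mass, while \Cref{ass:111} guarantees non‑negativity of all entries.

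For the convergence part, I would follow the standard recipe for robust TD learning, e.g.\ \cite{zhou2023natural}. First, I would verify that $\mathscr{T}_\calU^\pi$ is a $\gamma$‑contraction in the sup norm by the usual envelope trick: for any pair $v_1,v_2$, use a common worst‑case minimizer to bound $\|\mathscr{T}_\calU^\pi v_1-\mathscr{T}_\calU^\pi v_2\|_\infty\le\gamma\|v_1-v_2\|_\infty$. Banach's fixed‑point theorem then identifies the robust value function $V^\pi$ as the unique fixed point. Second, I would rewrite the update in \Cref{eq:td-learning-update} as $v_{t+1}=v_t+\eta_t\bigl(\mathscr{T}_\calU^\pi v_t - v_t + \xi_t\bigr)$, where $\xi_t:=\gamma(v_t(s')-\mathbb{E}_{\PP_0}[v_t(s')\mid s,a])$ is a martingale difference since $s'\sim\PP_0(\cdot\mid s,a)$ and $u^\ast$ is a deterministic function of $v_t$. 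Under Robbins--Monro step sizes $\sum_t\eta_t=\infty$, $\sum_t\eta_t^2<\infty$ and boundedness of $\xi_t$ (which follows from $r\in[0,1]$ and $\gamma\in(0,1)$ giving $\|v_t\|_\infty\le 1/(1-\gamma)$), the classical stochastic approximation theorem of Jaakkola--Jordan--Singh / Tsitsiklis yields $v_t\to V^\pi$ almost surely.

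The main obstacle I anticipate is ensuring that the auxiliary inner problem for $u^\ast$ is handled cleanly inside the stochastic approximation. In the explicit cases of \Cref{thm:explicit-appendix}, $u^\ast$ is a Lipschitz function of $v_t$, so the noise analysis is standard. In the implicit case of \Cref{thm:implicit-appendix}, $u^\ast$ is only the solution of an inner convex program and one must argue either (a) exact‑solve and continuity of the argmin map in $v_t$ (via Berge's maximum theorem together with strict convexity induced by the norm term), or (b) show that a two‑timescale scheme that solves the inner problem to diminishing error $\varepsilon_t\to 0$ preserves the martingale decomposition up to a vanishing bias. The Actor--Critic extension mentioned in the remark would then follow by coupling this critic convergence with the robust policy gradient formula of \cite{li2022first}.
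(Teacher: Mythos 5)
Your proposal is correct, but it proves the result by a different route than the paper. The paper's own proof is a short algebraic equivalence: it takes the robust TD update of \Cref{eq:td-learning-update} and shows, by expanding $\EE_{s'\sim \PP_0}[r(s,a)+\gamma V(s')]+\gamma\langle u^*,V\rangle$ and regrouping, that the expected robust update coincides with a \emph{standard} TD update under the shifted kernel $P_+(\cdot\mid s,a)=P_0(\cdot\mid s,a)+u^*$; the representation $P_+=P_0+u^*$ is thus read off directly, and convergence (with rate $1/\sqrt{K}$) is obtained by citing existing non-robust TD-learning analysis applied to $P_+$. You instead establish the representation by definition-chasing on \Cref{eq:target-optimization-problem} (with validity from the zero-sum constraint and \Cref{ass:111}), and then run the full robust argument: $\gamma$-contraction of $\mathscr{T}_\calU^\pi$, Banach fixed point, and a Robbins--Monro stochastic-approximation decomposition with martingale-difference noise. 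Your route is more self-contained and, in fact, handles more carefully the point the paper's reduction glosses over: $u^*$ is recomputed from the current iterate $v_t$, so the ``worst-case kernel'' is not fixed during learning, and invoking off-the-shelf TD results for a fixed $P_+$ requires exactly the kind of operator-level argument you sketch. Two small refinements to your plan: (i) your concern about continuity of the argmin map is unnecessary, since only the value $\min_{u\in\calU_{s,a}}u^\top v_t$ (not the particular minimizer) enters the expected update, and this min is a concave, $1$-Lipschitz function of $v_t$, so the implicit case poses no extra difficulty for the noise decomposition; (ii) the bound $\|v_t\|_\infty\le 1/(1-\gamma)$ does not hold automatically for the iterates themselves --- boundedness must be supplied by the stochastic-approximation theorem's own stability argument (or a projection step), as in Tsitsiklis/Jaakkola--Jordan--Singh, rather than asserted from $r\in[0,1]$ alone.
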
  
\begin{remark}
  For non-robust policy evaluation, variance-reduction techniques are often employed to accelerate training \citep{ma2020variance,ma2021greedy,zhou2024stochastic}. For robust policy evaluation, the most recent progress can be found in \citep{wang2024sample}. For simplicity, we only present a naive robust TD-learning approach. 
\end{remark}
    \begin{proof}We apply the following TD-learning update rule: 
\begin{align}\label{eq:td-update-rule}
    V(s)  \leftarrow   V(s) + \eta \left( \underbrace{r(s,a) + \gamma   V(s')  -V(s)}_{\text{stand. TD err. under }P_0} +   \gamma   V^\top u^* \right).
\end{align}
It is easy to observe that this update rule is equivalent to the non-robust TD-learning over the worst-case transition probability: 
    \begin{align*}
        &\EE_{s'\sim P_0(s'|s,a)}[r(s,a) + \gamma   V_(s') ] + \gamma \langle u^*, V \rangle \\
        = & r(s,a) + \gamma \sum_{s',a} P_0(s'|s,a)\pi(a|s)   V(s')  + \gamma \sum_{s'}  u^*(s') V(s') \\ 
        = & r(s,a) + \gamma \sum_{s',a} P_0(s'|s,a)\pi(a|s)   V(s')  + \gamma \sum_{s',a} \pi(a|s) u^*(s')  V(s') \\ 
        = & r(s,a) + \gamma \sum_{s',a} \pi(a|s) [P_0(s'|s,a)  + u^*(s') ] V(s') \\ 
       \overset{(i)}{=} & r(s,a) + \gamma \sum_{s',a} \pi(a|s) P_+(s'|s,a) V(s') \\ 
        = & r(s,a) + \gamma \EE_{s'\sim P_+(s'|s,a)} V(s'),
    \end{align*} 
where (i) applies the worst-case transition probability we have derived. Therefore, by applying existing TD-learning convergence analysis, we obtain that the convergence rate is also $\frac{1}{\sqrt{K}}$.   
    \end{proof}

\subsection{Parameter Setting and Grid-Search} 
We apply grid-search to select hyperparameters to balance learning performance, computational efficiency, and robustness to uncertainty. In this section, we detail the architectural and training hyperparameters used across our three model variants: standard RL, elliptic uncertainty robust RL, and ball-shaped uncertainty robust RL.

\subsubsection{Model Architecture}

All models share the same actor-critic network architecture, which consists of:

\begin{itemize}
    \item \textbf{Feature Extractor}: Three fully-connected layers with ReLU activation functions, each containing 256 neurons.
    \item \textbf{Actor Network}: Two fully-connected layers (256 and 128 neurons) with the final layer using a $\tanh$ activation function to bound actions within $[-1, 1]$.
    \item \textbf{Critic Network}: Two fully-connected layers (256 and 128 neurons) with a linear output layer.
    \item \textbf{Weight Initialization}: Orthogonal initialization with a gain of $\sqrt{2}$ for linear layers to improve training stability.
\end{itemize}
This architeture is selected by default and we did not further tune the model architecture.

\subsubsection{Training Parameters}

We employ the Proximal Policy Optimization (PPO) algorithm with the following hyperparameters:

\begin{table}[ht]
\centering
\caption{PPO Training Hyperparameters.  The PPO clip parameter $\epsilon$ regulates the range of policy updates to ensure stability. Policy update epochs indicate how many times each batch is reused for learning. Batch size is the number of trajectories used per update. }
\begin{tabular}{lccc}
\toprule
Parameter & Standard RL & Robust RL (Elliptic) & Robust RL (Ball) \\
\midrule
Learning Rate & $3 \times 10^{-4}$ & $3 \times 10^{-4}$ & $3 \times 10^{-4}$ \\
Discount Factor ($\gamma$) & 0.99 & 0.99 & 0.99 \\
PPO Clip Parameter ($\epsilon$) & 0.2 & 0.2 & 0.2 \\
Policy Update Epochs & 10 & 10 & 10 \\
Batch Size & 64 & 64 & 64 \\ 
\bottomrule
\end{tabular}
\end{table}
We apply the grid-search on the learning rate and the PPO clip parameter; however, the grid-search is not applied to improve the performance. Instead, we apply the grid-search to identify a default parameter group to ensure the algorithm convergence.  In the remaining experiments, we will use the same parameter in the robust RL method for fair comparison.  

\subsubsection{Robustness Parameters}

For our robust RL implementations, we include another group of hyper-parameters. 

\begin{table}[ht]
\centering
\caption{Robustness Parameters. The robust type is the string used in our codes to determine the different types of uncertainty sets. The parameter $\beta$ is used to determine the size of uncertainty set.   }
\begin{tabular}{lcc}
\hline
Parameter & Robust RL (Elliptic) & Robust RL (Ball) \\
\hline
Robust Type & P1N2 & P1 \\
Uncertainty Set Parameter ($\beta$) & $1 \times 10^{-4}$ & $1 \times 10^{-4}$ \\ 
Uncertainty Dimension & 3 & 3 \\
Epsilon & $1 \times 10^{-3}$ & $1 \times 10^{-3}$ \\
\hline
\end{tabular}
\end{table}
We grid-search he parameter $\beta$ from $[0.1, 0.01, 0.001, 0.0001, 0.00001]$ and use the best result on the training period with adopting the early stopping. Then the best model is evaluated in the testing period to obtain the final performance.

\subsection{Other Implementation Details}
The learning rate scheduler employs a ReduceLROnPlateau strategy with a factor of 0.5 and patience of 5 episodes. This adaptive approach reduces the learning rate when performance plateaus. Additionally, gradient clipping with a threshold of 0.5 is applied to prevent exploding gradients and ensure stable training.

The choice of $\beta = 1 \times 10^{-4}$ for both robust methods represents a balance between model performance and robustness. Too large a value would overly emphasize worst-case scenarios, while too small a value would not provide sufficient robustness against uncertainty. Through empirical testing, this value demonstrated the best trade-off between trading performance and resilience to market fluctuations.

Advantage normalization is applied prior to the policy update to stabilize training and improve convergence. The PPO clip parameter $\epsilon = 0.2$ prevents excessively large policy updates, maintaining proximity to the previous policy while allowing sufficient exploration.

\paragraph{Discretization of Execution Prices}  To apply the robust RL framework, we apply the discretization to the execution price. We introduce two additional hyper-parameter to control the discretization level: $2N+1$ represents the number of total discretization in the execution prices and $\delta$ represents the strength of each level. Given the execution price $p$, we have $2N+1$ potential execution prices $[p-N\epsilon,\dots, p-\epsilon, p, p+\epsilon, \dots, p+N\epsilon ]$ in total. This discretization approach allows us to directly apply the closed-form solution to solve the optimal $u^*$. The uncertainty dimension and the epsilon in the robustness parameters are used to determine the discretization level of execution prices. For example, when the discretization level is given by $3$, we have three potential execution prices $[p+\epsilon, p, p-\epsilon]$, which also corresponds to the uncertainty dimension. In the nominal transition kernel, we assume the distribution over this discretized set as $[0.25, 0.5, 0.25]$, which provides sufficient flexibility for us to choose the uncertainty $u$.

\paragraph{Uncertainty Restriction to Execution Prices} Instead of perturbing the whole transition probability, we restrict the perturbation on the execution price. The formulation of restriction is given as follows: we denote the state $s_t$ as two components $(p_t, f_t)$, and we hope perturb the transition on $p_t$ only. We re-write the transition probability using the conditional probability
\begin{align*}
    \PP(s_{t+1}\mid s_{t}, a_{t}) &= \PP(p_{t+1},f_{t+1}\mid s_{t}, a_{t}) \\ 
    &= \PP(p_{t+1} \mid s_{t}, a_{t})\PP( f_{t+1}\mid  p_{t+1},s_{t}, a_{t}).
\end{align*}
Then we set the uncertainty $u$ as the perturbation on the kernel $\PP(p_{t+1} \mid s_{t}, a_{t})$ instead of the whole transition kernel $\PP(s_{t+1}\mid s_{t}, a_{t})$. Given that the transition of the execution price $p_{t+1}$ is discretized, the transition probability $\PP(p_{t+1} \mid s_{t}, a_{t})$ is a discrete distribution. For example, when taking the uncertainty dimension as $3$, we in fact obtain: $\PP(p_{t+1}=p+\epsilon \mid s_{t}, a_{t}) = 0.25+u^1$, $\PP(p_{t+1}=p \mid s_{t}, a_{t}) = 0.25+u^2$, and $\PP(p_{t+1}=p-\epsilon \mid s_{t}, a_{t}) = 0.25+u^3$. By default, in the P1N2-type robust RL, we choose the shift parameter $u_1=[u^1, u^2, u^3] = [0.1-1/3, -1/3, -1/3]$ if the action is buying a stock and $u_1=[u^1, u^2, u^3] = [-1/3, -1/3, 0.1-1/3]$ if the action is selling a stock. 

\paragraph{Solving the Worst-Case Uncertainty} After restricting the domain to the discrete execution price, we turn the original worst-case Bellman equation into the following form:
\begin{align*}
    V^\pi(s) &= r(s,a) + \gamma \min_u \sum_{s'} \sum_{a} V^\pi(s')\pi(a|s)\PP_u(s' | s , a)  \\ 
    & = r(s,a) + \gamma \min_u \sum_{s'} \sum_{a} V^\pi(s')\pi(a|s)  \PP_u(p' \mid s, a)\PP( f'\mid  p',s, a) \\ 
    & = r(s,a) + \gamma \min_u \sum_{s'} \sum_{a} V^\pi(s')\pi(a|s)  \left[ \PP_0(p' \mid s, a)+u_{s,a}\right] \PP( f'\mid  p',s, a)  \\ 
    &= r(s,a) + \gamma   \sum_{s'} \sum_{a} V^\pi(s')\pi(a|s)\PP_0(s' | s , a)    + \gamma \min_u \sum_{s'} \sum_{a} u_{s,a} 
 V^\pi(s')\pi(a|s)     \PP( f'\mid  p',s, a) .
\end{align*}
Moreover, we use $ \PP( f'\mid  p',s, a) $ as a deterministic transition. It is common when training in the historical data, as (1) the observed feature $f'$ comes from the existing dataset and the action $a$ does not change its value (in our implementation, the action $a$ will only affect the execution price), and (2) many features are directly calculated based on the current state, action, and the execution price such as the remaining cash. As the result, we obtain 
\begin{align*}
     V^\pi(s) &= r(s,a) + \gamma   \sum_{s'} \sum_{a} V^\pi(s')\pi(a|s)\PP_0(s' | s , a)    + \gamma \min_u \sum_{p',s'=(p',f')} u(p') V^\pi(s') \\
      &= r(s,a) + \gamma   \sum_{s'} \sum_{a} V^\pi(s')\pi(a|s)\PP_0(s' | s , a)    + \gamma \min_u  u^\top V^\pi .
\end{align*}
Here we still write $u^\top V^\pi$ for convenience, while the value function $V^\pi$ represents the $2N+1$ dimensional vector with value taken on $s'=(p',f')$, where $p'$ takes $2N+1$ values and $f'$ is deterministically determined by $(p',s,a)$. After making this modification, we can solve $\min_u  u^\top V^\pi $ using \Cref{thm:implicit} and \Cref{thm:explicit}, or existing $\ell_p$-norm formula \citep{kumar2023efficient,kumar2023policy}.


\end{document}